\theoremstyle{plain}
\newtheorem{theorem}{Theorem}
\newtheorem{lemma}{Lemma}
\theoremstyle{definition}
\newtheorem{assumption}{Assumption}
\theoremstyle{remark}
\newtheorem{remark}[theorem]{Remark}
\newcommand{\bw}{\bar{w}}
\newcommand{\blue}[1]{{#1}}
\title{Effectiveness of Distributed Gradient Descent with Local Steps for Overparameterized Models}
\author{\name Heng Zhu \email hez007@ucsd.edu \\
      \addr Halicioğlu Data Science Institute \\
      University of California, San Diego
      \AND
      \name Harsh Vardhan \email hharshvardhan@ucsd.edu \\
      \addr Department of Computer Science and Engineering \\
      University of California, San Diego
      \AND
      \name Arya Mazumdar \email arya@ucsd.edu \\
      \addr Halicioğlu Data Science Institute \\
      University of California, San Diego}
\begin{document}

\maketitle

\begin{abstract}
In distributed training of machine learning models, gradient descent with {\em local iterative steps}, commonly known as Local (Stochastic) Gradient Descent (Local-(S)GD) or Federated averaging (FedAvg), is a very popular method to mitigate communication burden. In this method, gradient
steps based on local datasets are taken independently in distributed compute nodes
to update the local models, which are then aggregated intermittently. In the interpolation regime, Local-GD can converge to zero training loss. However, with many potential solutions corresponding to zero training loss, it is not known which solution Local-GD converges to. In this work we answer this question by analyzing implicit bias of Local-GD for classification tasks with {\em linearly separable data}. For the interpolation regime, our analysis shows that the aggregated global model obtained from Local-GD, with {\em arbitrary number} of local steps, converges exactly to the model that would be obtained if all data were in one place (centralized model) ``in direction''. Our result gives the exact rate of convergence to the centralized model with respect to the number of local steps. We also obtain the same implicit bias with a learning rate independent of number of local steps with a  modified version of the Local-GD algorithm. Our analysis provides a new view to understand why Local-GD can still perform well with a very large number of local steps even for heterogeneous data. Lastly, we also discuss the extension of our results to Local-SGD and non-separable data.
\end{abstract}
%In the case of highly heterogeneous data, it has been observed empirically that local models can diverge significantly from each other (also known as ``client drift''). However, f

\section{Introduction}

In this era of large machine learning models, distributed training is an essential part of machine learning pipelines. It can happen in a data center with thousands of connected compute nodes \citep{sergeev2018horovod, huang2019gpipe}, or across several data centers and millions of mobile devices in federated learning \citep{konevcny2016federated, kairouz2019advances}. In such a network, the communication cost is usually the bottleneck in the whole system. To alleviate the communication burden, and also to preserve privacy to some extent, one common strategy is to perform multiple local updates before sending the information to other nodes, which is called Local Gradient Descent (Local-GD) \citep{mcmahan2017communication, stich2019local, lin2019don}. In a network with $M$ compute nodes, the goal is to train a global model to fit the distributed datasets:
\begin{align}
    \min_{w\in \mathbb{R}^d} f(w) \qquad \text{ with 
 }  f(w) \equiv \frac{1}{M} \sum_{i=1}^M f_i(w),
\end{align}
where $w\in \mathbb{R}^d$ is the single model to be trained and $f_i(w)$ is the local loss function for $i^{th}$ compute node. The local loss $f_i(w)$ is the average of the loss function evaluated at model $w$ for the  high-dimensional samples and their corresponding labels, $\{x_{s}, y_{s} \}_{s \in S_i}$, where $S_i$ is the local dataset, and $N_i = |S_i|$ is the number of local samples. The samples of the local dataset are obtained iid from the local distribution $D_i$.
 
In each round of Local-GD, a central node sends its current model, referred to as the {\em \textbf{global model}}, to all compute nodes. Each compute node runs $L$ local gradient descent steps on the global model using its loss $f_i$ on this model to obtain a local model. Each compute node sends its local model back to the central node, where these local models are aggregated, by averaging, to obtain the global model for the next round. The detailed algorithm of Local-GD is described in Algorithm \ref{alg:LocalGD}. 

In modern machine learning, most deep neural networks, where Local-GD has impressive performance, operate in the {\em overparameterized regime}, where the dimension $d$ of the model  is more than the total number of samples $MN$. %If the whole dataset is linearly separable, then 
In this case, there are multiple solutions corresponding to zero training loss. The main question here is:
\centerline{
\textit{Q: Which solution would the aggregated model trained by Local-GD converge to?}}

\textbf{Contributions.} In this work, we answer this question by analyzing {\em implicit bias} of Local-GD on  classification tasks for linearly separable data. From the implicit bias of Local-GD, we can characterize the dynamics of the global model across rounds. We compare the global model with the {\em \textbf{centralized model}} obtained from running gradient descent (GD) on a dataset consisting of all distributed datasets as if all these datasets were located on the central node. The centralized model is obtained from existing results for the implicit bias of linearly separable data ~\citep{soudry2018implicit}. Note that these results cannot be directly applied to Local-GD. For globally linearly separable dataset, we  show that the global model converges to the centralized model with any arbitrary number of local steps on heterogeneous data. As a consequence of our result on the implicit bias of Local-GD, the global model of Local-GD converges to the centralized model in direction at a rate of $O(\frac{1}{\log Lk})$, and the training loss of Local-GD converges at the rate of $O(\frac{1}{Lk})$, where $k$ is number of rounds (see Theorems~\ref{theorem-implicit-bias}) for a constant learning rate $\eta =\mathcal{O}(\frac{1}{L})$. This learning rate is common in  existing analyses of distributed learning~\citep{karimireddy2020scaffold,pmlr-v119-koloskova20a,crawshaw2025local}. The meaning of this work lies in: 1). providing a theoretical explanation to the phenomenon that Local-GD can work well with a very large number of local steps in practice; 2). showing the local steps can benefit the convergence rate for smooth, convex functions (such as, logistic loss); this could not be derived from previous analysis in vanilla Local-GD. When Local-SGD is considered to be an algorithm that samples a mini-batch of local dataset without replacement at each local step (see Assumption~\ref{assumption_sgd}), we can obtain the same implicit bias result as Local-GD (see Theorem~\ref{theorem-implicit-bias-sgd}) since each local batch is still a subset of global dataset.

For a learning rate independent of $L$, we consider a special case where each local problem with a weakly regularized term is exactly solved. This simulates the behavior of Local-GD with a very large number of local steps. With a modified Local-GD algorithm (see Section~\ref{sec: Modified} we can guarantee that the global model can converge to the centralized model. This result provides the implicit bias of massive local updates without the restrictive learning rate of $O(1/L)$.

\textbf{Comparisons.} Increasing local steps $L$ does not improve worst-case amount of communication for smooth, convex optimization~\citep[Theorem~5]{pmlr-v119-woodworth20a},\citep[Theorem~6]{pmlr-v119-koloskova20a}. For the specific problem of distributed logistic regression, ~\citet[Corollary~3]{crawshaw2025local} shows that a two-stage Local-GD algorithm can improve this worst-case bound. However, their first stage still requires $\eta = \mathcal{O}(\frac{1}{L})$, and they can only show that the loss converges, but not the solution the model converges to. In contrast, our Theorem~\ref{theorem-implicit-bias} exactly characterizes the global model for  Local-GD for any $L$, and recovers their result as a direct corollary. Another line of work~\citep{gu2023why,gu2024a}
approximates Local-Stochastic Gradient Descent (Local-SGD) by an SDE to obtain an appropriate scaling between $L$ and $\eta$. Note that our analysis is exact for both Local-GD and Local-SGD, for finite $\eta$. We also extend our results to Local-SGD. Further, ~\citet{gu2023why,gu2024a} do not characterize the exact implicit bias, which we do for linearly separable data.  For overparameterized non-linear models, several works~\citep{pmlr-v151-deng22a,song2023fedavg,maralappanavar2025linear} analyze convergence in loss value of Local-GD, but do not provide any guarantees on it's implicit bias. Additionally, several works compare the performance of Local-GD and GD on the whole dataset~\citep{pmlr-v247-patel24a,pmlr-v119-woodworth20a} with differences in certain regimes. For overparametrized linear models, we establish that there is no difference between the final model learned by either of these methods.

\textbf{Practical Implications.} In the existing convergence analysis of Local-GD, the number of local steps $L$ should not be very large for heterogeneous data \citep{stich2019local, li2019convergence}. In practical implementation of distributed training on large models, the performance of Local-GD is surprisingly good even with heterogeneous data distribution \citep{mcmahan2017communication, charles2021large}. Also, the number of local steps can be very large in Local-GD type algorithms and real-world systems, for example, up to 500 local steps in distributed training of large language models (LLM) \citep{douillard2023diloco, jaghouar2024opendiloco}. Since our results show the Local-GD can converge to centralized model with \textit{arbitrary} number of local steps, it helps explain why Local-GD can still work well with a large number of local steps in practice. In this work we consider linear models as an appropriate starting point to investigate the implicit bias of Local-GD. A popular example of linear models used in practical machine learning pipelines is fine-tuning last layer on pretrained large models or adding linear layers in transfer learning \citep{donahue2014decaf, kornblith2019better} and deployment of LLM \citep{devlin2018bert, jiang2020smart}. Thus we also add an experiment of fine-tuning last layer of neural network to show broader impact of our analysis.

\subsection{Related Work}
\label{appen: Related Work}

\textbf{Convergence of Local-GD.} When the data distribution is homogeneous, several works analyze convergence of Local (Stochastic) GD \citep{stich2019local,yu2019parallel,khaled2020tighter}. With an ``appropriately small'' number of local steps, the dominating convergence rate is not affected. Various assumptions have been made to handle data heterogeneity and develop convergence analysis \citep{li2019convergence, karimireddy2020scaffold,khaled2020tighter,reddi2021adaptive,wang2020tackling,crawshaw2023federated}. For strongly convex and smooth loss functions, the number of local steps should not be larger than $O(\sqrt{T})$ for i.i.d data \citep{stich2019local} and non-i.i.d. data \citep{li2019convergence}. However, in practice Local-GD (FedAvg) works well in many applications \citep{mcmahan2017communication, charles2021large}, even in training large language models \citep{douillard2023diloco,jaghouar2024opendiloco}. In \citet{wang2022unreasonable}, the authors argue that existing theoretical assumption does not align with practice and proposed a client consensus hypothesis to explain the effectiveness of FedAvg in heterogeneous data. But they do not consider the impact of overparameterization on distributed training. There are some works incorporating the property of zero training loss of overparameterized neural networks into the conventional convergence analysis of FedAvg \citep{huang2021fl,deng2022local,song2023fedavg,qin2022faster}. However, they do not guarantee which model FedAvg can converge to, which is especially important for overparameterized set ups since there are multiple solutions with zero training loss. Our work is different from these works as: 1. We analyze which point the Local-GD can converge to, which is a more elementary problem before obtaining the convergence rate; 2. We use implicit bias as a technical tool to analyze the overparameterized FL. 

\textbf{Implicit Bias.} \citet{soudry2018implicit} is the first work to show the gradient descent converges to a max-margin direction on linearly separable data with a linear model and exponentially-tailed loss function. \citet{ji2019implicit} has provided an alternative analysis and extended this to non-separable data. The theory of implicit bias has been further developed, for example, for wide two-layer neural networks \citep{chizat2020implicit}, deep linear models \citep{jigradient}, linear convolutional networks \citep{gunasekar2018implicit}, two-layer ReLU networks \citep{kou2024implicit} etc. Beyond gradient descent, more algorithms have been considered, including gradient descent with momentum \citep{gunasekar2018characterizing}, SGD \citep{nacson2019stochastic}, Adam \citep{cattaneo2023implicit}, AdamW \citep{xie2024implicit}. Recently, implicit bias has also been used to characterize the dynamics of continual learning, on linear regression \citep{evron2022catastrophic, goldfarb2023analysis, lin2023theory}, and linear classification \citep{evron2023continual, jung2025convergence}. In \citet{evron2023continual}, gradient descent on continually learned tasks is related to Projections onto Convex Sets (POCS) and shown to converge to a \textit{sequential} max-margin scheme. In our work we consider the implicit bias of gradient descent in distributed setting, which is related to a different parallel projection scheme by projecting onto constraint sets \textit{simultaneously}.

\textbf{Parallel Projection.} Parallel projection methods are a family of algorithms to find a common point across multiple constraint sets by projecting onto these sets in parallel. These methods are widely used in feasibility problems in signal processing and image reconstruction \citep{bauschkeconvex2011}. The straightforward average of multiple projections is known as the simultaneous iterative reconstruction technique (SIRT) in \citet{gilbert1972iterative}. Then \citet{de1984parallel} studied the convergence of PPM for a relaxed version, and \citet{combettes1994inconsistent} further generalized the result to inconsistent feasibility problems. In \citet{combettes1997convex}, an extrapolated parallel projection method was proposed to accelerate the convergence. We note that \citet{jhunjhunwala2023fedexp} used this extrapolation to accelerate FedAvg. However, it was just inspired by the similarity between parallel projection method and FedAvg, while in this work we rigorously prove the relation between PPM and FedAvg using implicit bias of gradient descent.

\begin{algorithm}[htbp!]
\begin{algorithmic}[1]
\caption{\textsc{Local-GD}.} \label{alg:LocalGD}{}
\State \textbf{Input:} learning rate $\eta$.
\State Initialize $w_0^0$
\For{$k=0$ to $K-1$}
\State The aggregator sends global model $w_0^k$ to all compute nodes.
\For{$i=1$ to $i=M$}
\State compute node $i$ updates local model starting from $w_0^k$: $w_i^{k,0} = w_0^k$.
\For{$l=0$ to $L-1$}
\State $w_i^{k,l+1} = w_i^{k, l} - \eta \nabla f_i(w_i^{k,l})$.
\EndFor
\State compute node $i$ sends back the updated local model $w_i^{k+1} = w_i^{k, L}$.
\EndFor
\State The aggregator aggregates all the local models: $w_0^{k+1} = \frac{1}{M} \sum_{i=1}^M w_i^{k+1}$.
\EndFor
\State \textbf{Output:} $w_0^K$.
\end{algorithmic}
\end{algorithm}

\section{Motivating Observation in Linear Regression}
\label{sec: LR-motivating}

In this section we first give some  observations in linear regression as a motivating example. The behavior of linear regression is very well-understood in high-dimensional statistics.

\textbf{Setting:} At each compute node $i$, the dataset $S_i$ consists of $N$ tuples of samples and their corresponding labels, $(x,y) \in \mathbb{R}^d \times \mathbb{R}$. Denote $X_i = [x_{i1}, x_{i2}, \dots, x_{iN}]^T \in \mathbb{R}^{N \times d}$ as the data matrix at $i$-th compute node, and $y_i = [y_{i1}, y_{i2}, \dots, y_{iN}] \in \mathbb{R}^N$ as the label vector. Let $X_c = [X_1^T,\dots, X_M^T]^T \in \mathbb{R}^{MN \times d}$ be the data matrix consisting of all the local data, and $y_c = [y_1^T,\dots, y_M^T]^T \in \mathbb{R}^{MN \times 1}$ be the label vector consisting of the local labels.

We consider a special case of Local-GD in Algorithm \ref{alg:LocalGD} where the number of local steps is very large. At each round, the aggregator sends the global model $w_0$ to all the compute nodes. Each compute node minimizes the squared loss $f_i(w_i) =  \frac{1}{2N} \|y_i - X_i w_i \|^2$ by {\em a large number} of gradient descent steps {\em until convergence}. Then each compute node sends back the local model and the aggregator aggregates all the local models to get the updated global model.

\textbf{Underparameterized Regime:} When the number of local samples is larger than the dimension $d$, it is known that local model would converge to the ordinary least square solution $w_i^{k+1} = (X_i^T X_i)^{-1} X_i^T y_i$ regardless of initial point $w_i^k$. In the meanwhile, the centralized model with all the training samples is $w_c = (X_c^T X_c)^{-1} X_c^T y_c$. However, the average of local models $w_0 = \sum_{i=1}^M (X_i^T X_i)^{-1} X_i^T y_i$ is not identical to the centralized model unless the data is homogeneously distributed and all $X_i^T X_i$ are proportional. So a large number of local steps can hurt the convergence to centralized model with heterogeneous data distribution.

\textbf{Overparameterized Regime:} When the dimension is larger than the number of samples at each compute node ($d>N$), there are multiple solutions corresponding to zero squared loss. However, it is known that gradient descent would converge to the minimum norm solution in the feasible set,  which corresponds to a minimum Euclidean distance to the initial point \citep{gunasekar2018characterizing, evron2022catastrophic}, i.e., the solution of the optimization problem
\begin{align}
    & \min_{w_i} \quad \| w_i - w_0^k \|^2 \quad \text{s.t.} \quad X_i w_i = y_i \, .
\end{align}
We can obtain the closed form solution as $w_i^{k+1} = (I - P_i) w_0^k + X_i^{\dag} y_i$, where $P_i \triangleq X_i^T (X_i X_i^T)^{-1} X_i$ and $X_i^{\dag} \triangleq X_i^T (X_i X_i^T)^{-1}$. We observe that $P_i$ is the projection operator to the row space of $X_i$, and $X_i^{\dag}$ is the pseudo inverse of $X_i$. Meanwhile the centralized model converges to the minimum norm solution $w_c = X_c^T (X_c X_c^T)^{-1} y_c$. Denote $\bar{P} = \frac{1}{M} \sum_{i=1}^M P_i$. In the training process the difference between global model and centralized model is iteratively projected onto the null space of span of row spaces of $X_i$s. It implies that the difference on the span of data matrix gradually decreases until zero. Based on the evolution of the difference, we can prove the following theorem:
\begin{theorem}
\label{theorem: LR}
        For the linear regression problem, suppose the initial point $w_0^0$ is 0 and $d > MN$ and the minimum eigenvalue $\theta_{\min}$ of $\bar{P}$ is larger than 0. Then the output of Local-GD, $w_0^K$, converges to the centralized solution $w_c$ as the number of communication rounds $K \to \infty$ as $\|w_0^K - w_c\| \leq (1 - \theta_{\min})^K \|w_c\|$.
\end{theorem}
The proof is deferred to Appendix \ref{appen: LR}. The key step is to show the initial difference is already in the data space, and no residual in the null space of row spaces of $X_i$s. The convergence to the centralized model is at exponential rate. Due to the linearity of the regression problem, we can theoretically show the global model can exactly converge to the centralized model with implicit bias on overparameterized regime. It implies that, even if we use a large number of local steps to exactly solve the local problems on very heterogeneous data,  the performance of Local-GD is equivalent to train a model with all the data in one place.

\section{Implicit Bias of Local-GD for Classification}
\label{sec: LC-global}
For classification task, we also would like to know whether the global model can converge to the centralized model with any number of local steps. Now we investigate a binary classification task with linear models.
\subsection{Setting}
 Suppose, for each compute node $i$, the dataset $S_i$ consists of $N_i$ tuples of samples and their corresponding labels, $(x,y) \in \mathbb{R}^d \times \{+1, -1 \}$. We denote $X_i \in \mathbb{R}^{N_i \times d}$ as the data matrix at $i$-th compute node, and $y_i \in \{ +1, -1 \}^{N_i}$ as the label vector. The global dataset is the set of $M$ local datasets $S = \cup_{i=1}^M S_i$.

We consider a linear model $w \in \mathbb{R}^d$ for the binary classification task. The local loss at $i$-th compute node is 
\begin{align}
    f_i(w) = \sum_{s \in S_i} g(y_s x_s^T w),
\end{align}
where $g(u)$ is a loss function decreasing to zero when $u\to \infty$, such as logistic loss $g(u) = \ln (1 + e^{-u})$.

%%  Harsh: Is this needed? Repetition.
% \textbf{Algorithm.} We consider the standard Local-GD algorithm as Algorithm \ref{alg:LocalGD} in this section. At each round, the aggregator sends the global model $w_0$ to all the compute nodes. Each compute node minimizes the local loss $f_i(w_i)$ by \textit{arbitrary number} of gradient descent steps. Then each compute node sends back the local model and the aggregator aggregates all the local models to get the updated global model.

We study LocalGD with an \textit{arbitrary number} of gradient descent steps. To describe our main results, we have the following notations and assumptions. We denote the whole data matrix as $X \in \mathbb{R}^{N \times d}$, where $N = \sum_{i=1}^M N_i$. We write $\sigma_{\max} = \sqrt{\theta_{\max}(X^T X)}$ as the maximum singular value of data matrix $X$, where $\theta$ represents eigenvalues of a square matrix. We need an assumption of global separability on whole dataset.

\begin{assumption}
\label{assumption_LC}
    For all the data samples $(x_s, y_s)\in S$, there exists $w\in \mathbb{R}^d$ such that $y_s x_s^T w > 0$.
\end{assumption}
Note that linear separability is a common assumption in the analysis of  learning in overparameterized regime~\citep{nacson2019stochastic, soudry2018implicit,evron2023continual}. For our distributed case, this implies that all clients share at least $1$ minimizer, which imposes an extremely mild condition on the data heterogeneity among clients. In the overparametrized setting, $d\geq mn$, hence, there are likely several such solutions separating the whole dataset. Since there are multiple solutions separating the whole dataset, we define a particular max-margin solution on global dataset:
\begin{align}
\label{problem: centralized_LC}
    \hat{w} = \arg\min_{w \in \mathbb{R}^d} \| w\| \quad \text{s.t.} \quad y_{s} x_{s}^T w \geq 1, \quad \forall s \in S.
\end{align}
It has been proven that gradient descent would implicitly lead the linear model to this max-margin solution in direction, i.e., convergence of model direction to $\hat{w} / \| \hat{w} \|$ \citep{soudry2018implicit}. We define the maximum margin as 
\begin{align}
    \gamma = \max\limits_{w \in \mathbb{R}^d, \|w\| = 1} \min\limits_s\; y_s x_s^T w
\end{align}
which is strictly positive since the global dataset is linearly separable. The data points reaching this margin are support vectors of the global dataset.

To establish convergence, we require additional regularity assumptions on the loss function. 
\begin{assumption}
\label{assumption_loss}
    The loss function $g(u)$ is a positive, differentiable, $\beta$-smooth function, monotonically decreasing to zero, and $\lim \sup_{u\to -\infty} g' < 0$.
\end{assumption}

\begin{assumption}
\label{assumption_tail}
    The negative loss derivative $-g'(u)$ has a tight exponential tail. That is, there exists positive constants $\mu_+$, $\mu_-$ and $\bar{u}$ such that $\forall u> \bar{u}$:
    \begin{align}
        (1 - \exp(-\mu_- u))e^{-u} \leq -g'(u) \leq (1 + \exp(-\mu_+ u))e^{-u}.
    \end{align}
\end{assumption}
Note that these assumptions are also used in centralized learning of overparameterized models~\citep{soudry2018implicit,nacson2019stochastic, evron2023continual}, and the logistic loss satisfies all the assumptions. 
With our setting completely defined, we state our main results.

\subsection{Loss Convergence and Implicit Bias of Local-GD}
Our main result is on the asymptotic convergence of the model parameter $w_0$ and loss $f(w)$ for Local-GD.
\begin{theorem}
\label{theorem-implicit-bias}
    Under assumptions \ref{assumption_LC}, \ref{assumption_loss}, \ref{assumption_tail}, if the learning rate satisfies $\eta < \min \left( \frac{1}{2L \sigma_{max}^2 \beta},  \frac{\gamma^2}{4L \sigma_{\max}^3 \beta (\gamma + \sigma_{\max})}\right)$, then for the process of Local-GD, we have,
    \begin{itemize}
    \itemsep0em 
        \item Every data point is classified correctly finally: $\lim_{k \to \infty} x_s^T w_0^k = \infty, \, \forall s \in S$.
        \item The global model obtained from Local-GD will behave as
    \begin{align}
        w_0^{k} = \log (Lk) \hat{w} + \rho^{k}, \quad \text{ and,   }\quad          \left \|\frac{w_0^k}{\|w_0^k \|} - \frac{\hat{w}}{\| \hat{w}\|} \right \| = O\left(\frac{1}{\log Lk}\right)
    \end{align}
    and $\| \rho^k \| < \infty$  for all $k$. This implies, the normalized global model converges to the global max-margin solution.
    \item The loss function $f(w_0^k)$ decreases to zero as $f(w_0^k) = O\left(\frac{1}{Lk}\right)$.
    \end{itemize}
\end{theorem}

The proof is deferred to Appendix \ref{appen: LC-global}. The technical challenges lie in that we need to control the residual term $\rho^k$ with the \textit{local steps} and \textit{aggregations}, which are handled by a refined analysis in distributed context. This theorem implies the global model can eventually correctly classify all the training samples after many rounds of communication. Given that centralized model also converges to the global max-margin solution from prior results, the global model from Local-GD actually converges to the exact centralized model in direction. Further, this holds for a step size $\eta \propto \frac{1}{L}$, and does not require any additional modifications to the objective, for instance, any regularization on the difference between local and global models during local steps.

\textbf{Impact of local steps.} In this analysis, the number of local steps can be arbitrary. Although the magnitude of model vector would diverge to infinity, the direction of aggregated model still converges to the direction of global max-margin solution. Thus, the number of local steps does not influence the asymptotic convergence to the centralized model, which is very different from underparameterized regime. This result also shows the local steps can be beneficial for convergence to the global max-margin solution as both the loss and the directional error decrease with total number of gradient descent steps($Lk$) at rates $\frac{1}{Lk}$ and $\frac{1}{\log (Lk)}$ respectively. Additionally, our convergence rates also match those obtained for GD in centralized learning~\citep{soudry2018implicit} with total number of steps $Lk$. This demonstrates that our analysis is tight. Further, for constant $\gamma$, if we use the same number of local steps, $L = \Theta(\sqrt{\nicefrac{M}{\epsilon}})$ as two-stage Local-GD in  ~\citet[Corollary~3]{crawshaw2025local}, then we require the same number of rounds $\mathcal{O}(\sqrt{\nicefrac{M}{\epsilon}})$ of  Local-GD to achieve $f(w_0^k)\leq \epsilon$. Note that both ~\citet{crawshaw2025local} as well as our Theorem~\ref{theorem-implicit-bias} require the number of rounds to be larger than some $\bar{k}$ after which asymptotics kick in. Therefore, we assume that $\epsilon$ is small enough that  the number of rounds to be $\mathcal{O}(\sqrt{\nicefrac{M}{\epsilon}})$ is much larger than this $\bar{k}$.

%Note that we can improve our dependence on $\epsilon$ beyond $\mathcal{O}(\sqrt{\epsilon})$ to any $\mathcal{O}(\epsilon^{-a})$ for any $a\in (0,1)$ by setting $L = \Theta(\sqrt{M}\epsilon^{1-a})$ in our result, which cannot be shown from the analysis of two-stage Local-GD in ~\cite{crawshaw2025local}.

\textbf{Learning rate.} Theorems \ref{theorem-implicit-bias} needs the learning rate to be small as $O(1/L)$, which has also been used by existing works~\citep{karimireddy2020scaffold, pmlr-v119-koloskova20a, crawshaw2025local} on Local-GD and Local-SGD. This means the model does not move so far after one round of local iterations. Next, we would see whether the global model still converges to max-margin solution with a learning rate independent of $L$.

\begin{remark}
The proof of Theorem \ref{theorem-implicit-bias} is inspired by the analysis of implicit bias of SGD \citep{nacson2019stochastic}. Intuitively, we can regard one local dataset as a ``batch'' in SGD for sampling without replacement. But we perform multiple gradient steps in the same ``batch'', not just one step of gradient descent. The challenge is to handle local steps in the same local dataset and the aggregation after one round of local training. 
\end{remark}

\begin{remark}
In this paper we mainly focus on the linearly separable data, which is a standard assumption in implicit bias analysis and also widely used in recent works \citep{zhang2024implicit, crawshaw2025local, jung2025convergence}. For non-separable case, \citet{ji2019implicit} has shown gradient descent converges to a ray along the direction of max-margin solution of largest linearly separable subset. However, there is still an assumption on the data: in fact, one needs a positive margin on the separable part of data to show both convergence in risk or parameters. Nevertheless, \citet{ji2019implicit} clearly shows strict linear separability is not the main reason for the convergence of gradient descent to a max-margin solution. Since even without this assumption, GD still converges to a variant form of max-margin solution. It is possible to use the same idea in Local-GD. Intuitively, in the case where local datasets are linearly separable but global dataset is non-separable, although local training would guide local models to local max-margin solutions, the aggregations would force the global model to converge to the max-margin solution of largest linearly separable subset of global dataset, which is the centralized solution.
\end{remark}

\subsection{Extension to Local-SGD}

It is straightforward to extend our analysis of Local-GD to Local-SGD that chooses samples without replacement. At each local step of $i$-th compute node, the update is $w_i^{k, l+1} = w_i^{k,l} - \eta \sum_{s \in \mathcal{B}_{i,l}} \nabla g(y_s x_s^T w_i^{k,l})$, where $\mathcal{B}_{i,l}$ is the mini-batch of samples at $l$-th local step. We consider the following setting of sampling:

\begin{assumption}[Sampling without replacement.]
\label{assumption_sgd}
    At every communication round, each compute node run stochastic gradient descent with $E$ epochs, where $E$ is an positive integer. Within each epoch, the mini-batches with batch size $B$ are $\{S_{i,0}, S_{i,1}, \dots, S_{i, l^\prime}\}$, which partition the local dataset $S_i$, where $l^\prime = N / B$ is the number of local steps for one epoch.
\end{assumption}
Under this setting, each sample is exactly chosen once inside one epoch of local updates. At each round, the local datasets are passed $E$ times, which is a practically common way. Under this assumption, the number of local steps is $L = E\cdot L^\prime$. We give a formal description of Local-SGD as Algorithm \ref{alg:LocalSGD}. 

\begin{algorithm}[htbp]
\begin{algorithmic}[1]
\caption{\textsc{Local-SGD}.} \label{alg:LocalSGD}{}
\State \textbf{Input:} learning rate $\eta$.
\State Initialize $w_0^0$
\For{$k=0$ to $K-1$}
\State The aggregator sends global model $w_0^k$ to all compute nodes.
\For{$i=1$ to $i=M$}
\State compute node $i$ updates local model starting from $w_0^k$: $w_i^{k,0} = w_0^k$.
\For{$l=0$ to $L-1$}
\State Choose a mini-batch of samples $\mathcal{B}_l$ and calculate the gradient: $G(w_i^{k,l}) = \sum_{s \in \mathcal{B}_l} g^\prime(x_s^T w_i^{k,l}) x_s$
\State $w_i^{k,l+1} = w_i^{k, l} - \eta G(w_i^{k,l})$.
\EndFor
\State compute node $i$ sends back the updated local model $w_i^{k+1} = w_i^{k, L}$.
\EndFor
\State The aggregator aggregates all the local models: $w_0^{k+1} = \frac{1}{M} \sum_{i=1}^M w_i^{k+1}$.
\EndFor
\State \textbf{Output:} $w_0^K$.
\end{algorithmic}
\end{algorithm}

As remarked, the proof of Local-GD (Theorem \ref{theorem-implicit-bias}) is inspired by the implicit bias analysis of SGD. Thus we can extend it to Local-SGD without significant changes. In Local-GD we regard one local dataset as a large ``batch'' in SGD for sampling without replacement. Instead of performing multiple steps on the same ``batch'' at one compute node, Local-SGD performs multiple steps on mini-batches of this larger ``batch''. At each step, the stochastic gradient contains less samples than a full gradient, but those samples are still a subset of the local ``batch''. After slight modification we can obtain the same asymptotic results as Theorem \ref{theorem-implicit-bias} for Local-SGD.

\begin{theorem}
\label{theorem-implicit-bias-sgd}
    Under assumptions \ref{assumption_LC}, \ref{assumption_loss}, \ref{assumption_tail}, \ref{assumption_sgd}, if the learning rate satisfies $\eta \leq \min \left( \frac{1}{2L \sigma_{max}^2 \beta},  \frac{\gamma^2}{4L \sigma_{\max}^3 \beta (\gamma + \sigma_{\max})}\right)$, then for the process of Local-SGD, we have,
    \begin{itemize}
    \itemsep0em 
        \item Every data point is classified correctly finally: $\lim_{k \to \infty} x_s^T w_0^k = \infty, \, \forall s \in S$.
        \item  The global model obtained from Local-GD will behave as
    \begin{align}
        w_0^{k} = \log (Lk) \hat{w} + \rho^{k}, \quad \text{ and,   }\quad          \left \|\frac{w_0^k}{\|w_0^k \|} - \frac{\hat{w}}{\| \hat{w}\|} \right \| = O\left(\frac{1}{\log Lk}\right)
    \end{align}
    and $\| \rho^k \| < \infty$  for all $k$. This implies, the normalized global model converges to the global max-margin solution.
    \item \ The loss function $f(w_0^k)$ decreases to zero as $f(w_0^k) = O\left(\frac{1}{Lk}\right)$.
    \end{itemize}
\end{theorem}

The proof is deferred to Appendix \ref{appen: LC-SGD}. The main difference from proofs of Theorem \ref{theorem-implicit-bias} is on the proof of Claim 1. We can see the Local-SGD does not change the asymptotic property of local steps with increasing number of rounds. This result aligns with \citet{nacson2019stochastic}, which obtains the same implicit bias and convergence rate of SGD compared to GD. Choosing a mini-batch for one step of update does not change the implicit bias compared to choosing a full batch at every update.

\section{Implicit Bias of Local-GD with Learning Rate Independent of L} 
\label{sec: LC}

\subsection{Setting}

In this section, we consider Local-GD in a slightly different setting. We aim to solve a local optimization problem with exponential loss and a weakly regularized term for each compute node. The local problem is solved exactly (to reach the local optima) with a large number of local steps.

\textbf{Algorithm.} At each round, the aggregator sends the global model $w_0$ to all the compute nodes. Each compute node minimizes an {\em exponential loss} with a {\em weakly regularized term} by many gradient descent steps {\em until convergence}. That is, each compute node solves the following problem:
% \begin{small}
\begin{align}
\label{problem: regularization}
    \min_{w \in \mathbb{R}^d} f_i(w) \qquad \text{ where } f_i(w) \equiv \sum_{s \in S_i} \exp \left(-y_{s}x_{s}^T w \right) + \frac{\lambda}{2} \| w - w_0^k \|^2
\end{align}
% \end{small}
where $\lambda$ is a regularization parameter close to 0. 

Then each compute node sends back the local model and the aggregator aggregates all the local models to get the updated global model (i.e., they follow Algorithm \ref{alg:LocalGD} with $f_i(w_i)$ as specified here). %The detailed algorithm for linear classification is Local-GD in Algorithm \ref{alg:LocalGD} with $f_i(w_i)$ replaced in $\mathsf{LocalUpdate}$ (Algorithm \ref{alg:localupdate}).

Regularization methods are very common in distributed learning to force the local models move not too far from global model \citep{li2020federated, li2021ditto, t2020personalized}. Here we consider the weakly regularized term, $\lambda \to 0$, to give theoretical insights of Local-GD on classification tasks. Experimentally the $\lambda$ is set to be extremely small that does not affect the minimization of exponential loss. For the local loss functions, we have one assumption on smoothness:
\begin{assumption}
    For each compute node, the local loss function $f_i(w)$ is $B$-smooth for any round of local steps $k$.
\end{assumption}

\textbf{Learning Rate.} In the following analysis of implicit bias, we actually exploit the property of local minimizers. Since local problem (\ref{problem: regularization}) is a strongly convex problem for $\lambda>0$, we can run local gradient descent to find the unique minimizer with a learning rate $\eta \leq \frac{2}{B}$ for a large number of local steps $L$. That's the only requirement of learning rate, which is not dependent of number of local steps $L$. In other words, the learning rate is only needed to be sufficiently small to ensure local convergence at each round of Local-GD.

\subsection{Implicit Bias of Local-GD and Relation to Parallel Projection Method}

We consider the whole algorithmic process of Local-GD on classification and use another auxiliary sequence of global models, denoted as $\bw_0^k, k=0,1, 2, \dots$. Starting from an initial point $\bar{w}_0^0$, the central node sends global model $\bar{w}_0^k$ to all the compute nodes at $k$-th iteration round. Each compute node solves the following {\em Local Max-Margin} problem to obtain $\bar{w}_i^{k+1}$:
\begin{align}
\label{problem: SVM_local}
    \bar{w}_i^{k+1} =  \arg\min_{w \in \mathbb{R}^d} \| w - \bar{w}_0^k \| \quad \text{s.t.} \quad y_{s} x_{s}^T w \geq 1, \quad \forall s \in S_i. 
\end{align}
Then the compute node sends the local model back. The central node averages the local models to get $\bar{w}_0^{k+1} = \frac{1}{M} \sum_{i=1}^M \bw_i^{k+1}$. We can show the solution $w_0^{K}$ obtained in Local-GD converges in direction to the global model from Local Max-Margin problems $\bar{w}_0^{K}$.

\begin{lemma}
\label{lemma: LC-equivalence}
    For almost all datasets sampled from a continuous distribution satisfying Assumption \ref{assumption_LC}, with initialization $w_0^0 = \bw_0^0 = 0$, we have $w_0^k \to \ln \left( \frac{1}{\lambda} \right) \bw_0^k$, and the residual $\| w_0^k - \ln \left( \frac{1}{\lambda} \right) \bw_0^k\| = O(k\ln \ln \frac{1}{\lambda})$, as $\lambda \rightarrow 0$. It implies that at any round $k = o\left (\frac{\ln(1/\lambda)}{\ln \ln (1/\lambda)}\right)$, $w_0^{k}$ converges in direction to $\bar{w}_0^{k}$:
    \begin{align}
        \lim_{\lambda \rightarrow 0 } \frac{w_0^k}{\| w_0^k \|} = \frac{\bar{w}_0^k}{ \| \bar{w}_0^k \|}.
    \end{align}
\end{lemma}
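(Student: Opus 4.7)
The plan is to prove the stronger quantitative identity $w_0^k = \ln(1/\lambda)\,\bar{w}_0^k + r_k$ with $\|r_k\| = O(k\ln\ln(1/\lambda))$ by induction on the communication round $k$; the in-direction convergence statement then follows immediately by dividing by $\ln(1/\lambda)$, since $r_k/\ln(1/\lambda)\to 0$ whenever $k = o(\ln(1/\lambda)/\ln\ln(1/\lambda))$. The central technical device is a shifted-center version of the Rosset--Zhu--Hastie / Soudry et al.\ regularization-path theorem, which says the solution of an exponential-loss problem with weak $L^2$ regularization behaves, to leading order, as $\ln(1/\lambda)$ times a max-margin direction, with a second-order correction of size $O(\ln\ln(1/\lambda))$. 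The ``almost all datasets'' hypothesis is invoked exactly as in Soudry et al.\ to guarantee that the support-vector configuration of each local max-margin program in (\ref{problem: SVM_local}) is generic, so that the solution depends smoothly (in fact $1$-Lipschitz-ly) on its center.

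The base case $k=0$ is immediate from $w_0^0 = \bar{w}_0^0 = 0$. For the inductive step, assume $w_0^k = \ln(1/\lambda)\bar{w}_0^k + r_k$ and analyze each local problem
\[ w_i^{k+1} = \arg\min_{w} \sum_{j=1}^N e^{-y_{ij} x_{ij}^T w} + \tfrac{\lambda}{2}\|w - w_0^k\|^2 \]
via the rescaling $w = \ln(1/\lambda)\,v$. Under this change of variable the objective becomes
\[ \sum_j \lambda^{\,y_{ij} x_{ij}^T v} \;+\; \tfrac{\lambda\ln^2(1/\lambda)}{2}\bigl\|v - \bar{w}_0^k - r_k/\ln(1/\lambda)\bigr\|^2, \]
both of whose leading terms are $\Theta(\lambda\cdot\mathrm{polylog}(1/\lambda))$ at the feasible candidate $v=\bar{w}_i^{k+1}$. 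A margin violation of any constant size $\epsilon$ inflates the first sum to $\lambda^{1-\epsilon}$, dominating as $\lambda\to 0$; a KKT / Lagrangian duality argument on the rescaled problem (mirroring Theorem~2.2 of Soudry et al.) then pins every support-vector margin to $1 + O(\ln\ln(1/\lambda)/\ln(1/\lambda))$, forces strict inequality on the rest, and identifies $v^*$ as the point satisfying those approximate equalities that is closest to $\bar{w}_0^k + r_k/\ln(1/\lambda)$. Since projection onto the convex polytope $C_i$ is nonexpansive, this identifies $v^*$ with $\bar{w}_i^{k+1}$ up to an $O(\|r_k\|/\ln(1/\lambda)) + O(\ln\ln(1/\lambda)/\ln(1/\lambda))$ perturbation, hence $w_i^{k+1} = \ln(1/\lambda)\bar{w}_i^{k+1} + O(\|r_k\|) + O(\ln\ln(1/\lambda))$.

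Averaging these equalities over the $M$ nodes and using $\bar{w}_0^{k+1} = \tfrac{1}{M}\sum_i \bar{w}_i^{k+1}$ gives the one-step recursion $\|r_{k+1}\| \leq \|r_k\| + O(\ln\ln(1/\lambda))$, which closes the induction at the claimed additive rate. The directional conclusion follows by dividing $w_0^k$ by $\ln(1/\lambda)$ and letting $\lambda\to 0$ along any sequence with $k = o(\ln(1/\lambda)/\ln\ln(1/\lambda))$.

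The principal obstacle I expect is the shifted-center regularization-path step itself. The classical theorem handles a regularizer centered at the origin with $\lambda$-independent data, whereas here both the center $w_0^k$ grows like $\ln(1/\lambda)$ and the equivalent weights $e^{-y_{ij} x_{ij}^T w_0^k} = \lambda^{\,y_{ij} x_{ij}^T \bar{w}_0^k + O(\|r_k\|/\ln(1/\lambda))}$ depend nontrivially on $\lambda$, becoming very small or very large depending on whether $\bar{w}_0^k$ already satisfies the local margin constraints. Controlling this interplay needs the genericity hypothesis (so that the active-constraint set is locally constant and the max-margin map is $1$-Lipschitz in its center, ensuring additive rather than multiplicative propagation of $r_k$) together with a careful accounting of the per-round $O(\ln\ln(1/\lambda))$ error; the latter is exactly what enforces the horizon $k = o(\ln(1/\lambda)/\ln\ln(1/\lambda))$ in the hypothesis of the lemma.
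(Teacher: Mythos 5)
Your overall architecture --- induction on the round index with the residual $r^k = w_0^k - \ln(1/\lambda)\,\bar{w}_0^k$, a per-round additive error of $O(\ln\ln(1/\lambda))$, and the resulting horizon $k = o\!\left(\ln(1/\lambda)/\ln\ln(1/\lambda)\right)$ --- is exactly the paper's (both adapt the continual-learning framework of \cite{evron2023continual} to parallel local updates followed by averaging, using nonexpansiveness to pass from local to global residuals). The gap is in the one step that carries all the technical weight: the claim that each local regularized minimizer satisfies $w_i^{k+1} = \ln(1/\lambda)\,\bar{w}_i^{k+1} + O(\|r_k\|) + O(\ln\ln(1/\lambda))$. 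You defer this to a ``shifted-center regularization-path theorem'' that you yourself flag as the principal unresolved obstacle; the rescaling and value-comparison argument you sketch only forces the minimizer to approximately satisfy the margin constraints, but does not control \emph{where inside the feasible set} the minimizer sits to the required $O(\ln\ln(1/\lambda))$ additive accuracy, which is the entire content of the lemma.

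The paper closes this step with a concrete mechanism that your sketch is missing. Since the local objective is $\lambda$-strongly convex, $\|w_i^k - w\| \le \lambda^{-1}\|\nabla f_i(w)\|$ for any test point $w$; the proof evaluates the gradient at the candidate $\left(\ln(1/\lambda) + s_i^k\ln\ln(1/\lambda)\right)\bar{w}_i^k + \tilde{w}_i^k$, where the sign $s_i^k$ depends on whether $\bar{w}_0^{k-1}$ already lies in $C_i$, and $\tilde{w}_i^k$ is an $O(1)$ vector chosen so that the exponential weights on the support vectors exactly reproduce the KKT dual variables of the local max-margin problem (\ref{problem: SVM_local}), i.e.\ $\exp(-x_{ij}^T\tilde{w}_i^k) = \beta_{ij}^k$. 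With this choice the $\ln(1/\lambda)(\bar{w}_i^k - \bar{w}_0^{k-1})$ term in the gradient cancels exactly against the support-vector sum, the non-support terms vanish as $\lambda \to 0$, and what remains is $O\!\left(\lambda(\|r^{k-1}\| + \ln\ln(1/\lambda))\right)$, yielding your recursion $\|r^k\| \le \|r^{k-1}\| + O(\ln\ln(1/\lambda))$. Existence of $\tilde{w}_i^k$ requires the duals to be strictly positive, and this is where ``almost all datasets'' actually enters (Lemma \ref{lemma: tildew}, proven via a polynomial non-vanishing argument plus an explicit construction) --- not, as you invoke it, to assert Lipschitz dependence of the max-margin solution on its center. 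Without the candidate-point construction and the positivity of the duals, your induction does not close.
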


The proof is deferred in Appendix \ref{appen: LC}. The proof sketch is similar to the continual learning work \citet{evron2023continual}, but we have to handle the parallel local updates for each dataset from the same initial model and the aggregation, which is different from the sequential updates where for each dataset the model is trained from the previous model and there is no need to do aggregation.

Based on this equivalence between Local-GD for linear classification and Local Max-Margin scheme, we can further analyze the performance of Local-GD with a large number of local steps. Instead of a closed-form solution for the Local Max-Margin problem (\ref{problem: SVM_local}), we treat it as a projection of the aggregated global model onto a convex set $C_i$: $\bar{w}_i^{k+1} = P_i(\bar{w}_0^k)$, which is formed by the constraints in (\ref{problem: SVM_local}) and exactly the local feasible set defined in Assumption \ref{assumption_LC}. Here we slightly overload the notation $P_i$, which was used as the projection matrix in linear regression since the readers can get a sense of the same effect of them in Local-GD. The aggregation is actually to average the local projected points: $\bar{w}_0^{k+1} = \frac{1}{M} \sum_{i=1}^M P_i(\bar{w}_0^{k})$. 

The sequence of Local Max-Margin schemes is therefore 
projections to local (convex) feasible sets followed by aggregation, which is
the Parallel Projection Method (PPM) in  literature \citep{gilbert1972iterative,combettes1994inconsistent}. Using Lemma \ref{lemma: LC-equivalence}, we establish the relation between Local-GD and PPM: the model from Local-GD converges to the model from PPM in direction.

\subsection{Convergence to Global Feasible Set}

Now we use the properties of PPM to characterize the performance of Local-GD in classification. In \citet{combettes1994inconsistent}, the convergence of PPM has been provided for a relaxed version. The direct average considered in this work can be seen as a special case of the relaxed version, and the following lemma holds.

\begin{lemma}[Theorem 1 and Proposition 8, \citep{combettes1994inconsistent}]
\label{lemma: PPM}
    Suppose all the local feasible sets $C_i, i=1,2,\dots$ are closed and convex, and the intersection $\bar{C}$ is not empty. Then for any initial point $\bar{w}_0^0$, the global model $\bw_0$ generated by PPM converges to a point in the global feasible set $\bar{C}$.
\end{lemma}
This lemma guarantees that $\bar{w}_0^{K}$ will converge to the intersection of the convex sets after many rounds of iteration, however we are not sure which exact point it would converge to.

Combining Lemma \ref{lemma: LC-equivalence}, Lemma \ref{lemma: PPM} and the fact that centralized model would converge to the minimum norm solution in global feasible set, we immediately have:
\begin{theorem}
\label{theorem: LC}
    For linear classification problem with exponential loss, suppose initial point is $w_0^0=0$. The aggregated global model $w_0^K$ obtained by Local-GD with a large number of local steps converges in direction to one point in the global feasible set $\bar{C}$, while the centralized model converges in direction to the minimum norm point in the same set.
\end{theorem}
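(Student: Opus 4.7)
The plan is to combine the three ingredients already established in the excerpt: (a) Lemma~\ref{lemma: LC-equivalence}, which identifies the direction of the Local-GD iterate with the direction of the auxiliary Local Max-Margin iterate $\bar{w}_0^k$; (b) Lemma~\ref{lemma: PPM}, which is the convergence theorem for the Parallel Projection Method; and (c) the classical implicit-bias theorem of \citet{soudry2018implicit} for gradient descent on exponential loss. Broken into two parts, the proof handles the Local-GD iterate first, then the centralized one.

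For the Local-GD side, first observe that each local feasible set $C_i = \{w : y_{ij} x_{ij}^T w \geq 1,\; j = 1, \dots, N\}$ is an intersection of finitely many closed half-spaces and is therefore closed and convex; Assumption~\ref{assumption_LC} supplies the nonempty intersection $\bar{C}$. Next, recognize that the auxiliary scheme defined by problem~(\ref{problem: SVM_local}) followed by simple averaging is, by definition, the Parallel Projection Method applied to $(C_1,\dots,C_M)$ from the initial point $\bar{w}_0^0 = 0$: each Local Max-Margin subproblem is precisely the Euclidean projection onto $C_i$. Lemma~\ref{lemma: PPM} then yields a limit point $\bar{w}_0^{\infty} \in \bar{C}$. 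The limit is nonzero because $0 \notin \bar{C}$ (any $w \in \bar{C}$ satisfies $y_{ij} x_{ij}^T w \geq 1$), so the direction $\bar{w}_0^{\infty}/\|\bar{w}_0^{\infty}\|$ is well defined.

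To transport this statement back to $w_0^K$, I invoke Lemma~\ref{lemma: LC-equivalence}: $w_0^k/\|w_0^k\| \to \bar{w}_0^k/\|\bar{w}_0^k\|$ as $\lambda \to 0$, uniformly for $k = o(\ln(1/\lambda)/\ln\ln(1/\lambda))$. Combining this with $\bar{w}_0^k/\|\bar{w}_0^k\| \to \bar{w}_0^{\infty}/\|\bar{w}_0^{\infty}\|$ as $k \to \infty$, and letting $\lambda \to 0$ while $K$ grows within the admissible regime (so that the residual bound $O(K \ln\ln(1/\lambda))$ remains negligible compared to $\ln(1/\lambda)\|\bar{w}_0^K\|$), I conclude that the Local-GD iterate converges in direction to $\bar{w}_0^{\infty}/\|\bar{w}_0^{\infty}\| \in \bar{C}$.

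For the centralized model, I appeal directly to the implicit-bias theorem of \citet{soudry2018implicit}: on a linearly separable dataset, gradient descent on the exponential loss converges in direction to the max-margin separator, i.e.\ to the minimum-norm solution of the hard-margin SVM problem~(\ref{problem: centralized_LC}). The constraint set of~(\ref{problem: centralized_LC}) is exactly $\bar{C} = \bigcap_{i=1}^{M} C_i$, so the limiting direction is the direction of the minimum-norm point of $\bar{C}$, as claimed.

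\textbf{Main obstacle.} The only delicate point is the order of limits in part~(i): Lemma~\ref{lemma: LC-equivalence} gives directional agreement between $w_0^k$ and $\bar{w}_0^k$ only after $\lambda \to 0$ at \emph{fixed} $k$, whereas PPM convergence requires $k \to \infty$. I would resolve this by coupling the two: use the explicit residual estimate $\|w_0^k - \ln(1/\lambda)\bar{w}_0^k\| = O(k \ln\ln(1/\lambda))$ together with a lower bound $\|\bar{w}_0^k\| \geq \|\bar{w}_0^{\infty}\|/2$ valid for all large $k$ (which follows from $\bar{w}_0^k \to \bar{w}_0^{\infty} \neq 0$), so that the residual is negligible relative to the signal $\ln(1/\lambda)\|\bar{w}_0^k\|$ provided $\lambda$ is sent to $0$ sufficiently fast along a subsequence satisfying $k = o(\ln(1/\lambda)/\ln\ln(1/\lambda))$. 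All other steps are essentially bookkeeping on top of the three cited lemmas/theorems.
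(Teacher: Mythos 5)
Your proposal is correct and follows essentially the same route as the paper, which likewise obtains the theorem by combining Lemma~\ref{lemma: LC-equivalence} (directional equivalence of Local-GD with the Local Max-Margin/PPM scheme), Lemma~\ref{lemma: PPM} (PPM convergence to a point of $\bar{C}$), and the implicit-bias result of Soudry et al.\ for the centralized model. Your explicit treatment of the $\lambda \to 0$ versus $k \to \infty$ order-of-limits issue is a welcome refinement of a step the paper passes over with ``we immediately have.''
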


Here we cannot guarantee the global model obtained by  Local-GD with a learning rate independent of $L$ to converge exactly to the centralized model in classification, but show that it converges to the same global feasible set as the centralized solution. To theoretically support that the Local-GD model converges to the centralized model, we propose a slightly Modified Local-GD by just changing the aggregation method, and showing that it converges to the centralized model exactly.

\subsection{Modified Local-GD: Convergence to Centralized Model}
\label{sec: Modified}

In \citet{combettes1996convex} it was shown that if the aggregation method is modified to incorporate the influence of the initial point $\bar{w}_0^0$ in PPM, then the sequence generated by PPM will converge to a specific point in global feasible set $\bar{C}$ with minimum distance to this initial point. Denote $P_c(\cdot)$ as the projection operator onto the global feasible set $\bar{C}$. Formally we have the following lemma.

\begin{lemma}[Theorem 5.3, \citep{combettes1996convex}]
\label{lemma: Modified PPM}
    Suppose $\bar{C}$ is not empty. For any initial point $\bar{w}_0^0$, when the local models are aggregated as
    % \begin{small}
    \begin{align}
        \bar{w}_0^{k+1} = (1 - \alpha^{k+1}) \bar{w}_0^0 + \alpha^{k+1} \left( \frac{1}{M}\sum_{i=1}^M P_i(\bar{w}_0^k) \right),
    \end{align}
    % \end{small}
    where $\{\alpha^k\}$ satisfy $(i) \lim_{k\rightarrow \infty} \alpha^k = 1, (ii) \sum_{k\geq 0} (1-\alpha^k) = \infty, (iii) \sum_{k \geq 0} | \alpha^{k+1} - \alpha^k | < \infty,$
    % \begin{align}
    %     \left\{\begin{array}{l}
    %     \lim_{k\rightarrow \infty} \alpha^k = 1  \\
    %     \sum_{k\geq 0} (1-\alpha^k) = \infty \notag \\
    %     \sum_{k \geq 0} | \alpha^{k+1} - \alpha^k | < \infty,  \notag
    %     \end{array}\right. 
    % \end{align}
    then the global model generated by PPM will converge to the point $P_c(\bar{w}_0^0)$.
\end{lemma}
The sequence generated by PPM would converge to the point in global feasible set, $\bar{C}$, with minimum distance to $\bw_0^0$.
The modified aggregation method is a linear combination of initial point and current average of local projected points. One example of the sequence $\{\alpha^k\}$ satisfying the conditions is $\alpha^k = 1 - \frac{1}{k+1}$.

If we start from $\bw_0^0 = 0$, then the point $P_c(\bar{w}_0^0)$ is exactly the minimum norm point in the global feasible set. It shows the PPM can exactly converge to the minimum norm point as the centralized model. Based on this result, we propose a Modified Local-GD algorithm, with the replacement of Line 9 in Algorithm \ref{alg:LocalGD} with
\begin{align}
    w_0^{k+1} = (1 - \alpha^k) w_0^0 + \alpha^k \left( \frac{1}{M}\sum_{i=1}^M w_i^k \right).
\end{align}

We still need to prove a lemma analogous to Lemma \ref{lemma: LC-equivalence} to establish the equivalence between Modified Local-GD and Modified PPM, which is omitted here due to space limit (Please refer to Appendix \ref{appen: LC} and the proof is very similar to proof in Lemma \ref{lemma: LC-equivalence}). From the equivalence, Lemma \ref{lemma: Modified PPM}, and implicit bias of the centralized model, we can have the following theorem:
\begin{theorem}
    For linear classification problem with local loss (\ref{problem: regularization}), suppose the initial point is $w_0^0=0$. Then the global model $w_0^K$ obtained by Modified Local-GD converges in direction to the centralized model obtained from (\ref{problem: centralized_LC}).
\end{theorem}

Unlike the vanilla Local-GD, which is only guaranteed to converge to the global feasible set, the Modified Local-GD is guaranteed to converge to the centralized model in direction. Note that if we start from $\bw_0^0=0$, the aggregation in Modified Local-GD becomes $w_0^{k+1} = \frac{k}{k+1} \left( \frac{1}{M}\sum_{i=1}^M w_i^k \right)$, which is just a {\em scaling} of vanilla aggregation with a parameter less than 1. Thus we can see experimentally they usually converge to the same point and Modified Local-GD converges slightly slower. With Modified Local-GD, we can theoretically show the global model still converges to centralized model in direction with a learning rate independent of $L$.

% \vspace{-0.2cm}
\section{Experiments}
\label{sec: exp}
% \vspace{-0.2cm}

We conducted various experiments on linear classification and neural network fine-tuning. We compared the {\em \textbf{global model}}, i.e., the output of Local-GD (Algorithm \ref{alg:LocalGD}), with the {\em \textbf{centralized model}}, i.e., the model obtained from running GD on a dataset consisting of all distributed datasets at one place, in different scenarios.

\subsection{Experiments on Linear Regression}

We simulated 10 compute nodes, each with 50 training samples. The label vector $y_i$ at $i$-th compute node is exactly generated as (\ref{eq: generate_LR}), where ground truth model $w_i^*$ is Gaussian vector with each element following $\mathcal{N}(0, 4)$. Each ground truth model at different compute nodes is independently generated, thus the datasets can be very different from each other. The data matrix $X_i$ also follows Gaussian distribution, with each element being $\mathcal{N}(0, 1)$, and $z_i$ is a Gaussian vector with $\mathcal{N}(0, 0.04)$. In Local-GD, the number of local steps is $L=200$, number of rounds is also $R=200$, and the learning rate $\eta=0.0001$. Actually it just take a few local steps to converge locally at each round, but we set a large number of local steps to show it can be large at $O(\sqrt{T})$, where $T=L*R$ is the number of total iterations. We tested the global model (G) from Local-GD on squared loss, centralized model (C) trained from global dataset on squared loss, closed form of global model (G-Closed) in (\ref{eq: LR-final}), closed form of centralized model (C-Closed) as solution of problem (\ref{problem:centralized_LR}). The centralized model is trained 10000 steps with learning rate $0.0001$.

Fig. \ref{fig:LR}(a) displays the difference between global model and centralized model, global model and its closed form, and centralized model and its closed form, with respect to model dimension. The difference between two models is $\|w_1 - w_2 \| / d$. Since it is always locally overparameterized, the difference between global model and the closed form is always zero. The difference between global model and centralized model has an obvious peak around 500, which is the number of total samples. The phenomenon that global model converges exactly to centralized model only happens when the model is sufficiently overparameterized. Fig. \ref{fig:LR}(b) shows the generalization error of global model and centralized model in linear regression. Since the data matrix is Gaussian, the generalization error of model $w$ can be computed as $\frac{1}{M} \sum_{i=1}^M \| w - w_i^* \|^2$. We plot the generalization error divided by $d$. It is shown the global model and centralized model can get the same performance when model is sufficiently overparameterized.

\begin{figure}[htbp]
    \centering
    \subfigure[]
    {
    \includegraphics[width=0.35\linewidth]{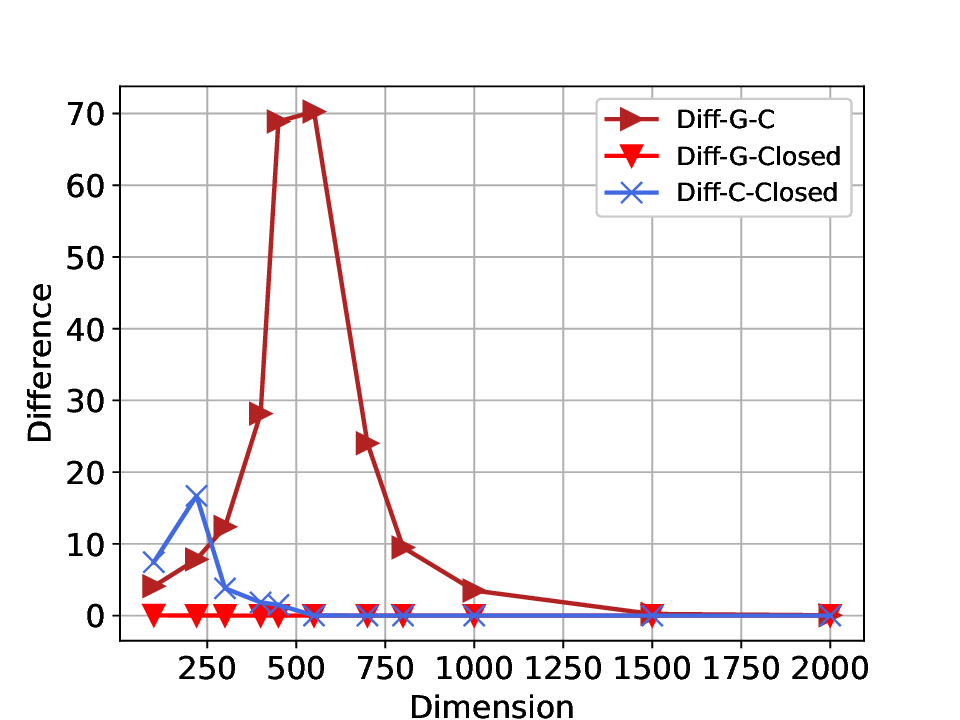}
    }
    \subfigure[]
    {
    \includegraphics[width=0.35\linewidth]{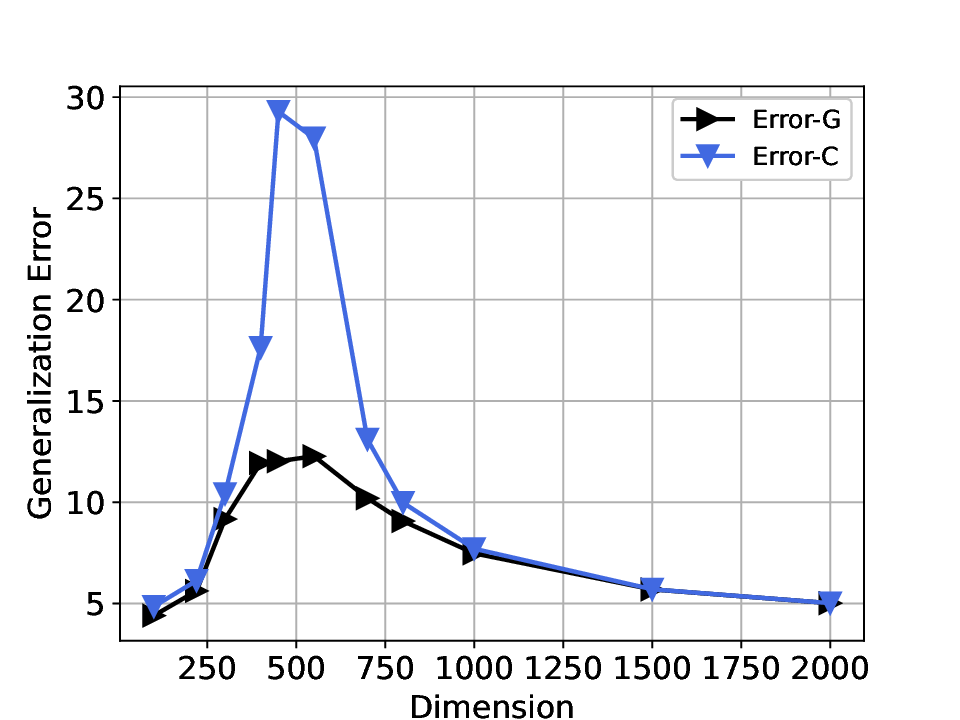}
    }
    \caption{(a) Difference between global and centralized models plotted against increasing dimension. (b) Generalization error with respect to dimension.}
    \label{fig:LR}
\end{figure}

\subsection{Linear Classification}
For linear classification, we have 10 compute nodes with 50 training samples at each. The dataset is generated as $y_{ij} = \text{sign}(x_{ij}^T w_i^*)$, where ground truth model is $w_i^*= w^* + z_i$, and $w^*$ is a Gaussian vector randomly chosen, $z_i$ is a Gaussian noise. The data matrix $X_i$ is a Gaussian matrix. This setting makes sure the datasets across compute nodes are different from each other, meanwhile they are not totally different such that there may be a non-empty global feasible set. 

We tested four models for linear classification. The global model (G) is trained exactly with Local-GD and logistic loss. The centralized model (C) is trained with gradient descent on the global dataset. The global model from Modified Local-GD (G-Mod) is trained with exponential loss and regularization term as $\lambda=0.0001$. The centralized SVM model (S) (max-margin solution) is obtained by solving problem (\ref{problem: centralized_LC}) via standard scikit-learn package. Note that centralized model and SVM model are the final trained model in the plots. The learning rate of (local) gradient descent is $\eta=0.01$. Since our theory claimed the convergence is established in direction, the difference here for two models $w_1, w_2$ is defined after normalization $\left \| w_1 / \|w_1\| - w_2 / \|w_2\| \right\|$.

In Fig. \ref{fig:LC_1}, we show the difference between global model from Local-GD and centralized model with different number of local steps. The model dimension is chosen as $d=1500$, ensuring it is globally overparameterized. The centralized model is trained with 20000 gradient descent steps. It is seen the difference can approach zero for all the $L$, and larger $L$ can result in faster convergence to the centralized model.

\begin{figure}[t!]
    \centering
    \subfigure[]
    {
    \includegraphics[width=0.35\linewidth]{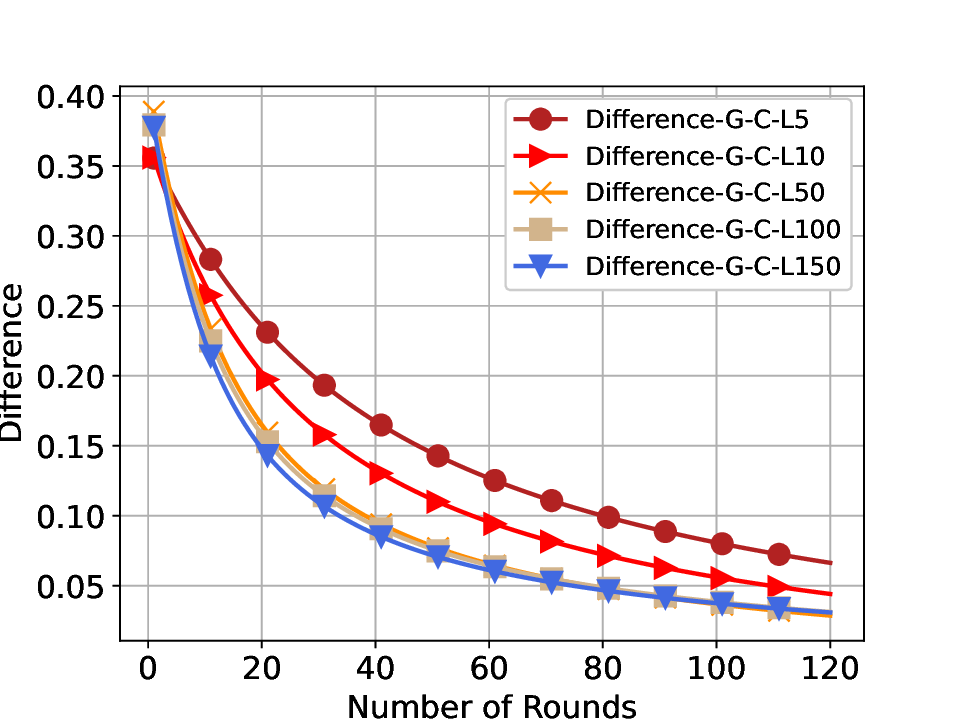}
    \label{fig:LC_1}
    }
    \subfigure[]
    {
    \includegraphics[width=0.35\linewidth]{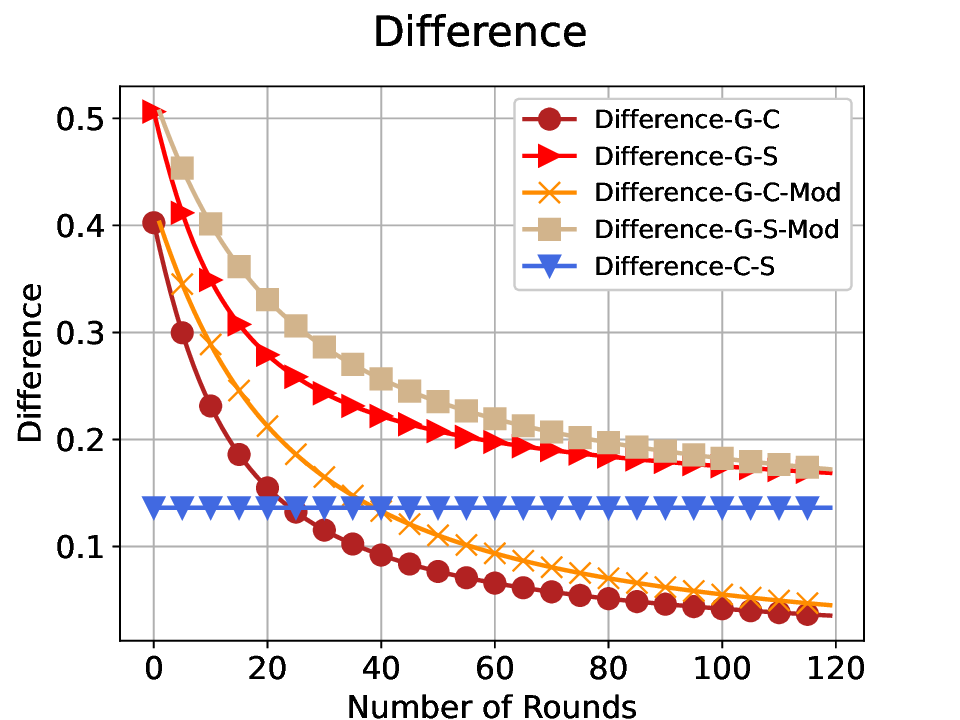}
    \label{fig:LC_a}
    }
     \subfigure[]
    {
    \includegraphics[width=0.35\linewidth]{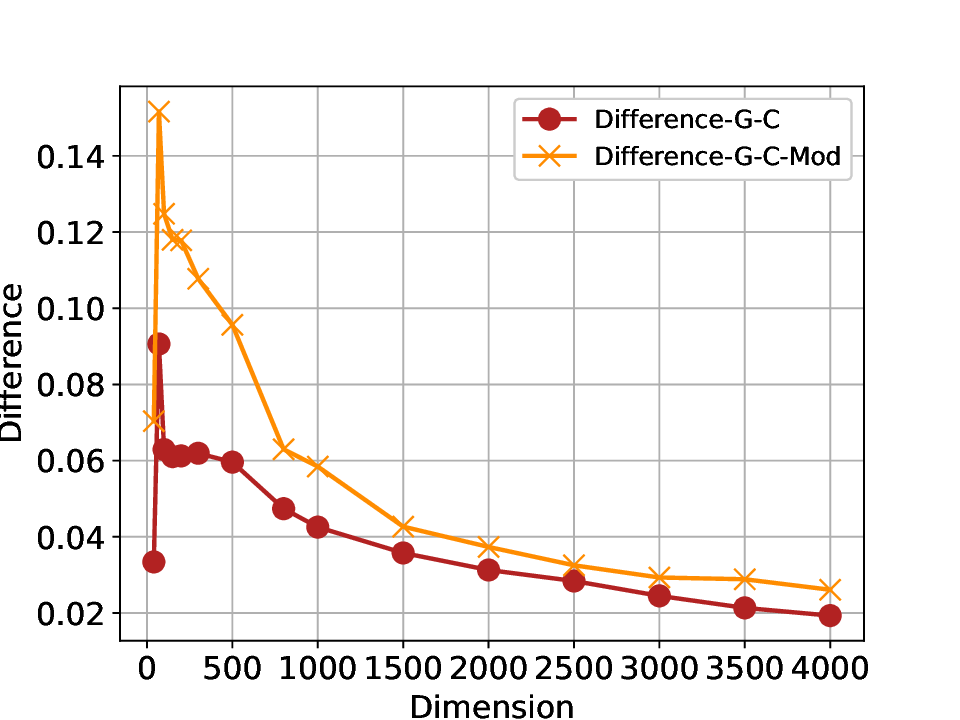}
    \label{fig:LC_b}
    }
    \subfigure[]
    {
    \includegraphics[width=0.35\linewidth]{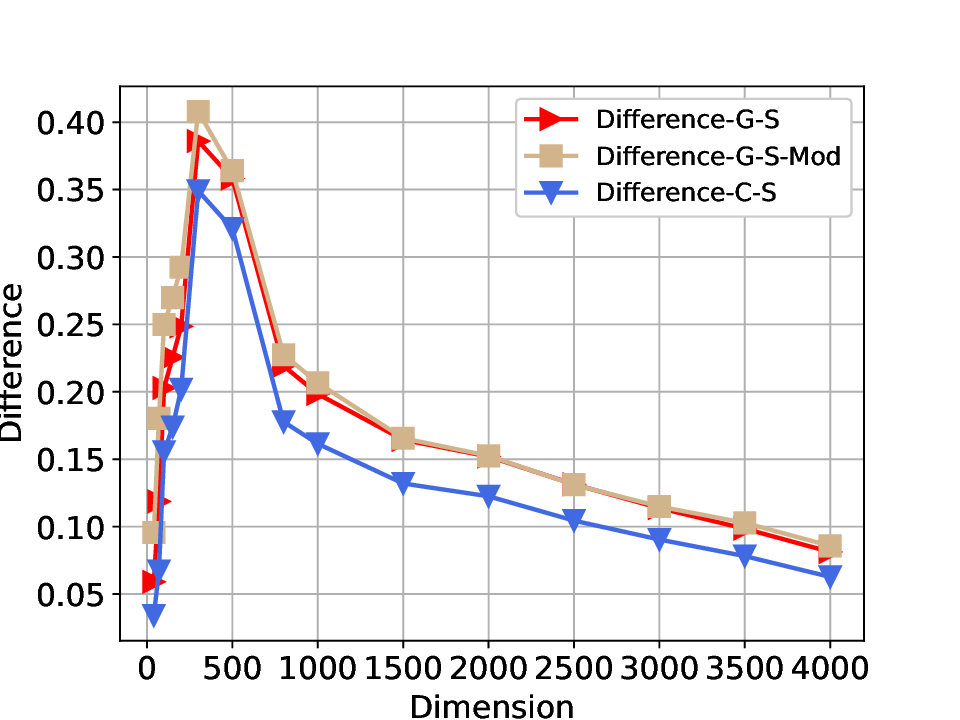}
    \label{fig:LC_c}
    }
    \subfigure[]
    {
    \includegraphics[width=0.35\linewidth]{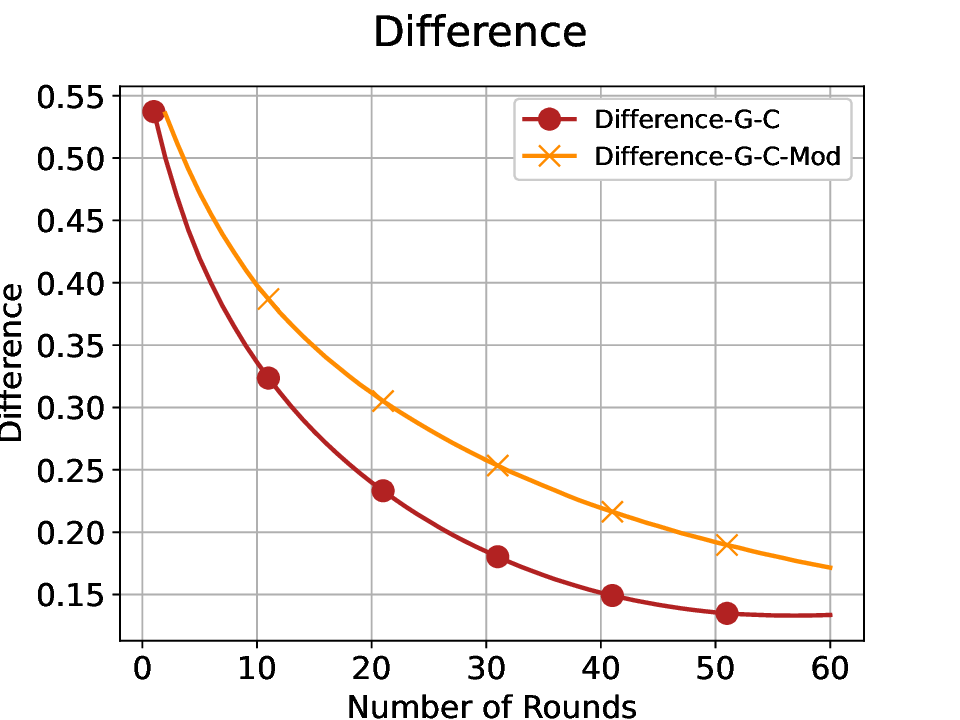}
    \label{fig:NN_a}
    }
    \subfigure[]
    {
    \includegraphics[width=0.35\linewidth]{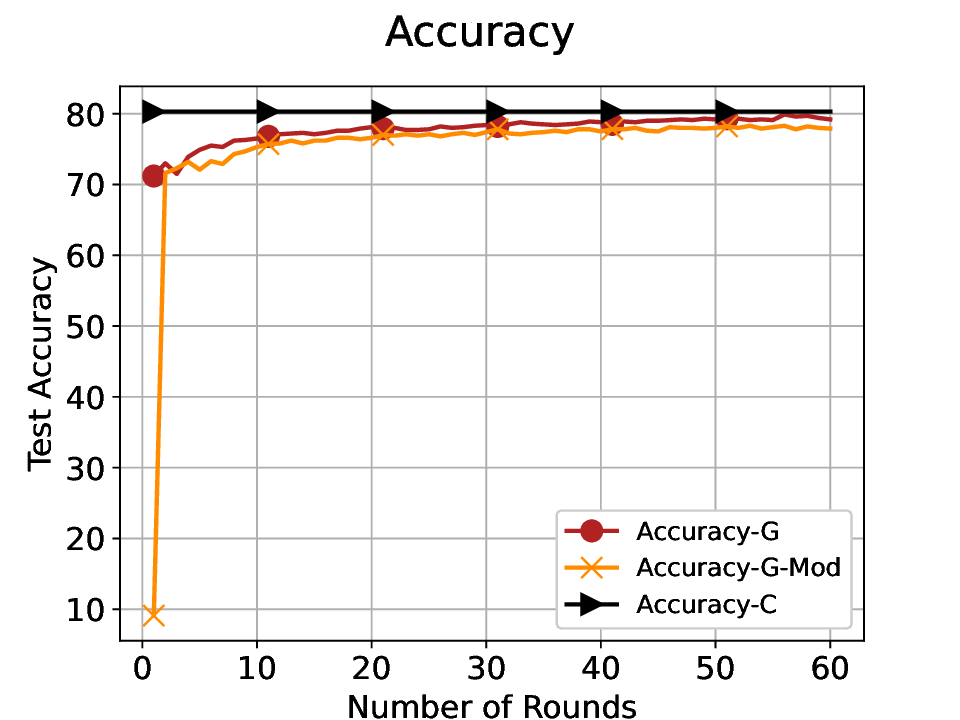}
    \label{fig:NN_b}
    }
    \caption{(a) Difference between global model and centralized model with $L$. (b) Difference between global model and centralized model with $R$. (c) Difference between global model and centralized model with $d$. (d) Difference from SVM model with $d$. (e) Difference between global linear layer and centralized linear layer with $R$. (f) Test accuracy of neural network fine-tuning.}
    \label{fig:LC}
    \vspace{-4mm}
\end{figure}

In Figs. \ref{fig:LC_a}, \ref{fig:LC_b}, \ref{fig:LC_c}, the number of local steps is fixed as $L=150$ for Local-GD and Modified Local-GD, and the number of communication rounds is fixed as $R=120$ for all the dimensions. Fig. \ref{fig:LC_a} shows the difference between these models with respect to the number of rounds $R$ when dimension is $d=1500$. We can see both global model and modified global model converges to the centralized model in direction, and the centralized model is close to the SVM model but there is small gap. Fig. \ref{fig:LC_b} displays the difference with respect to dimension $d$. It is seen the difference between global model and centralized model gradually decreases with larger dimensions. The modified global model is almost the same as the centralized model but the gap is slightly larger since it converges slower than vanilla global model with same number of rounds. Fig. \ref{fig:LC_c} shows the difference from SVM model with dimension. The gap between the models to SVM model also decreases with larger $d$.
    
\subsection{Fine-Tuning of Pretrained Neural Network}
\vspace{-0.2cm}

We further fine-tuned the ResNet50 model pretrained with ImageNet dataset on CIFAR10 dataset. Only the final linear layer is trained during the process, while the rest of model is fixed. The 50000 samples are distributed on 10 compute nodes. For $i$-th compute node, the half of local dataset belongs to the same class, and the other half consists of rest of 9 classes evenly, which forms a heterogeneous data distribution. The centralized model is trained with the whole CIFAR10 dataset. The models are trained with cross entropy loss and Local SGD. The learning rate is 0.01 and the batch size is 128. The number of local steps is $L=60$ and number of communication rounds is $R=60$. The centralized model is trained with the same learning rate for 3600 steps. We plot the difference between the linear layer and test accuracy with number of rounds in Fig. \ref{fig:NN_a} and \ref{fig:NN_b}. Again the difference is defined in direction. We can see the difference gradually decreases to a small error floor and the accuracy of global models and centralized model is very similar at last.

% \begin{figure}[t!]
%     \centering
    
%     \caption{ \small (a) Difference between global model and centralized model with communication rounds. (b) Test accuracy with communication rounds.}
%     \label{fig:NN}
%     \vspace{-4mm}
% \end{figure}

% \vspace{-0.4cm}

We put additional experimental results on linear classification with a heterogeneous Dirichlet distribution in Appendix \ref{appen: exp}.

\section{Conclusions}
\label{sec: conclusion}

\vspace{-0.2cm}

In this work we analyzed the implicit bias of GD in distributed setting, and characterized the dynamics of the global model trained from Local-GD and Local-SGD. We showed that Local-GD can converge to a centrally trained model for linearly separable data with a constant learning rate $O(1/L)$, and a Modified Local-GD can have the same convergence for a learning rate independent of $L$. Our analysis provided a new perspective why Local-GD works well in practice even with a large number of local steps on heterogeneous data.

% Acknowledgements should only appear in the accepted version.
% \section*{Acknowledgements}

% \textbf{Do not} include acknowledgements in the initial version of
% the paper submitted for blind review.

% If a paper is accepted, the final camera-ready version can (and
% usually should) include acknowledgements.  Such acknowledgements
% should be placed at the end of the section, in an unnumbered section
% that does not count towards the paper page limit. Typically, this will 
% include thanks to reviewers who gave useful comments, to colleagues 
% who contributed to the ideas, and to funding agencies and corporate 
% sponsors that provided financial support.

% In the unusual situation where you want a paper to appear in the
% references without citing it in the main text, use \nocite
% \nocite{langley00}

% \section*{Reproducibility statement}

% This paper is mainly a theoretical work. The assumptions 1-5 are clearly explained in the main text. The proofs of Section \ref{sec: LR-motivating} are included in Appendix \ref{appen: LR}. The proof of Theorem \ref{theorem-implicit-bias} is included in Appendix \ref{appen: LC-global}. The proofs of lemmas and theorems in Section \ref{sec: LC} are included in Appendix \ref{appen: LC}.

\bibliography{federated}
\bibliographystyle{tmlr}
%%%%%%%%%%%%%%%%%%%%%%%%%%%%%%%%%%%%%%%%%%%%%%%%%%%%%%%%%%%%%%%%%%%%%%%%%%%%%%%
%%%%%%%%%%%%%%%%%%%%%%%%%%%%%%%%%%%%%%%%%%%%%%%%%%%%%%%%%%%%%%%%%%%%%%%%%%%%%%%
% APPENDIX
%%%%%%%%%%%%%%%%%%%%%%%%%%%%%%%%%%%%%%%%%%%%%%%%%%%%%%%%%%%%%%%%%%%%%%%%%%%%%%%
%%%%%%%%%%%%%%%%%%%%%%%%%%%%%%%%%%%%%%%%%%%%%%%%%%%%%%%%%%%%%%%%%%%%%%%%%%%%%%%
\newpage
\appendix

\tableofcontents

\newpage

\newpage
\section{Additional Experiments}
\label{appen: exp}

\subsection{Linear Classification with Dirichlet Distribution}

In federated learning, the Dirichlet distribution is usually used to generate heterogeneous datasets across the compute nodes \citep{hsu2019measuring,chen2021fedbe,reguieg2023comparative}. For binary classification problem, the Dirichlet distribution Dir($\alpha$) is used to unbalance the positive and negative samples. In the experiments we have 10 compute nodes. We generate 500 samples as $y_i=\text{sign}(x_i^T w^*)$ for $i \in [500]$ and use Dir($\alpha$) to distribute the 500 samples across 10 compute nodes. Note that the number of samples at each compute node is not necessarily identical. Fig. \ref{fig:diri} shows performance of Local-GD for linear classification with different parameter $\alpha$ in Dirichlet distribution. The $\lambda$ is set to be 0.0001 and model dimension is fixed as $d=1500$. The number of local steps $L$ is 150 and number of communication rounds $R$ is 150. The learning rate is 0.01. The centralized model is trained with the same learning rate for 22500 steps. We can see the global model and modified global model still converge to the centralized model in direction and get similar test accuracy.

\begin{figure}[htbp]
    \centering
    \subfigure[Difference with $\alpha=0.5$.]{
    \includegraphics[width=0.33\linewidth]{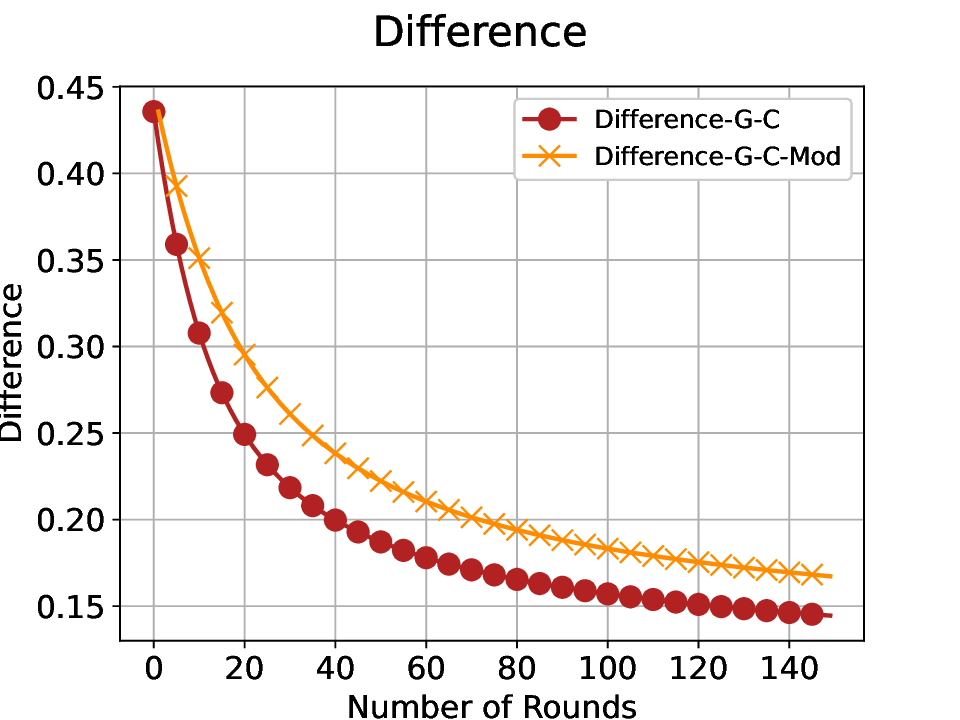}
    }
    \subfigure[Test Accuracy with $\alpha=0.5$.]{
    \includegraphics[width=0.33\linewidth]{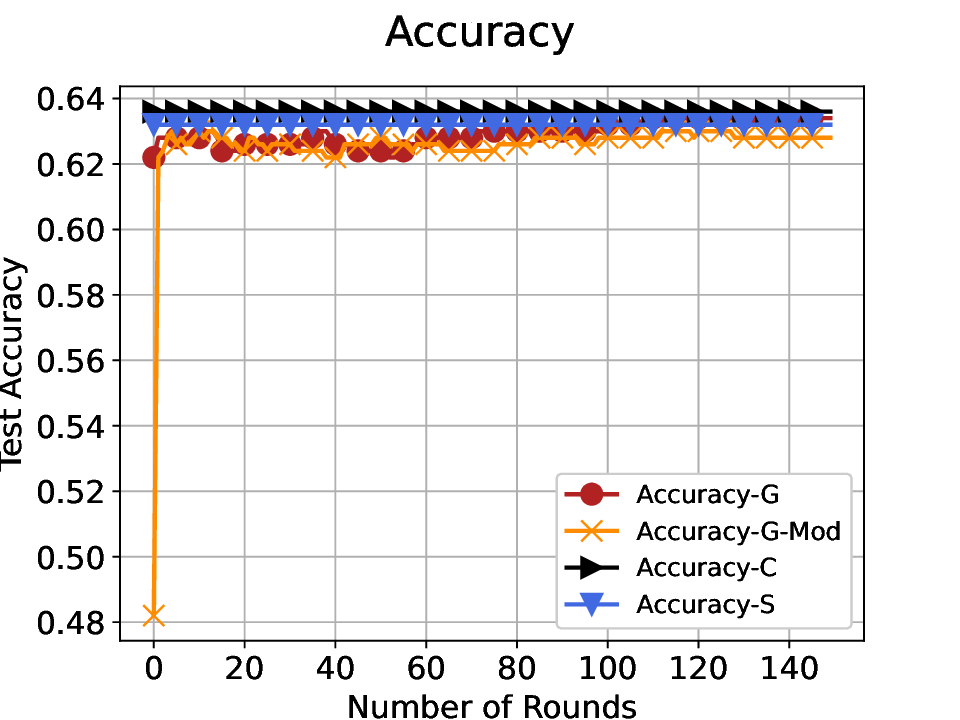}
    }
    \subfigure[Difference with $\alpha=0.3$.]{
    \includegraphics[width=0.33\linewidth]{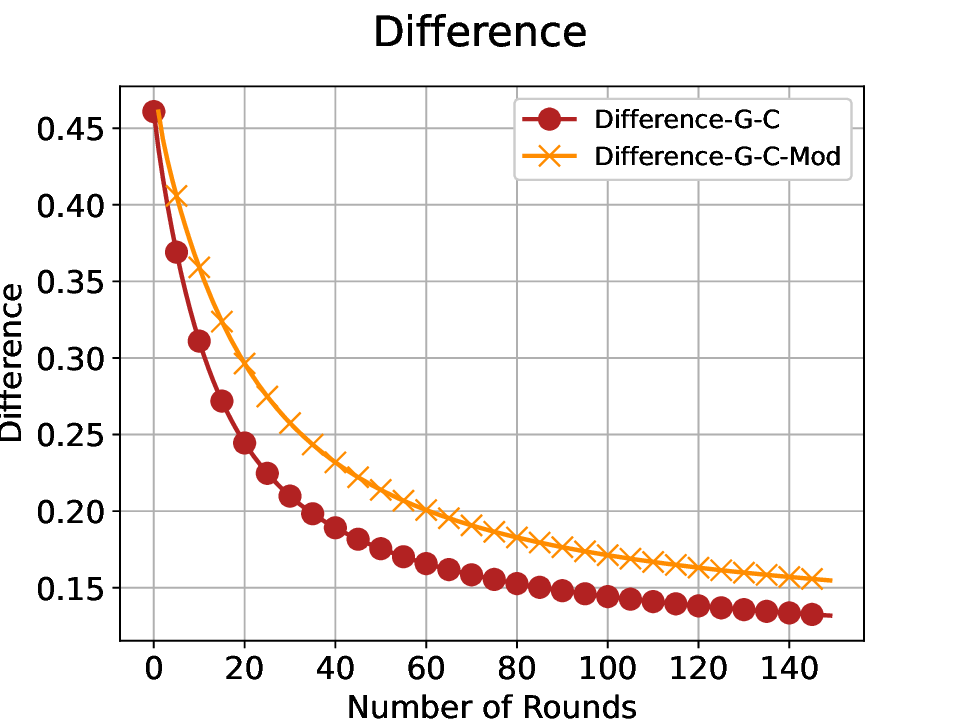}
    }
    \subfigure[Test Accuracy with $\alpha=0.3$.]{
    \includegraphics[width=0.33\linewidth]{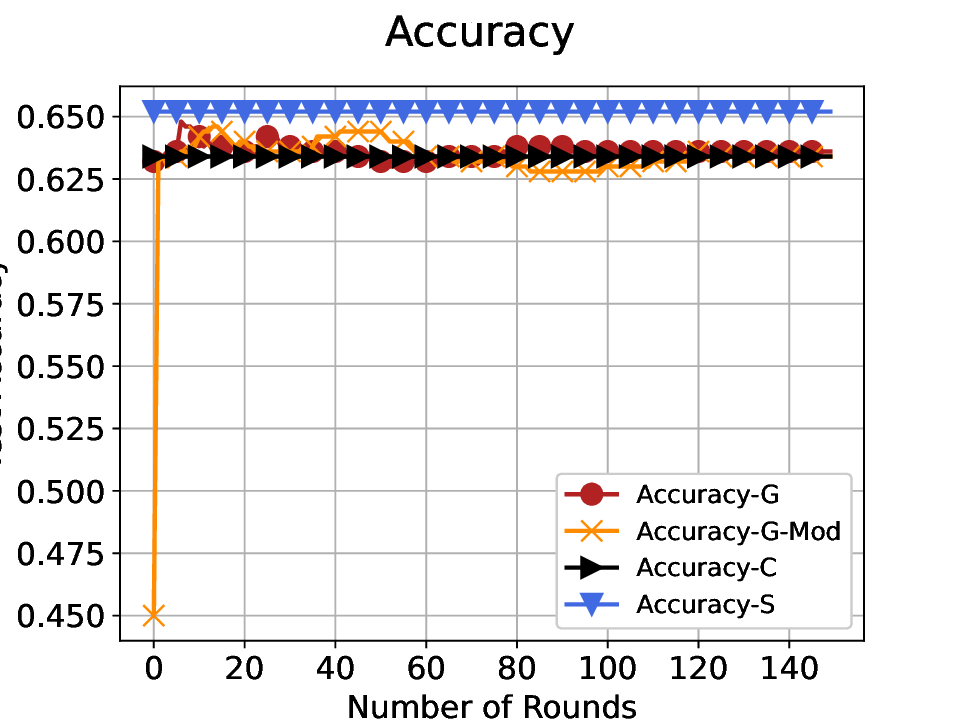}
    }
    \caption{Local-GD on linear classification with Dirichlet distribution.}
    \label{fig:diri}
\end{figure}

\newpage
\section{Local-GD for Linear Regression in Overparameterized Regime}
\label{appen: LR}
In this section we give a extended description of Section \ref{sec: LR-motivating} about linear regression in overparameterized regime.

\subsection{Setting}
 The behavior of linear regression is very well-understood in high-dimensional statistics; and we can clearly convey our key message based on this fundamental setting. 

At each compute node $i$, the dataset $S_i$ consists of $N$ tuples of samples and their corresponding labels, $(x,y) \in \mathbb{R}^d \times \mathbb{R}$. We assume the label $y_{ij}$ is generated by 
\begin{align}
\label{eq: generate_LR}
    y_{ij} = x_{ij}^T w_i^* + z_{ij}
\end{align}
where $w_i^* \in \mathbb{R}^d$ is the ground truth model at $i$-th compute node, and $z_{ij}$ is the added noise. Denote $X_i = [x_{i1}, x_{i2}, \dots, x_{iN}]^T \in \mathbb{R}^{N \times d}$ as the data matrix at $i$-th compute node, and $y_i = [y_{i1}, y_{i2}, \dots, y_{iN}] \in \mathbb{R}^N$ as the label vector, $z_i \in \mathbb{R}^N$ as the noise vector. In heterogeneous setting, the $w_i^*$ can be very different to each other. Note that the convergence to centralized model does not rely on the generative model. We just make this assumption on generative model for deriving a more clear form of the aggregated global model.

\textbf{Algorithm.} At each round, the aggregator sends the global model $w_0$ to all the compute nodes. Each compute node minimizes the squared loss $f_i(w_i) =  \frac{1}{2N} \|y_i - X_i w_i \|^2$ by a large number of gradient descent steps {\em until convergence}. Then each compute node sends back the local model and the aggregator aggregates all the local models to get the updated global model. The detailed algorithm is Local-GD in Algorithm \ref{alg:LocalGD} with $f_i(w_i)$ replaced in the update. Since minimizing squared loss is a quadratic problem, it is expected to reach convergence locally with a small number of gradient descent steps. 

% \begin{algorithm}[htbp]
% \begin{algorithmic}[1]
%     \caption{$\mathsf{LocalUpdate}(w_0^k)$ for linear regression.} \label{alg:localupdate-LR}
% \STATE \textbf{Input:} an initial point $w_0^k$, the learning rate $\eta$, and the number of local steps $L$.
% \STATE Initialize $w_i^{k,0} = w_0^k$.
% \FOR{$l=0$ to $L-1$}
% \STATE $w_i^{k,l+1} = w_i^{k, l} - \eta \nabla f_i(w_i^{k,l})$, where $f_i(w_i) =\frac{1}{2} \|y_i - X_i w_i \|^2$.
% \ENDFOR
% \STATE \textbf{Output:} $\mathsf{LocalUpdate}(w_0^k):= w_i^{k, L}$.
% \end{algorithmic}
% \end{algorithm}

\subsection{Implicit Bias of Local GD in Linear Regression}

For each local problem, when the dimension of the model is larger than the number of samples at each compute node ($d > N$), i.e., locally overparameterized, there are multiple solutions corresponding to zero squared loss. However, gradient descent will lead the model converge to a specific solution, which corresponds to a minimum Euclidean distance to the initial point \citep{gunasekar2018characterizing, evron2022catastrophic}. Formally, the solution $w_i^{k+1}$ obtained at $k$-th round and $i$-th node will converge to the solution of the optimization problem
\begin{align}
    & \min_{w_i} \quad \| w_i - w_0^k \|^2 \quad \text{s.t.} \quad X_i w_i = y_i \, .
\end{align}

We can obtained the closed form solution of this optimization problem as (see Proof of Lemma \ref{lemma: Closed-Form in LR} in Appendix \ref{appen: proof-lemma-LR})
\begin{align}
\label{eq: local-LR}
    w_i^{k+1} = & \left( I - X_i^T (X_i X_i^T)^{-1} X_i \right) w_0^k + X_i^T (X_i X_i^T)^{-1} y_i \notag \\ 
    = & \left( I - X_i^T (X_i X_i^T)^{-1} X_i \right) w_0^k \notag\\
    & + X_i^T (X_i X_i^T)^{-1} X_i w_i^* + X_i^T (X_i X_i^T)^{-1} z_i.
\end{align}
Denote $P_i \triangleq X_i^T (X_i X_i^T)^{-1} X_i$ and $X_i^{\dag} \triangleq X_i^T (X_i X_i^T)^{-1}$. The local model can be rewritten as $w_i^{k+1} = (I - P_i) w_0^k + P_i w_i^* + X_i^{\dag} z_i$. We observe that $P_i$ is the projection operator to the row space of $X_i$, and $X_i^{\dag}$ is the pseudo inverse of $X_i$. After one round of iterations, the local model is actually an interpolation between the initial global model $w_0^k$ at this round and the ground-truth model $w_i^*$, plus a noise term. We then obtain the closed form of global model by aggregation. After many rounds of communication, we can obtain the final trained global model from Local-GD.

\begin{lemma}
\label{lemma: Closed-Form in LR}
    When the local overparameterized linear regression problems are exactly solved by gradient descent, then after $K$ rounds of communication, the global model $w_0^K$ obtained from Local-GD is
    \begin{align}
        w_0^K = & (I - \bar{P})^K w_0^0 + \sum_{k=0}^{K-1} (I - \bar{P})^k (\bar{Q} + \bar{Z}),
    \end{align}
    where $\bar{P} = \frac{1}{M} \sum_{i=1}^M P_i, \bar{Q} = \frac{1}{M} \sum_{i=1}^M P_i w_i^*, \bar{Z} = \frac{1}{M} \sum_{i=1}^M X_i^{\dag} z_i$.
\end{lemma}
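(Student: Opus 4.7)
The plan is to derive the closed form via an affine one-step recursion for $w_0^k$ and then unroll it. The essential algebraic content is already packaged in the closed-form expression \eqref{eq: local-LR} for a single local solve; the remaining work is to observe that aggregation preserves the affine structure so that the entire algorithm looks like iterating $x_{k+1} = A x_k + b$ in $\mathbb{R}^d$.

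First, I would substitute $w_i^{k+1} = (I - P_i) w_0^k + P_i w_i^* + X_i^{\dag} z_i$ into the aggregation rule $w_0^{k+1} = \frac{1}{M} \sum_{i=1}^M w_i^{k+1}$ of Algorithm~\ref{alg:LocalGD}. Because every term depends on $w_0^k$ only through the linear operator $I - P_i$, interchanging the sum with the (vector) operations and invoking the definitions of $\bar{P}$, $\bar{Q}$, $\bar{Z}$ yields the one-step recursion
\begin{align*}
    w_0^{k+1} = (I - \bar{P})\, w_0^k + \bar{Q} + \bar{Z}.
\end{align*}
This is the key reduction: the global iterate evolves under a fixed affine map that does not depend on $k$.

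Second, I would establish the stated identity by induction on $K$. The base case $K=0$ is immediate. For the inductive step, assuming the formula holds at step $K$, I would apply the affine recursion once more:
\begin{align*}
    w_0^{K+1} = (I - \bar{P})\!\left[(I - \bar{P})^K w_0^0 + \sum_{k=0}^{K-1} (I - \bar{P})^k (\bar{Q} + \bar{Z})\right] + (\bar{Q} + \bar{Z}),
\end{align*}
distribute $(I - \bar{P})$ through the sum to shift the index to $k=1,\dots,K$, and absorb the trailing $\bar{Q} + \bar{Z}$ as the $k=0$ term. This gives the formula at round $K+1$.

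I do not anticipate any real obstacle. The only non-trivial ingredient, namely the minimum-distance-to-initialization characterization of each local overparameterized solve that produces \eqref{eq: local-LR}, has already been justified by the implicit bias of gradient descent on the squared loss; once that closed form is in hand, Lemma~\ref{lemma: Closed-Form in LR} is the standard unrolling of an affine fixed-point iteration. The only care needed is to keep the noise and signal terms aligned through the aggregation so that they naturally combine into $\bar{Q} + \bar{Z}$, which is simply a matter of interchanging a finite sum with the linear operations.
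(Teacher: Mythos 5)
Your proposal is correct and follows essentially the same route as the paper: substitute the local closed form $w_i^{k+1} = (I-P_i)w_0^k + P_i w_i^* + X_i^\dagger z_i$ into the averaging step to obtain the affine recursion $w_0^{k+1} = (I-\bar{P})w_0^k + \bar{Q} + \bar{Z}$, then unroll. The only difference is cosmetic: the paper's appendix also rederives the local minimum-distance solution via Lagrange multipliers before aggregating (and states the unrolled formula directly rather than by explicit induction), whereas you take that closed form as given from the main text, which is a fair division of labor.
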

Note that $\bar{P}, \bar{Q}, \bar{Z}$ are constant after the data is generated. Since we only know the $\{X_i, y_i\}_{i=1}^M$ in the training process, we can also write it as 
\begin{align}
\label{eq: LR-final}
    w_0^K = & (I - \bar{P})^K w_0^0 + \sum_{k=0}^{K-1} (I - \bar{P})^k \bar{Y},
\end{align}
where $\bar{Y} = \frac{1}{M} \sum_{i=1}^M X_i^{\dag} y_i$. Then we can directly get the final model from the training set. 

\textbf{Singularity of $\bar{P}$.} If $\bar{P}$ is invertible, we can further simplify the form of global model. However, since $P_i \in \mathbb{R}^{d\times d}$ is the projection operator onto row space of $X_i$, its rank is at most N. The $\bar{P}$ is the average of $P_i$s, thus its rank is at most $MN$. Note that we consider the overparameterized regime both locally and globally, i.e., $d \gg MN$. Then $\bar{P}$ is singular, and the sum $\sum_{k=0}^{K-1} (I - \bar{P})^k$ approaches $K I$ when $d$ becomes very large. We cannot get more properties of the final global model from (\ref{eq: LR-final}), but we can compare it to the centralized model trained with all of the data.

\subsection{Convergence to Centralized Model}

Let $X_c = [X_1^T,\dots, X_M^T]^T \in \mathbb{R}^{MN \times d}$ be the data matrix consisting of all the local data, and $y_c = [y_1^T,\dots, y_M^T]^T \in \mathbb{R}^{MN \times 1}$ be the label vector consisting of the local labels. If we train the centralized model from initial point 0 with squared loss, then the gradient descent will lead the model to the solution of the optimization problem
\begin{align}
\label{problem:centralized_LR}
    & \min_{w} \quad \| w \|^2 \quad \text{s.t.} \quad X_c w = y_c 
\end{align}
We can write the closed form of centralized model as $w_c = X_c^T (X_c X_c^T)^{-1} y_c$.

Due to the constraint in problem (\ref{problem:centralized_LR}), for each compute node $i$, we have $X_i w_c = y_i$. We replace $y_i$ in the local model (\ref{eq: local-LR}), then we have
\begin{align}
    w_i^{k+1} - w_c = (I - P_i) (w_0^k - w_c). 
\end{align}
The RHS is projecting the difference between global model and centralized model onto null space of $X_i$. After averaging all the local models at the aggregator, we have
\begin{align}
    w_0^{k+1} - w_c = (I - \bar{P}) (w_0^k - w_c). 
\end{align}

In the training process the difference between global model and centralized model is iteratively projected onto the null space of span of row spaces of $X_i$s. It implies that the difference on the span of data matrix gradually decreases until zero. Based on the evolution of the difference, we can prove the Theorem \ref{theorem: LR} and we restate it here:
\begin{theorem}
        For the linear regression problem, suppose the initial point $w_0^0$ is 0 and $d > MN$ and the minimum eigenvalue of $\bar{P}$, $\lambda_{\min}$ is larger than 0, then the global model obtained by Local-GD, $w_0^K$, converges to the centralized solution $w_c$ as the number of communication rounds $K \to \infty$ as $\|w_0^K - w_c\| \leq (1 - \lambda_{\min})^K \|w_c\|$.
\end{theorem}
The proof is in Appendix \ref{appen: proof-theorem-LR}. The key step is to show the initial difference is already in the data space, and no residual in the null space of row spaces of $X_i$s. The convergence to the centralized model is at exponential rate.

Due to the linearity of the regression problem, we can theoretically show the global model can exactly converge to the centralized model with implicit bias on overparameterized regime. Note that the proof does not rely on the generative model and assumption on data heterogeneity. It implies that, even if we use a large number of local steps to exactly solve the local problems on very heterogeneous data,  the performance of Local-GD is equivalent to train a model with all the data in one place.

\subsection{Proofs in Linear Regression}

\subsubsection{Proof of Lemma \ref{lemma: Closed-Form in LR}}
\label{appen: proof-lemma-LR}

At each compute node, the local model converges to the solution of problem
\begin{align}
    & \min_{w_i} \quad \| w_i - w_0^k \|^2 \quad \text{s.t.} \quad X_i w_i = y_i \, .
\end{align}
Using Lagrange multipliers, we can write the Lagrangian as
\begin{align}
    \frac{1}{2}  \| w_i - w_0^k \|^2 + \beta^T (X_i w_i - y_i)
\end{align}
Setting the derivative to 0, we know the optimal $\tilde{w}_i$ satisfies
\begin{align}
    \tilde{w}_i - w_0^k + X_i^T \beta = 0,
\end{align}
and then 
\begin{align}
    \tilde{w}_i = w_0^k - X_i^T \beta.
\end{align}
Also by the constraint $y_i = X_i \tilde{w}_i$, we can get
\begin{align}
    y_i = X_i w_0^k - (X_i X_i^T) \beta.
\end{align}
Since the model is overparameterized ($d>N$), $X_i X_i^T \in \mathbb{R}^{d \times d}$ is invertible. Then we have
\begin{align}
    \beta = - (X_i X_i^T)^{-1} (y_i - X_i w_0^k).
\end{align}
Plugging the $\beta$ back, we can get the closed form solution as
\begin{align}
    \tilde{w}_i =  w_0^k + X_i^T (X_i X_i^T)^{-1} (y_i - X_i w_0^k).
\end{align}
We update the local model $w_i^{k+1} = \tilde{w}_i$.

We can also write the closed form solution as
\begin{align}
\label{eq: update_local_LR}
    w_i^{k+1} = & w_0^k + X_i^T (X_i X_i^T)^{-1} (y_i - X_i w_0^k) \notag \\
    = & \left( I - X_i^T (X_i X_i^T)^{-1} X_i \right) w_0^k + X_i^T (X_i X_i^T)^{-1} y_i
\end{align}

If we plug in the generative model $y_i = X_i w_i^* + z_i$, then the solution is
\begin{align}
    w_i^{k+1} = & \left( I - X_i^T (X_i X_i^T)^{-1} X_i \right) w_0^k + X_i^T (X_i X_i^T)^{-1} X_i w_i^* + X_i^T (X_i X_i^T)^{-1} z_i \notag \\
    = &  (I - P_i) w_0^k + P_i w_i^* + X_i^{\dag} z_i.
\end{align}
where $P_i = X_i^T (X_i X_i^T)^{-1} X_i$ is the projection operator to the row space of $X_i$, and $X_i^{\dag} = X_i^T (X_i X_i^T)^{-1}$ is the pseudo inverse of $X_i$. It is an interpolation between the initial global model $w_0^k$ and the local true model $w_i^*$, plus a noise term.

After aggregating all the local models, the global model is
\begin{align}
    w_0^{k+1} = & \frac{1}{m} \sum_{i=1}^m (I - P_i) w_0^k + \frac{1}{m} \sum_{i=1}^m P_i w_i^* + \frac{1}{m} \sum_{i=1}^m X_i^{\dag} z_i \notag \\
    = & (I - \bar{P}) w_0^k + \bar{Q} + \bar{Z},
\end{align}
where $\bar{P} = \frac{1}{m} \sum_{i=1}^m P_i, \bar{Q} = \sum_{i=1}^m P_i w_i^*, \bar{Z} = \frac{1}{m} \sum_{i=1}^m X_i^{\dag} z_i$.

After $K$ rounds of communication, the global model is
\begin{align}
    w_0^K = & (I - \bar{P})^K w_0^0 + \sum_{k=0}^{K-1} (I - \bar{P})(\bar{Q} + \bar{Z}).
\end{align}
If we start from $w_0^0 = 0$, then the solution will converge to $\sum_{k=0}^{K-1} (I - \bar{P})(\bar{Q} + \bar{Z})$.

\subsubsection{Proof of Theorem \ref{theorem: LR}}
\label{appen: proof-theorem-LR}

We know the difference between global model and centralized model is iteratively projected onto the null space of span of row spaces of $X_i$s:
\begin{align}
    w_0^{k+1} - w_c = (I - \bar{P}) (w_0^k - w_c). 
\end{align}

We can formally describe it as follows. Since the problem is overparameterized globally, we can assume each $X_i$ has full rank $N$. We apply singular value decomposition (SVD) to $X_i$ as $X_i = U_i \Sigma_i V_i^T$, where $U_i \in \mathbb{R}^{N\times N}, V_i \in \mathbb{R}^{d \times N}$. Then $P_i = X_i^T (X_i X_i^T)^{-1} X_i = V_i V_i^T$, which is the projection matrix to the row space of $X_i$.

We apply eigenvalue decomposition on $\bar{P}$ to get $\bar{P} = Q \Sigma Q^T$, where $Q \in \mathbb{R}^{d \times n^{\prime}}$ and $n^{\prime}$ is the rank of $\bar{P}$. It satisfies $N \leq n^{\prime} \leq MN$. Since $\bar{P}$ is a linear combination of $P_i$s, the space of column space of $Q$ is the space spanned by all the vectors $v_{ij}, i=1,\dots,M, j=1, \dots, N$.

We also construct a matrix $Q^{\prime} \in \mathbb{R}^{d \times (d-n^{\prime})}$, which consists of orthonomal vectors perpendicular to $Q$. We can project the difference onto column space of $Q$ and $Q^{\prime}$ respectively.
\begin{align}
    & Q^T (w_0^{k+1} - w_c) = Q^T (I - Q\Sigma Q^T) (w_0^k - w_c) = (I - \Sigma) Q^T (w_0^k - w_c) \notag \\
    & Q^{\prime T} (w_0^{k+1} - w_c) = Q^{\prime T} (I - Q\Sigma Q^T) (w_0^k - w_c) = Q^{\prime T} (w_0^k - w_c)  
\end{align}

After $K$ rounds of communication, we can decomposite $w_0^K - w_c$ into two parts:
\begin{align}
    w_0^K - w_c = QQ^T (w_0^K - w_c) + Q^{\prime} Q^{\prime T} (w_0^K - w_c). 
\end{align}
Then we can obtain
\begin{align}
    w_0^K - w_c = & QQ^T (w_0^K - w_c) + Q^{\prime} Q^{\prime T} (w_0^K - w_c) \notag \\ 
    = & Q (I - \Sigma)^K Q^T (w_0^0 - w_c) + Q^{\prime} Q^{\prime T} (w_0^0 - w_c). \notag
\end{align}
It shows the initial difference on the column space of $Q$ continues to decrease until zero if $K$ is sufficiently large. And the initial difference on the null space of $Q$ remains constant.

To show the difference $w_0^K - w_c$ goes to zero entirely, we just need to choose an initial point such that initial difference is on the column space of $Q$. When we choose $w_0^0 = 0$, the initial difference is $w_c$ itself. Moreover, the centralized solution $w_c = X_c^T (X_c X_c^T)^{-1} y_c$ exactly lies in the data space spanned by vectors $\{v_{ij}\}_{i=1,j=1}^{M,N}$ since it is a linear combination of columns of $X_c^T$. So if we start from $w_0^0 = 0$, then $w_0^K - w_c$ will go to zero when $K$ is sufficiently large.

When starting from 0, the difference between the global model and the centralized model becomes

\begin{align}
    \| w_0^K - w_c \|^2 = & \| Q(I - \Sigma)^K Q^T w_c \|^2 \notag \\
    = & \left( Q(I - \Sigma)^K Q^T w_c \right) ^T \left( Q(I - \Sigma)^K Q^T w_c \right) \notag \\
    = & \left ( Q^T w_c \right )^T (I - \Sigma)^{2K} \left ( Q^T w_c \right ).
\end{align}

Since $I - \Sigma$ is a diagonal matrix, we can get
\begin{align}
    \| w_0^K - w_c \|^2 \le (1 - \lambda_{\min})^{2K} \| Q^T w_c \|^2,
\end{align}
where $\lambda_{\min}$ is the minimum eigenvalue of matrix $\bar{P}$. Also since $Q$ is an orthogonal matrix, we have $\|Q^T w_c\|^2 = \|w_c\|^2$. Then we can get
\begin{align}
    \| w_0^K - w_c \| \le (1 - \lambda_{\min})^K \|w_c\|.
\end{align}

It shows the difference between trained global model and centralized model converge to zero at an exponential rate.

\newpage
\section{Proofs of Implicit Bias for Linear Classification in Section \ref{sec: LC-global}}
\label{appen: LC-global}
We give the detailed proofs of Theorem \ref{theorem-implicit-bias} in this section. The proof framework is inspired by the analysis of implicit bias of SGD \citep{nacson2019stochastic}. Intuitively, we can regard one local dataset as a ``batch'' in SGD for sampling without replacement. But we perform multiple gradient steps in the same ``batch'', not just one step of gradient descent. The challenge is to handle local steps in the same local dataset and the aggregation after one round of local training. Here we restate the Theorem \ref{theorem-implicit-bias}.

\begin{theorem}
    Under assumptions \ref{assumption_LC}, \ref{assumption_loss}, \ref{assumption_tail}, if the learning rate satisfies $\eta \leq \min \left( \frac{1}{2L \sigma_{max}^2 \beta},  \frac{\gamma^2}{4L \sigma_{\max}^3 \beta (\gamma + \sigma_{\max})}\right)$, then for the process of Local-GD, we have,
    \begin{itemize}
    \itemsep0em 
        \item \textbf{Claim 1}: Every data point is classified correctly finally: $\lim_{k \to \infty} x_s^T w_0^k = \infty, \, \forall s \in S$.
        \item \textbf{Claim 2}: The global model obtained from Local-GD will behave as
    \begin{align}
        w_0^{k} = \log (Lk) \hat{w} + \rho^{k}, \quad \text{ and,   }\quad          \left \|\frac{w_0^k}{\|w_0^k \|} - \frac{\hat{w}}{\| \hat{w}\|} \right \| = O\left(\frac{1}{\log Lk}\right)
    \end{align}
    and $\| \rho^k \| < \infty$  for all $k$. This implies, the normalized global model converges to the global max-margin solution.
    \item \textbf{Claim 3}: The loss function $f(w_0^k)$ decreases to zero as $f(w_0^k) = O\left(\frac{1}{Lk}\right)$.
    \end{itemize}
\end{theorem}

For the three claims in Theorem \ref{theorem-implicit-bias}, we will give separable (but sequential) proofs below.In the proofs of linear classification, for ease of notation, we redefine the samples $y_{s}x_{s}$ to $x_{s}$ to subsume the labels.

\subsection{Proof of Claim 1}

In this proof, we rely on the key property of linearly separable data.

\begin{lemma}[Lemma 2 and (17) in \citet{nacson2019stochastic}]
\label{lemma-nacson1}
Suppose that Assumptions \ref{assumption_LC} and \ref{assumption_loss} hold. For any $w \in \mathbb{R}^d$,
\[
\|\nabla f(w)\| \geq \frac{\gamma}{M} \sqrt{\sum_{s \in S} [g'(x_s^T w)]^2}.
\]
\end{lemma}

\begin{lemma} 
\label{lemma-claim1}
Suppose that Assumptions \ref{assumption_LC} and \ref{assumption_loss} hold and $k \in \mathbb{N}$. Then we have
\begin{align}
    \| w_i^{k,l} - w_0^{k} + \eta \left( l \nabla f_i(w_0^{k}) \right) \| \leq \frac{\eta^2 L \sigma_{\max}^3 \beta M l}{\gamma(1-l \eta \beta \sigma_{\max}^2)} \| \nabla f(w_0^k) \|.
\end{align}
\begin{align}
     \| w_i^{k,l} - w_0^k \| \leq \frac{\eta L \sigma_{\max} M}{\gamma (1 - l \eta \beta \sigma_{\max}^2)} \| \nabla f(w_0^k) \|.
\end{align}
\begin{align}
    \| \nabla f(w_i^{k,l}) - \nabla f(w_0^k) \| \leq \frac{\eta L \sigma_{\max}^3 \beta M}{\gamma (1 - l \eta \beta \sigma_{\max}^2)} \| \nabla f(w_0^k) \|.
\end{align}

\end{lemma}
The proof can be seen in Section \ref{proof-lemma-claim1}.

Note that $f(w) = \frac{1}{M} \sum_{i=1}^M f_i(w) = \frac{1}{M} \sum_{s \in S} g(x_s^T w)$, and $g(u)$ is a $\beta$-smooth function from Assumption \ref{assumption_loss}. Then $f(w)$ is a $\frac{\beta \sigma^2_{\max}}{M}$-smooth function. Then we can get
\begin{align}
\label{eq-fw0}
& f(w_0^{k+1}) - f(w_0^{k}) - \frac{\sigma_{\max}^2 \beta}{2M} \|w_0^{k+1} - w_0^{k}\|^2 \notag \\
\leq & 	\langle \nabla f(w_0^{k}), (w_0^{k+1} - w_0^{k}) \rangle \notag \\
= & \langle \nabla f(w_0^k), w_0^{k+1} - w_0^k - \eta L \nabla f(w_0^k) + \eta L \nabla f(w_0^k) \rangle \notag \\
\leq & -\eta L \|\nabla f(w_0^k) \|^2 + \| \nabla f(w_0^k) \| \| w_0^{k+1} - w_0^k + \eta L \nabla f(w_0^k) \|,
\end{align}
where the second inequality is from Cauchy-Schwarz inequality.

For the second term, we have
\begin{align}
    & \| w_0^{k+1} - w_0^k + \eta L \nabla f(w_0^k) \| \notag \\
    = & \| \frac{1}{M} \sum_{i=1}^M w_i^{k+1} - w_0^k + \eta L \frac{1}{M} \sum_{i=1}^M \nabla f_i(w_0^k) \| \notag \\
    \leq & \frac{1}{M} \sum_{i=1}^M \| w_i^{k+1} - w_0^k + \eta L \nabla f_i(w_0^k) \| \notag \\
    \leq & \frac{1}{M}\sum_{i=1}^M \frac{\eta^2 L^2 \sigma_{\max}^3 \beta M}{\gamma(1-L \eta \beta \sigma_{\max}^2)} \| \nabla f(w_0^k) \| \notag \\
    = & \frac{\eta^2 L^2 \sigma_{\max}^3 \beta M}{\gamma(1-L \eta \beta \sigma_{\max}^2)} \| \nabla f(w_0^k) \|
\end{align}
where the first inequality is triangle inequality and second inequality is from Lemma \ref{lemma-claim1}.

We also have
\begin{align}
    \| w_0^{k+1} - w_0^k \|^2 = & \| \frac{1}{M} \sum_{i=1}^M w_i^{k+1} - w_0^k \|^2 \notag \\
    \leq & \frac{1}{M}\sum_{i=1}^M \| w_i^{k+1}  -w_0^k \|^2 \notag \\
    \leq & \frac{\eta^2 L^2 \sigma_{\max}^2 M^2}{\gamma^2 (1 - L \eta \beta \sigma_{\max}^2)^2} \| \nabla f(w_0^k) \|^2
\end{align}
where the second inequality is from Lemma \ref{lemma-claim1}.
Plug above two inequalities into  (\ref{eq-fw0}), we can get
\begin{align}
    f(w_0^{k+1}) - f(w_0^{k}) \leq -\eta L \left(1 - \frac{\eta L \sigma_{\max}^3 \beta M}{\gamma(1-L \eta \beta \sigma_{\max}^2)} - \frac{\eta L \sigma_{\max}^4 \beta M}{2\gamma^2 (1 - L \eta \beta \sigma_{\max}^2)^2} \right) \| \nabla f(w_0^k) \|^2
\end{align}
If we choose $\eta \leq \frac{1}{2L\sigma_{\max}^2 \beta}$, then $\frac{1}{1 - L \eta \beta \sigma_{\max}^2} \leq 2$. Thus we can obtain
\begin{align}
    f(w_0^{k+1}) - f(w_0^{k}) \leq & -\eta L \left(1 - \eta L \sigma_{\max}^3 \beta M (\frac{2}{\gamma} + \frac{2\sigma_{\max}}{\gamma^2}) \right) \| \nabla f(w_0^k) \|^2 \notag \\
    = & -\eta L(1 - \eta L \beta') \| \nabla f(w_0^k) \|^2
\end{align}
where $\beta' = \frac{2 \sigma_{\max}^3 \beta M (\gamma + \sigma_{\max})}{\gamma^2}$.

If we also choose $\eta \leq \frac{1}{2L \beta'}$, then 
\begin{align}
    f(w_0^{k+1}) - f(w_0^{k}) \leq -\frac{\eta L}{2}  \| \nabla f(w_0^k) \|^2,
\end{align}
which means the loss continues to decrease.

Combining the two condition on step size, we require
\begin{align}
    \eta \leq \min \left( \frac{1}{2L \sigma_{max}^2 \beta},  \frac{\gamma^2}{4L \sigma_{\max}^3 \beta M (\gamma + \sigma_{\max})}\right).
\end{align}

Summing up from $k=0$ to $\infty$, we have
\begin{align}
    \sum_{k=0}^{\infty} \| \nabla f(w_0^k) \|^2 \leq \frac{2(f(w_0^0) - f(w_0^{\infty}))}{\eta L} \leq \frac{2f(w_0^0)}{\eta L} < \infty
\end{align}
The boundedness means $\lim_{k\to \infty} \| \nabla f(w_0^k) \|^2 = 0$. From Lemma \ref{lemma-nacson1}, we can also know $\lim_{k\to \infty} g'(x_s^T w_0^k) = 0, \forall s \in S$. From Assumption \ref{assumption_loss}, $g'(u) \to 0 $ only when $u \to \infty$, thus $x_s^T w_0^k \to \infty, \forall s \in S$, which means all the training samples can be correctly classified. This proves Claim 1 in Theorem \ref{theorem-implicit-bias}.

We also bound the change of weights across iterations here, which is useful in the proof of Claim 2. since $\nabla f_i(w) = \sum_{s \in S_i} g'(x_s^T w) x_s$ we can have
\begin{align}
    \frac{1}{M} \sum_{i=1}^M \| w_i^{k, l+1} - w_i^{k,l} \| = & \frac{1}{M} \sum_{i=1}^M \eta \| \nabla f_i(w_i^{k,l}) \| \notag \\
    = & \frac{1}{M} \sum_{i=1}^M \eta \| \sum_{s \in S_i} g'(x_s^T w_i^{k,l}) x_s\| \notag \\
    \leq & \frac{1}{M} \sum_{i=1}^M\eta \sigma_{\max} \sqrt{\sum_{s \in S_i} \left( g'(x_s^T w_i^{k,l}) \right)^2} \notag \\
    \leq & \frac{1}{M} \sum_{i=1}^M\eta \sigma_{\max} \sqrt{\sum_{s \in S} \left( g'(x_s^T w_i^{k,l}) \right)^2} \notag \\
    \leq & \frac{\eta\sigma_{\max}}{\gamma} \sum_{i=1}^M \| \nabla f(w_i^{k,l}) \|,
\end{align}
where the first inequality is from the fact $\|\sum_{s \in S} a_s x_s \| \leq \sigma_{\max} \sqrt{\sum_{s \in S}a_s^2}$ for $\forall a_s \in \mathbb{R}$, the second inequality is due to $S_i \subset S$, and the final inequality is from Lemma \ref{lemma-nacson1}. Further we can obtain 
\begin{align}
    \|\nabla f(w_i^{k,l}) \| \leq & \|\nabla f(w_0^k) \| + \| \nabla f(w_i^{k,l}) - \nabla f(w_0^t)) \| \notag \\
    \leq & \| \nabla f(w_0^k) \| + \frac{\eta L \sigma_{\max}^3 \beta M}{\gamma (1 - l \eta \beta \sigma_{\max}^2)} \| \nabla f(w_0^k) \| \notag \\
    = & \left(1 + \frac{\eta L \sigma_{\max}^3 \beta M}{\gamma (1 - l \eta \beta \sigma_{\max}^2)} \right) \| \nabla f(w_0^k) \|
\end{align}
where the second inequality is from Lemma \ref{lemma-claim1}. Then we have
\begin{align}
    \frac{1}{M} \sum_{i=1}^M \| w_i^{k, l+1} - w_i^{k,l} \|^2 \leq & \frac{1}{M} \sum_{i=1}^M \frac{\eta^2 \sigma_{\max}^2 M^2}{\gamma^2} \left(1 + \frac{\eta L \sigma_{\max}^3 \beta M}{\gamma (1 - l \eta \beta \sigma_{\max}^2)} \right)^2 \| \nabla f(w_0^k) \|^2 \notag \\
    \leq &  \frac{\eta^2 \sigma_{\max}^2 M^2}{\gamma^2} \left(1 + \frac{\eta L \sigma_{\max}^3 \beta M}{\gamma (1 - L \eta \beta \sigma_{\max}^2)} \right)^2 \| \nabla f(w_0^k) \|^2
\end{align}
Summing up all the changes, we can finally have
\begin{align}
\label{eq-boundchange}
    \frac{1}{M} \sum_{k=0}^{\infty} \sum_{l=1}^{L-1} \sum_{i=1}^M \| w_i^{k, l+1} - w_i^{k,l} \|^2 \leq \frac{\eta^2 \sigma_{\max}^2 L M^2}{\gamma^2} \left(1 + \frac{\eta L \sigma_{\max}^3 \beta M}{\gamma (1 - L \eta \beta \sigma_{\max}^2)} \right)^2 \sum_{k=0}^{\infty} \| \nabla f(w_0^k) \|^2 < \infty.
\end{align}

\subsubsection{Proof of Lemma \ref{lemma-claim1}}
\label{proof-lemma-claim1}
\begin{proof}
We start from the update rule:
\begin{align}
w_i^{k,l} = w_0^{k} - \eta \left( \sum_{l'=0}^{l-1} \nabla f_i(w_i^{k, l'}) \right).
\end{align}
Define $\Delta := w_i^{k,l} - w_0^{k} + \eta \left( l \nabla f_i(w_0^{k}) \right)$. Then by triangle inequality, we have
\begin{align}
    \| \Delta \| = & \| -\eta \sum_{l'=0}^{l-1} \nabla f_i(w_i^{k,l'}) + \eta l \nabla f_i(w_0^{k})) \| \notag \\
    = & \eta \| \sum_{l'=0}^{l-1} \left( \nabla f_i(w_i^{k,l'}) - \nabla f_i(w_0^k) \right)\| \notag \\
    \leq & \eta \sum_{l'=0}^{l-1} \|\nabla f_i(w_i^{k,l'}) - \nabla f_i(w_0^k) \| \notag \\ 
    \leq & \eta \beta_i \sum_{l'=0}^{l-1} \| w_i^{k,l'} - w_0^k\|
\end{align}
where $\beta_i$ is the smoothness parameter of $f_i(w)$. Since each local dataset of a subset of global dataset, $\forall i \in [1,M], \beta_i \leq \beta \sigma^2_{\max}$.

In addition, since $\nabla f_i(w) = \sum_{s \in S_i} g'(x_s^T w) x_s$ we can have
\begin{align}
    & \| w_i^{k,l} - w_0^k \| \notag \\
    = & \| w_i^{k,l} - w_0^k + \eta l \nabla f_i(w_0^k) - \eta l \nabla f_i(w_0^k) \| \notag \\
    \leq & \| w_i^{k,l} - w_0^k + \eta l \nabla f_i(w_0^k) \| + \eta \| l \sum_{s \in S_i} g'(x_s^T w_0^k) x_s \| \notag \\
    \leq & \|\Delta \| + \eta l \sigma_{\max} \sqrt{\sum_{s \in S_i} \left( g'(x_s^T w_0^k) \right)^2} \notag \\
    \leq & \| \Delta \| + \eta L \sigma_{\max} \sqrt{\sum_{s \in S} \left( g'(x_s^T w_0^k) \right)^2} \notag \\
    \leq & \| \Delta \| + \frac{\eta L \sigma_{\max}M}{\gamma} \| f(w_0^k) \|
\end{align}
where the second inequality is from the fact $\|\sum_{s \in S} a_s x_s \| \leq \sigma_{\max} \sqrt{\sum_{s \in S}a_s^2}$ for $\forall a_s \in \mathbb{R}$, the third inequality is due to $S_i \subset S$, and the final inequality is from Lemma \ref{lemma-nacson1}. Then we plug in $\| \Delta \|$ and get
\begin{align}
\label{eq-wikl}
    \| w_i^{k,l} - w_0^k \| \leq \eta \beta \sigma_{\max}^2 \sum_{l'=0}^{l-1} \| w_i^{k,l'} - w_0^k\| + \frac{\eta L \sigma_{\max}M}{\gamma} \| f(w_0^k) \|.
\end{align}

Now we use another lemma from \citet{nacson2019stochastic}:
\begin{lemma}[Lemma 4 in \citet{nacson2019stochastic}]
\label{lemma-nascson2}
Let \(\epsilon\) and \(\theta\) be positive constants. If \(\delta_k \leq \theta + \epsilon \sum_{u=0}^{k-1} \delta_u\), then
\[
\delta_k \leq \frac{\theta}{1 - k\epsilon} \quad \text{and} \quad \sum_{u=0}^{k-1} \delta_u \leq \frac{k\theta}{1 - k\epsilon}.
\]
\end{lemma}

Directly applying this lemma to (\ref{eq-wikl}), we can obtain
\begin{align}
     \| w_i^{k,l} - w_0^k \| \leq \frac{\eta L \sigma_{\max} M}{\gamma (1 - l \eta \beta \sigma_{\max}^2)} \| \nabla f(w_0^k) \|.
\end{align}
Then we further have
\begin{align}
    \| \Delta \| \leq \eta \beta \sigma_{\max}^2 \sum_{l'=0}^{l-1} \| w_i^{k,l'} - w_0^k\| \leq \frac{\eta^2 L \sigma_{\max}^3 \beta M l}{\gamma(1-l \eta \beta \sigma_{\max}^2)} \| \nabla f(w_0^k) \|.
\end{align}
By smoothness, we also have
\begin{align}
    \| \nabla f(w_i^{k,l}) - \nabla f(w_0^k) \| \leq \sigma_{\max}^2 \beta \| w_i^{k,l} - w_0^k \| \leq \frac{\eta L \sigma_{\max}^3 \beta M}{\gamma (1 - l \eta \beta \sigma_{\max}^2)} \| \nabla f(w_0^k) \|.
\end{align}
\end{proof}

\subsection{Proof of Claim 2}

In this section, we prove our implicit bias result. Recall that $\hat{w}$ is the global max-margin solution defined in (\ref{problem: centralized_LC}). We denote the set of support vectors in $S$ as $V$. Thus the max-margin solution is $\hat{w} = \sum_{s \in S} \alpha_s x_s$, where $\alpha_s > 0, \forall s\in V; \alpha_s = 0, \forall s \notin V$. We further define a vector $\tilde{w}$, which satisfies
\begin{align}
    \alpha_s = \eta \exp(- x_s^T \tilde{w} ) \quad \forall s \in V.
\end{align}
From Lemma 12 in \citet{soudry2018implicit}, this solution exists for almost every dataset. We also denote the minimum margin to a non-support vector as
\begin{align}
    \theta = \min_{s \notin V} x_s^T \hat{w} > 1.
\end{align}

We will use the following Lemma:
\begin{lemma}
\label{lemma: LC-global1}
    There exists $m_i(k,l)$ such that
    \begin{align}
        L \sum_{u=1}^{k-1} \frac{1}{u} \frac{1}{M} \sum_{s \in V} \alpha_s x_s + \frac{l}{k} \sum_{s \in V_i} \alpha_s x_s = & \frac{L}{M} \log (k) \hat{w} + \frac{L}{M}\zeta \hat{w} + m_i(k,l), \quad \forall l \in [1, L] \\
        m_i(k+1,0) \triangleq &\frac{1}{M} \sum_{i=1}^M m_i(k, L), \quad \forall i \in [1,M]
    \end{align}
    where $\|m_i(k,l)\| = o(k^{-1})$ and $\| m_i(k,l+1) - m_i(k,l) \| = O(k^{-1})$. $\zeta$ is Euler-Mascheroni constant, which is used to calculate $\sum_{u=1}^k \frac{1}{u} = \log k + \zeta + O(k^{-1})$.
\end{lemma}

Now we define $r_i^{k,l}, \rho_i^{k,l}$ as
\begin{align}
    w_i^{k,l} = & \log (Lk) \hat{w} + \rho_{i}^{k,l} \notag \\
    = & \log (Lk) \hat{w} + \tilde{w} + \frac{M}{L} m_i(k,l) + r_i^{k,l}, \quad \forall l \in [1, L].
\end{align}
Also, define $r_0^{k+1} = \frac{1}{M} \sum_{i=1}^M r_i^{k, L}$ and $\rho^{k+1} = \frac{1}{M} \sum_{i=1}^M \rho_i^{k, L}$. Thus
\begin{align}
    w_0^k = & \frac{1}{M} \sum_{i=1}^M w_i^{k,L} = \log (Lk) \hat{w} + \rho^{k} = \log(Lk) \hat{w} + \tilde{w} + \frac{M}{L} \frac{1}{M} \sum_{i=1}^M m_i(k,l) + r_0^{k}
\end{align}
We also define
\begin{align}
    \rho_i^{k,0} = \rho^k, \quad r_i^{k,0} = r_0^k
\end{align}
Then for $l=0$, we have
\begin{align}
    w_i^{k+1,0} = w_0^{k+1} = \log (Lk) \hat{w} + \tilde{w} + m_i(k+1, 0) + r_i^{k+1,0}.
\end{align}

We aim to bound $\|\rho^k\|$, and we can see that it is enough to prove $\|r_0^k\|$ is bounded to achieve this goal.

We first write for a constant $k_1>0$ (defined later) and all $K \geq k_1$
\begin{align}
    \| r_0^{K} \|^2 - \| r_0^{k_1} \|^2 = & \sum_{u=k_1}^{K} \| r_0^{u+1} \|^2 - \| r_0^{u} \|^2 \notag \\
    \leq & \sum_{u=k_1}^{K} \frac{1}{M} \sum_{i=1}^M \left( \| r_i^{u, L} \|^2 - \| r_i^{u, 0} \|^2 \right) \notag \\
    = & \frac{1}{M} \sum_{u=k_1}^{K} \sum_{l=0}^{L-1} \sum_{i=1}^M \| r_i^{u, l+1} \|^2 - \| r_i^{u, l} \|^2 \notag \\
    = & \frac{1}{M} \sum_{u=k_1}^{K} \sum_{l=0}^{L-1} \sum_{i=1}^M 2 \left< r_i^{u,l+1} - r_i^{u,l}, r_i^{u,l} \right> + \| r_i^{u,l+1} - r_i^{u,l} \|^2
\end{align}

We will handle the inner product and squared norm items respectively. Here we need a lemma to characterize the behavior of inner product $\langle r_i^{u,l+1} - r_i^{u,l}, r_i^{u,l} \rangle$, which can be adapted from a lemma in \citet{nacson2019stochastic} and its proof is omitted here:
\begin{lemma}[Adapted from Lemma 6 in \citet{nacson2019stochastic}]
\label{lemma: LC-global2}
    Under Assumptions \ref{assumption_LC}, \ref{assumption_loss}, \ref{assumption_tail}, $\exists \tilde{t}, C_1, C_2>0$ such that $\forall k>\tilde{k}$,
    \begin{align}
        \langle r_i^{k,l+1} - r_i^{k,l}, r_i^{k,l} \rangle \leq \blue{C_1 (Lk)^{-\theta} + \frac{C_2M}{L}k^{-1 - 0.5 \tilde{\mu}}}, \forall l \in [0, L-1]
    \end{align},
    where $\tilde{\mu} = \min \{ \mu_{+}, \mu_{-}, 0.25\}$.
\end{lemma}

\blue{Let $a_i^{k,l} = \frac{M}{L} \left( m_i(k,l+1) - m_i(k,l) \right)$ and we know $\|m_i(k,l+1) - m_i(k,l)\| = O(k^{-1})$ from Lemma \ref{lemma: LC-global1}.} Then we can handle the squared norm item:
\begin{align}
    & \frac{1}{M} \sum_{u=k_1}^{K} \sum_{l=0}^{L-1} \sum_{i=1}^M \| r_i^{u,l+1} - r_i^{u,l} \|^2 \notag \\
    = & \frac{1}{M} \sum_{u=k_1}^{K} \sum_{l=0}^{L-1} \sum_{i=1}^M \| w_i^{k, l+1} - w_i^{k, l} - a_i^{k,l} \|^2 \notag \\
    = & \frac{1}{M} \sum_{u=k_1}^{K} \sum_{l=0}^{L-1} \sum_{i=1}^M \| w_i^{u, l+1} - w_i^{u, l} \|^2 + \frac{1}{M} \sum_{u=k_1}^{K} \sum_{l=0}^{L-1} \sum_{i=1}^M 2 \left< w_i^{u, l} - w_i^{u, l+1},a_i^{t,k} \right> + \frac{1}{M} \sum_{u=k_1}^{K} \sum_{l=0}^{L-1} \sum_{i=1}^M \|a_i^{u,l} \|^2 \notag \\
    \leq & \frac{1}{M} \sum_{u=k_1}^{T} \sum_{l=0}^{L-1} \sum_{i=1}^M \| w_i^{u, l+1} - w_i^{u, l} \|^2 + \frac{2}{M} \sqrt{ \sum_{u=k_1}^{K} \sum_{l=0}^{L-1} \sum_{i=1}^M \| w_i^{u, l+1} - w_i^{u, l} \|^2 \sum_{u=k_1}^{K} \sum_{l=0}^{L-1} \sum_{i=1}^M \|a_i^{u,l} \|^2} \notag \\
    & + \frac{1}{M} \sum_{u=k_1}^{K} \sum_{l=0}^{L-1} \sum_{i=1}^M \|a_i^{u,l} \|^2
\end{align}

\blue{Since $\|a_i^{k,l}\| =O(\frac{M}{Lk})$, we can find a $k_1$ such that $\forall k \geq k_1, \forall l \in [0, L-1], \forall i \in [1,M]$ we have $\|a_i^{k,l}\| \leq \frac{M}{Lk}$}. Also, we know $\frac{1}{M} \sum_{k=t_1}^{K} \sum_{l=0}^{L-1} \sum_{i=1}^M \| w_i^{k, l+1} - w_i^{k, l} \|^2 < \infty$ from the proof of Claim 1 (\ref{eq-boundchange}). Then we can obtain 
\begin{align}
\label{eq-r-diff}
    & \frac{1}{M} \sum_{k=k_1}^{K} \sum_{l=0}^{L-1} \sum_{i=1}^M \| r_i^{k,l+1} - r_i^{k,l} \|^2 \notag \\
    \leq & \frac{1}{M} \sum_{k=k_1}^{K} \sum_{l=0}^{L-1} \sum_{i=1}^M \| w_i^{k, l+1} - w_i^{k, l} \|^2 + 2 \sqrt{\frac{1}{M} \sum_{k=k_1}^{K} \sum_{l=0}^{L-1} \sum_{i=1}^M \| w_i^{k, l+1} - w_i^{k, l} \|^2 \frac{1}{M} \sum_{k=k_1}^{K} \sum_{l=0}^{L-1} \sum_{i=1}^M \frac{M^2}{L^2}k^{-2}} \notag \\
    & + \frac{1}{M} \sum_{k=k_1}^{K} \sum_{l=0}^{L-1} \sum_{i=1}^M \frac{M^2}{L^2} k^{-2} \notag \\
    < & \infty.
\end{align}

With Lemma \ref{lemma: LC-global2} and the fact that $\forall c>1, \sum_{u=1}^{\infty} u^{-c} < \infty$, we can finally get
\begin{align}
    \| r_0^{k} \|^2 - \| r_0^{k_1} \|^2 \leq \frac{1}{M} \sum_{k=k_1}^{K} \sum_{l=0}^{L-1} \sum_{i=1}^M \left( 2 \left< r_i^{u,l+1} - r_i^{u,l}, r_i^{u,l} \right> + \| r_i^{u,l+1} - r_i^{u,l} \|^2 \right) < \infty.
\end{align}
The $\| r_0^{k} \|^2$ is bounded, then $\|\rho^k\|$ is also bounded. We can know $w_0^k$ converges to $\hat{w}$ in direction: $w_0^{k+1} = \log (Lk) \hat{w} + \rho^k$.

\blue{Then we can analyze the dependence of $\| \rho^k \|$ on $L$. From (\ref{eq-boundchange}) and the condition on learning rate $\eta = O(L^{-1})$ we can know 
\begin{align}
    \frac{1}{M} \sum_{k=k_1}^{K} \sum_{l=0}^{L-1} \sum_{i=1}^M \| w_i^{k, l+1} - w_i^{k, l} \|^2 \leq O(L^{-1}) \sum_{k=0}^{\infty} \| \nabla f(w_0^k) \|^2.
\end{align}
Then we can write \ref{eq-r-diff} as
\begin{align}
    & \frac{1}{M} \sum_{k=k_1}^{K} \sum_{l=0}^{L-1} \sum_{i=1}^M \| r_i^{k,l+1} - r_i^{k,l} \|^2 \notag \\
    \leq & O(L^{-1}) \sum_{k=k_1}^{\infty} \| \nabla f(w_0^k) \|^2 + 2 \sqrt{O(L^{-1}) \sum_{k=k_1}^{\infty} \| \nabla f(w_0^k) \|^2 \cdot \frac{M^2}{L} \sum_{k=k_1}^{K} k^{-2}} + \frac{M^2}{L} \sum_{k=k_1}^{K}k^{-2} \notag \\
    \leq & O(L^{-1}) \left( \sum_{k=k_1}^{\infty} \| \nabla f(w_0^k) \|^2 + \sqrt{\sum_{k=k_1}^{\infty} \| \nabla f(w_0^k) \|^2 \sum_{k=k_1}^{K} k^{-2} } + \sum_{k=k_1}^{K} k^{-2}\right) 
\end{align}}

\blue{From Lemma \ref{lemma: LC-global2}, since $\theta>1$ we can know 
\begin{align}
    \langle r_i^{k,l+1} - r_i^{k,l}, r_i^{k,l} \rangle \leq C_1 (Lk)^{-\theta} + \frac{C_2M}{L}k^{-1 - 0.5 \tilde{\mu}} = O(L^{-1}) (k^{-\theta} + k^{-1 - 0.5 \tilde{\mu}})
\end{align}}

\blue{Then we can obtain
\begin{align}
    & \| r_0^{k} \|^2 - \| r_0^{k_1} \|^2 \notag \\
    \leq & \frac{1}{M} \sum_{k=k_1}^{K} \sum_{l=0}^{L-1} \sum_{i=1}^M \left( 2 \left< r_i^{u,l+1} - r_i^{u,l}, r_i^{u,l} \right> + \| r_i^{u,l+1} - r_i^{u,l} \|^2 \right) \notag \\
    \leq & O(1) \sum_{k=k_1}^{K} (k^{-\theta} + k^{-1 - 0.5 \tilde{\mu}}) + O(L^{-1}) \left( \sum_{k=k_1}^{\infty} \| \nabla f(w_0^k) \|^2 + \sqrt{\sum_{k=k_1}^{\infty} \| \nabla f(w_0^k) \|^2 \sum_{k=k_1}^{K} k^{-2} } + \sum_{k=k_1}^{K} k^{-2}\right) \notag \\
    < & \infty
\end{align}
and the dominating term on $L$ is $O(1)$. By definition $\rho_{i}^{k,l} = \log (Lk) \hat{w} + \tilde{w} + \frac{M}{L} m_i(k,l) + r_i^{k,l}$, we can get $\|\rho_{i}^{k,l}\|$ is bounded with $k \to \infty$ and $O(1)$ on $L$.}

Now we can get the convergence rate of the direction.
\begin{align}
    & \frac{w_0^{k+1}}{\|w_0^{k+1}\|} \notag \\
    = & \frac{\log (Lk) \hat{w} + \rho^k}{\sqrt{\rho^{kT}\rho^k + \hat{w}^T \hat{w}\log^2(Lk) + 2\rho^{kT}\hat{w}\log(Lk) }} \notag \\
    = & \frac{\rho^k/\log(Lk) + \hat{w}}
    {\|\hat{w}\| \sqrt{1 + \frac{2\rho^{kT}\hat{w}}
    {\|\hat{w}\|^2 \log(Lk)} + \frac{\|\rho^k\|^2}
    {\|\hat{w}\|^2 \log^2(Lk)}}} \notag \\
    = & \frac{1}{\|\hat{w}\|}
    \left( \frac{\rho^k}{\log(Lk)} + \hat{w} \right)
    \left[ 1 - \frac{\rho^{kT} \hat{w}}
    {\|\hat{w}\|^2 \log (Lk)}
    + \left( \frac{3}{2} \left( \frac{\rho^{kT} \hat{w}}
    {\|\hat{w}\|^2} \right)^2 - \frac{\|\rho^k\|^2}
    {2 \|\hat{w}\|^2} \right) \frac{1}{\log^2 (Lk)}
    + O\left( \frac{1}{\log^3 (Lk)} \right) \right] \notag \\
    = & \frac{\hat{w}}{\|\hat{w}\|} + \left( \frac{\rho^k}{\|\hat{w}\|} - \frac{\hat{w}}{\|\hat{w}\|} \frac{\rho^{kT}\hat{w}}{\|\hat{w}\|^2} \right) \frac{1}{\log(Lk)} + O(\frac{1}{\log^2(Lk)}) \notag \\
    = & \frac{\hat{w}}{\|\hat{w}\|} + \left(I - \frac{\hat{w}\hat{w}^T}{\|\hat{w}\|^2} \right) \frac{\rho^k}{\|\hat{w}\|} \frac{1}{\log(Lk)} + O(\frac{1}{\log^2(Lk)}),
\end{align}
where the third equality is from $\frac{1}{\sqrt{1+x}} = 1 - \frac{1}{2}x + \frac{3}{4} x^2 + O(x^3)$. Thus we can get
\begin{align}
    \left \| \frac{w_0^k}{\|w_0^k\|} - \frac{\hat{w}}{\|\hat{w}\|} \right \| = O\left (\frac{1}{\log(Lk)} \right).
\end{align}

\subsubsection{Proof of Lemma \ref{lemma: LC-global1}}

\begin{proof}
We first write
\begin{align}
    & L \sum_{u=1}^{k-1} \frac{1}{u} \frac{1}{M} \sum_{s \in V} \alpha_s x_s + \frac{l}{k} \sum_{s \in V_i} \alpha_s x_s \notag \\
    = & \frac{L}{M} \hat{w} \sum_{u=1}^{k-1} \frac{1}{u} + \frac{l}{k} \sum_{s \in V_i} \alpha_s x_s \notag \\
    = & \frac{L}{M} \hat{w} (\log(k) + \zeta + O(k^{-1})) + \frac{l}{k} \sum_{s \in V_i} \alpha_s x_s \notag \\
    = & \frac{L}{M} \log(k) \hat{w} + \frac{L\zeta}{M} \hat{w} + O(k^{-1})\hat{w} + \frac{l}{k} \sum_{s \in V_i} \alpha_s x_s,
\end{align}
where the first equality is definition of $\hat{w}$, the second equality is from the fact
\begin{align}
    & \sum_{u=1}^k \frac{1}{u} = \log k + \zeta + O(k^{-1}) \\
    \text{and} \quad & \log k - \log(k-1) = O(k^{-1}).
\end{align}
Then we define 
\begin{align}
    m_i(k,l) = L \sum_{u=1}^{k-1} \frac{1}{u} \frac{1}{M} \sum_{s \in V} \alpha_s x_s + \frac{l}{k} \sum_{s \in V_i} \alpha_s x_s - \frac{L}{M} \log(k) \hat{w} - \frac{L\zeta}{M} \hat{w}, \quad \forall l \in [1, L]
\end{align}
and
\begin{align}
    m_i(k+1,0) = & \frac{1}{M} \sum_{i=1}^M m_i(k,L) = L \sum_{u=1}^{k} \frac{1}{u} \frac{1}{M} \sum_{s \in V} \alpha_s x_s - \frac{L}{M} \log(k) \hat{w} - \frac{L\zeta}{M} \hat{w}, \quad \forall i \in [1,M].
\end{align} 
We can obviously see $\|m_i(k,l) \| = O(k^{-1})$. For the difference, we can get
\begin{align}
    \| m_i(k,l+1) - m_i(k,l) \| = & \| \frac{1}{k} \sum_{s \in V_i} \alpha_s x_s\| = O(k^{-1}), \quad \forall l \in [1,L-1] \\
    \|m_i(k,1) - m_i(k,0) \| = & \|\frac{1}{k} \sum_{s \in V_i} \alpha_s x_s - \frac{L}{M}(\log(k+1) - \log k) \| = O(k^{-1}).
\end{align}
\end{proof}

\subsection{Proof of Claim 3}

In the proof of Claim 1, we already know $f(w_0^k)$ would continue to decrease to zero when $k\to \infty$. Now we establish the convergence rate of $f(w_0^k)$. Recall $V$ is the set of support vectors and $\theta$ is the minimum margin for non-support vectors. From Assumptions \ref{assumption_loss} and \ref{assumption_tail}, we can get
\begin{align}
    f(w_0^k) \leq & \frac{1}{M} \sum_{s\in S} \left( 1+ \exp(-\mu_{+} x_s^T w_0^k) \right) \exp(- x_s^T w_0^k) \notag \\
    = & \frac{1}{M} \sum_{s\in S} \left( 1+ \exp(-\mu_{+} x_s^T (\hat{w} \log(Lk)+ \rho^k)) \right) \exp(- x_s^T (\hat{w} \log(Lk)+ \rho^k) ) \notag \\
    = & \frac{1}{M} \sum_{s\in S} \left( 1+ (Lk)^{-\mu_+ x_s^T \hat{w}}\exp(-\mu_{+} x_s^T \rho^k) \right) \exp(- x_s^T \rho^k) (Lk)^{-x_s^T \hat{w}} \notag \\
    = & \frac{1}{M} \sum_{s\in S} \left[ \exp(- x_s^T \rho^k) (Lk)^{-x_s^T \hat{w}} + (Lk)^{-\mu_+ x_s^T \hat{w}}\exp(-\mu_{+} x_s^T \rho^k)\exp(- x_s^T \rho^k) (Lk)^{-x_s^T \hat{w}} \right]
\end{align}

\blue{
We can divide the dataset $S$ into set $V$ with support vectors and the complementary set. For samples in the set $V$, we have $x_s^T \hat{w} = 1$ and we can write
\begin{align}
    & \sum_{s\in V} \exp(- x_s^T \rho^k) (Lk)^{-x_s^T \hat{w}} + (Lk)^{-\mu_+ x_s^T \hat{w}}\exp(-\mu_{+} x_s^T \rho^k)\exp(- x_s^T \rho^k) (Lk)^{-x_s^T \hat{w}} \notag \\
    = & \sum_{s\in V} \frac{1}{Lk} \exp(- x_s^T \rho^k) + \frac{1}{(Lk)^{1+ \mu_+}} \exp(-(1+\mu_+) x_s^T \rho^k)
\end{align}
For samples not in the set $V$, we have $x_s^T \hat{w} \ge \theta$ since $\theta$ is the minimum margin for non-support vectors. Then we can write 
\begin{align}
    & \sum_{s\notin V} \exp(- x_s^T \rho^k) (Lk)^{-x_s^T \hat{w}} + (Lk)^{-\mu_+ x_s^T \hat{w}}\exp(-\mu_{+} x_s^T \rho^k)\exp(- x_s^T \rho^k) (Lk)^{-x_s^T \hat{w}} \notag \\
    \leq & \sum_{s\notin V} \frac{1}{(Lk)^{\theta}} \exp(- x_s^T \rho^k) + \frac{1}{(Lk)^{(1+ \mu_+)\theta}} \exp(-(1+\mu_+) x_s^T \rho^k)
\end{align}
Combining the two terms, we can have
\begin{align}
    f(w_0^k) \leq & \frac{1}{M} \sum_{s\in S} \left[ \exp(- x_s^T \rho^k) (Lk)^{-x_s^T \hat{w}} + (Lk)^{-\mu_+ x_s^T \hat{w}}\exp(-\mu_{+} x_s^T \rho^k)\exp(- x_s^T \rho^k) (Lk)^{-x_s^T \hat{w}} \right] \notag \\
    = & \left[\frac{1}{MLk} \sum_{s \in V} \exp(-x_s^T \rho^k)\right] + O((Lk)^{- \max (\theta, 1+ \mu_+)}).
\end{align}
Thus the training loss $f(w_0^k) = O\left( \frac{1}{Lk}\right)$.}

\newpage
\section{Proofs of Implicit Bias of LocalSGD for Linear Classification in Section \ref{sec: LC-global}}
\label{appen: LC-SGD}

In this paper we formulate stochastic gradient descent as a sampling without replacement. We repeat the assumption \ref{assumption_sgd} here:

\textbf{Assumption \ref{assumption_sgd}} (Sampling without replacement.)
    At every communication round, each compute node run stochastic gradient descent with $E$ epochs, where $E$ is an positive integer. Within each epoch, the mini-batches $\{S_{i,0}, S_{i,1}, \dots, S_{i, L^\prime}\}$ partition the local dataset $S_i$, where $L^\prime = N / B$ is the number of local steps for one epoch.

Under this assumption, the number of local steps is $L = E*L^\prime$.

\begin{algorithm}[htbp]
\begin{algorithmic}[1]
\caption{\textsc{Local-SGD}.} 
\State \textbf{Input:} learning rate $\eta$.
\State Initialize $w_0^0$
\For{$k=0$ to $K-1$}
\State The aggregator sends global model $w_0^k$ to all compute nodes.
\For{$i=1$ to $i=M$}
\State compute node $i$ updates local model starting from $w_0^k$: $w_i^{k,0} = w_0^k$.
\For{$l=0$ to $L-1$}
\State Choose a mini-batch of samples $\mathcal{B}_l$ and calculate the gradient: $G(w_i^{k,l}) = \sum_{s \in \mathcal{B}_l} g^\prime(x_s^T w_i^{k,l}) x_s$
\State $w_i^{k,l+1} = w_i^{k, l} - \eta G(w_i^{k,l})$.
\EndFor
\State compute node $i$ sends back the updated local model $w_i^{k+1} = w_i^{k, L}$.
\EndFor
\State The aggregator aggregates all the local models: $w_0^{k+1} = \frac{1}{M} \sum_{i=1}^M w_i^{k+1}$.
\EndFor
\State \textbf{Output:} $w_0^K$.
\end{algorithmic}
\end{algorithm}

We repeat the Theorem \ref{theorem-implicit-bias-sgd} here.

\begin{theorem}
    Under assumptions \ref{assumption_LC}, \ref{assumption_loss}, \ref{assumption_tail}, \ref{assumption_sgd}, if the learning rate satisfies $\eta \leq \min \left( \frac{1}{2L \sigma_{max}^2 \beta},  \frac{\gamma^2}{4L \sigma_{\max}^3 \beta (\gamma + \sigma_{\max})}\right)$, then for the process of Local-SGD, we have,
    \begin{itemize}
    \itemsep0em 
        \item \textbf{Claim 1}: Every data point is classified correctly finally: $\lim_{k \to \infty} x_s^T w_0^k = \infty, \, \forall s \in S$.
        \item \textbf{Claim 2}: The global model obtained from Local-GD will behave as
    \begin{align}
        w_0^{k} = \log (Lk) \hat{w} + \rho^{k}, \quad \text{ and,   }\quad          \left \|\frac{w_0^k}{\|w_0^k \|} - \frac{\hat{w}}{\| \hat{w}\|} \right \| = O\left(\frac{1}{\log Lk}\right)
    \end{align}
    and $\| \rho^k \| < \infty$  for all $k$. This implies, the normalized global model converges to the global max-margin solution.
    \item \textbf{Claim 3}: The loss function $f(w_0^k)$ decreases to zero as $f(w_0^k) = O\left(\frac{1}{Lk}\right)$.
    \end{itemize}
\end{theorem}

The main difference from proofs of Theorem \ref{theorem-implicit-bias} is on the proof of Claim 1. The following proofs of Local-SGD are the exactly same as the Local-GD. Thus we only give proof of Claim 1 here.

\subsection{Proof of Claim 1}

We have the similar Lemma for Local-SGD as Lemma \ref{lemma-claim1}:

\begin{lemma} 
\label{lemma-claim1-sgd}
Suppose that Assumptions \ref{assumption_LC} and \ref{assumption_loss} hold and $k \in \mathbb{N}$. Then we have
\begin{align}
    \| w_i^{k,l} - w_0^{k} + \eta \left( l \nabla f_i(w_0^{k}) \right) \| \leq \frac{\eta^2 L \sigma_{\max}^3 \beta M l}{\gamma(1-l \eta \beta \sigma_{\max}^2)} \| \nabla f(w_0^k) \|.
\end{align}
\begin{align}
     \| w_i^{k,l} - w_0^k \| \leq \frac{\eta L \sigma_{\max} M}{\gamma (1 - l \eta \beta \sigma_{\max}^2)} \| \nabla f(w_0^k) \|.
\end{align}
\begin{align}
    \| \nabla f(w_i^{k,l}) - \nabla f(w_0^k) \| \leq \frac{\eta L \sigma_{\max}^3 \beta M}{\gamma (1 - l \eta \beta \sigma_{\max}^2)} \| \nabla f(w_0^k) \|.
\end{align}

\end{lemma}
The proof can be seen in Section \ref{proof-lemma-claim1-sgd}.

Note that $f(w) = \frac{1}{M} \sum_{i=1}^M f_i(w) = \frac{1}{M} \sum_{s \in S} g(x_s^T w)$, and $g(u)$ is a $\beta$-smooth function from Assumption \ref{assumption_loss}. Then $f(w)$ is a $\frac{\beta \sigma^2_{\max}}{M}$-smooth function. Then we can get
\begin{align}
\label{eq-fw0-sgd}
& f(w_0^{k+1}) - f(w_0^{k}) - \frac{\sigma_{\max}^2 \beta}{2M} \|w_0^{k+1} - w_0^{k}\|^2 \notag \\
\leq & 	\langle \nabla f(w_0^{k}), (w_0^{k+1} - w_0^{k}) \rangle \notag \\
= & \langle \nabla f(w_0^k), w_0^{k+1} - w_0^k - \eta L \nabla f(w_0^k) + \eta L \nabla f(w_0^k) \rangle \notag \\
\leq & -\eta L \|\nabla f(w_0^k) \|^2 + \| \nabla f(w_0^k) \| \| w_0^{k+1} - w_0^k + \eta L \nabla f(w_0^k) \|,
\end{align}
where the second inequality is from Cauchy-Schwarz inequality.

For the second term, we have
\begin{align}
    & \| w_0^{k+1} - w_0^k + \eta L \nabla f(w_0^k) \| \notag \\
    = & \| \frac{1}{M} \sum_{i=1}^M w_i^{k+1} - w_0^k + \eta L \frac{1}{M} \sum_{i=1}^M \nabla f_i(w_0^k) \| \notag \\
    \leq & \frac{1}{M} \sum_{i=1}^M \| w_i^{k+1} - w_0^k + \eta L \nabla f_i(w_0^k) \| \notag \\
    \leq & \frac{1}{M}\sum_{i=1}^M \frac{\eta^2 L^2 \sigma_{\max}^3 \beta M}{\gamma(1-L \eta \beta \sigma_{\max}^2)} \| \nabla f(w_0^k) \| \notag \\
    = & \frac{\eta^2 L^2 \sigma_{\max}^3 \beta M}{\gamma(1-L \eta \beta \sigma_{\max}^2)} \| \nabla f(w_0^k) \|
\end{align}
where the first inequality is triangle inequality and second inequality is from Lemma \ref{lemma-claim1-sgd}.

We also have
\begin{align}
    \| w_0^{k+1} - w_0^k \|^2 = & \| \frac{1}{M} \sum_{i=1}^M w_i^{k+1} - w_0^k \|^2 \notag \\
    \leq & \frac{1}{M}\sum_{i=1}^M \| w_i^{k+1}  -w_0^k \|^2 \notag \\
    \leq & \frac{\eta^2 L^2 \sigma_{\max}^2 M^2}{\gamma^2 (1 - L \eta \beta \sigma_{\max}^2)^2} \| \nabla f(w_0^k) \|^2
\end{align}
where the second inequality is from Lemma \ref{lemma-claim1-sgd}.
Plug above two inequalities into  (\ref{eq-fw0-sgd}), we can get
\begin{align}
    f(w_0^{k+1}) - f(w_0^{k}) \leq -\eta L \left(1 - \frac{\eta L \sigma_{\max}^3 \beta M}{\gamma(1-L \eta \beta \sigma_{\max}^2)} - \frac{\eta L \sigma_{\max}^4 \beta M}{2\gamma^2 (1 - L \eta \beta \sigma_{\max}^2)^2} \right) \| \nabla f(w_0^k) \|^2
\end{align}
If we choose $\eta \leq \frac{1}{2L\sigma_{\max}^2 \beta}$, then $\frac{1}{1 - L \eta \beta \sigma_{\max}^2} \leq 2$. Thus we can obtain
\begin{align}
    f(w_0^{k+1}) - f(w_0^{k}) \leq & -\eta L \left(1 - \eta L \sigma_{\max}^3 \beta M (\frac{2}{\gamma} + \frac{2\sigma_{\max}}{\gamma^2}) \right) \| \nabla f(w_0^k) \|^2 \notag \\
    = & -\eta L(1 - \eta L \beta') \| \nabla f(w_0^k) \|^2
\end{align}
where $\beta' = \frac{2 \sigma_{\max}^3 \beta M (\gamma + \sigma_{\max})}{\gamma^2}$.

If we also choose $\eta \leq \frac{1}{2L \beta'}$, then 
\begin{align}
    f(w_0^{k+1}) - f(w_0^{k}) \leq -\frac{\eta L}{2}  \| \nabla f(w_0^k) \|^2,
\end{align}
which means the loss continues to decrease.

Combining the two condition on step size, we require
\begin{align}
    \eta \leq \min \left( \frac{1}{2L \sigma_{max}^2 \beta},  \frac{\gamma^2}{4L \sigma_{\max}^3 \beta M (\gamma + \sigma_{\max})}\right).
\end{align}

Summing up from $k=0$ to $\infty$, we have
\begin{align}
    \sum_{k=0}^{\infty} \| \nabla f(w_0^k) \|^2 \leq \frac{2(f(w_0^0) - f(w_0^{\infty}))}{\eta L} \leq \frac{2f(w_0^0)}{\eta L} < \infty
\end{align}
The boundedness means $\lim_{k\to \infty} \| \nabla f(w_0^k) \|^2 = 0$. From Lemma \ref{lemma-nacson1}, we can also know $\lim_{k\to \infty} g'(x_s^T w_0^k) = 0, \forall s \in S$. From Assumption \ref{assumption_loss}, $g'(u) \to 0 $ only when $u \to \infty$, thus $x_s^T w_0^k \to \infty, \forall s \in S$, which means all the training samples can be correctly classified. This proves Claim 1 in Theorem \ref{theorem-implicit-bias-sgd}.

We also bound the change of weights across iterations here like the proofs of Local-GD. From the update of Local-SGD, we can have
\begin{align}
    \frac{1}{M} \sum_{i=1}^M \| w_i^{k, l+1} - w_i^{k,l} \| = & \frac{1}{M} \sum_{i=1}^M \eta \| \sum_{s \in \mathcal{B}_{i,l}} g'(x_s^T w_i^{k,l})x_s\| \notag \\
    \leq & \frac{1}{M} \sum_{i=1}^M\eta \sigma_{\max} \sqrt{\sum_{s \in \mathcal{B}_{i,l}} \left( g'(x_s^T w_i^{k,l}) \right)^2} \notag \\
    \leq & \frac{1}{M} \sum_{i=1}^M\eta \sigma_{\max} \sqrt{\sum_{s \in S} \left( g'(x_s^T w_i^{k,l}) \right)^2} \notag \\
    \leq & \frac{\eta\sigma_{\max}}{\gamma} \sum_{i=1}^M \| \nabla f(w_i^{k,l}) \|,
\end{align}
where the first inequality is from the fact $\|\sum_{s \in S} a_s x_s \| \leq \sigma_{\max} \sqrt{\sum_{s \in S}a_s^2}$ for $\forall a_s \in \mathbb{R}$, the second inequality is because every mini-batch at each compute node is a subset of global dataset, and the final inequality is from Lemma \ref{lemma-nacson1}. Further we can obtain 
\begin{align}
    \|\nabla f(w_i^{k,l}) \| \leq & \|\nabla f(w_0^k) \| + \| \nabla f(w_i^{k,l}) - \nabla f(w_0^t)) \| \notag \\
    \leq & \| \nabla f(w_0^k) \| + \frac{\eta L \sigma_{\max}^3 \beta M}{\gamma (1 - l \eta \beta \sigma_{\max}^2)} \| \nabla f(w_0^k) \| \notag \\
    = & \left(1 + \frac{\eta L \sigma_{\max}^3 \beta M}{\gamma (1 - l \eta \beta \sigma_{\max}^2)} \right) \| \nabla f(w_0^k) \|
\end{align}
where the second inequality is from Lemma \ref{lemma-claim1}. Then we have
\begin{align}
    \frac{1}{M} \sum_{i=1}^M \| w_i^{k, l+1} - w_i^{k,l} \|^2 \leq & \frac{1}{M} \sum_{i=1}^M \frac{\eta^2 \sigma_{\max}^2 M^2}{\gamma^2} \left(1 + \frac{\eta L \sigma_{\max}^3 \beta M}{\gamma (1 - l \eta \beta \sigma_{\max}^2)} \right)^2 \| \nabla f(w_0^k) \|^2 \notag \\
    \leq &  \frac{\eta^2 \sigma_{\max}^2 M^2}{\gamma^2} \left(1 + \frac{\eta L \sigma_{\max}^3 \beta M}{\gamma (1 - L \eta \beta \sigma_{\max}^2)} \right)^2 \| \nabla f(w_0^k) \|^2
\end{align}
Summing up all the changes, we can finally have
\begin{align}
\label{eq-boundchange-sgd}
    \frac{1}{M} \sum_{k=0}^{\infty} \sum_{l=1}^{L-1} \sum_{i=1}^M \| w_i^{k, l+1} - w_i^{k,l} \|^2 \leq \frac{\eta^2 \sigma_{\max}^2 L M^2}{\gamma^2} \left(1 + \frac{\eta L \sigma_{\max}^3 \beta M}{\gamma (1 - L \eta \beta \sigma_{\max}^2)} \right)^2 \sum_{k=0}^{\infty} \| \nabla f(w_0^k) \|^2 < \infty.
\end{align}

\subsubsection{Proof of Lemma \ref{lemma-claim1-sgd}}
\label{proof-lemma-claim1-sgd}
\begin{proof}
We start from the update rule:
\begin{align}
w_i^{k,l} = w_0^{k} - \eta \left( \sum_{l'=0}^{l-1}G(w_i^{k, l'}) \right).
\end{align}
Define $\Delta := w_i^{k,l} - w_0^{k} + \eta \sum_{l'=0}^{l-1} \sum_{s \in \mathcal{B}_{l^\prime}} g^\prime (x_s^T w_0^{k}) x_s $. Then by triangle inequality, we have
\begin{align}
    \| \Delta \| = & \left \| -\eta \sum_{l'=0}^{l-1} \sum_{s \in \mathcal{B}_{l^\prime}} g^\prime(x_s^T w_i^{k,l'}) x_s + \eta \sum_{s \in \mathcal{B}_{l^\prime}} g^\prime (x_s^T w_0^{k}) x_s  \right \| \notag \\
    = & \eta \left \|\sum_{l'=0}^{l-1} \sum_{s \in \mathcal{B}_{l^\prime}} \left( g^\prime(x_s^T w_i^{k,l'}) - g^\prime (x_s^T w_0^{k}) \right) x_s \right \| \notag \\
    \leq & \eta \sum_{l'=0}^{l-1} \left \| \sum_{s \in \mathcal{B}_{l^\prime}} \left( g^\prime(x_s^T w_i^{k,l'}) - g^\prime (x_s^T w_0^{k}) \right) x_s \right \|\notag \\ 
    \leq & \eta \sigma_{\max} \sum_{l'=0}^{l-1} \sqrt{ \sum_{s \in \mathcal{B}_{l^\prime}} \left( g^\prime(x_s^T w_i^{k,l'}) - g^\prime (x_s^T w_0^{k}) \right)^2} \notag \\ 
    \leq & \eta \beta \sigma_{\max} \sum_{l'=0}^{l-1} \sqrt{ \sum_{s \in \mathcal{B}_{l^\prime}} \left( x_s^T ( w_i^{k,l'} - w_0^{k} ) \right)^2} \notag \\
    \leq & \eta \beta \sigma_{\max} \sum_{l'=0}^{l-1} \sqrt{ \sum_{s \in S} \left( x_s^T ( w_i^{k,l'} - w_0^{k} ) \right)^2} \notag \\
    \leq & \eta \beta \sigma_{\max}^2 \sum_{l'=0}^{l-1} \| w_i^{k,l'} - w_0^k\|
\end{align}
where the first inequality is triangle inequality, the second inequality is from the fact $\|\sum_{s \in S} a_s x_s \| \leq \sigma_{\max} \sqrt{\sum_{s \in S}a_s^2}$ for $\forall a_s \in \mathbb{R}$, the third inequality is from the $\beta$-smoothness of $g(\cdot)$, the fourth inequality is because each batch is a subset of global dataset and the last inequality is from the definition of $\sigma_{\max}$. 

Next we can have
\begin{align}
    & \| w_i^{k,l} - w_0^k \| \notag \\
    = & \| w_i^{k,l} - w_0^k + \eta \sum_{l'=0}^{l-1} \sum_{s \in \mathcal{B}_{l^\prime}} g^\prime (x_s^T w_0^{k}) x_s - \eta \sum_{l'=0}^{l-1} \sum_{s \in \mathcal{B}_{l^\prime}} g^\prime (x_s^T w_0^{k}) x_s \| \notag \\
    \leq & \| w_i^{k,l} - w_0^k + \eta \sum_{l'=0}^{l-1} \sum_{s \in \mathcal{B}_{l^\prime}} g^\prime (x_s^T w_0^{k}) x_s \| + \eta \| \sum_{l'=0}^{l-1} \sum_{s \in \mathcal{B}_{l^\prime}} g^\prime (x_s^T w_0^{k}) x_s \| \notag \\
    \leq & \|\Delta \| + \eta \sum_{l'=0}^{l-1} \sigma_{\max} \sqrt{\sum_{s \in \mathcal{B}_{l^\prime}} \left( g'(x_s^T w_0^k) \right)^2} \notag \\
    \leq & \|\Delta \| + \eta l \sigma_{\max} \sqrt{\sum_{s \in S} \left( g'(x_s^T w_0^k) \right)^2} \notag \\
    \leq & \| \Delta \| + \eta L \sigma_{\max} \sqrt{\sum_{s \in S} \left( g'(x_s^T w_0^k) \right)^2} \notag \\
    \leq & \| \Delta \| + \frac{\eta L \sigma_{\max}M}{\gamma} \| f(w_0^k) \|
\end{align}
where the first inequality is triangle inequality, the second inequality is also from the fact $\|\sum_{s \in S} a_s x_s \| \leq \sigma_{\max} \sqrt{\sum_{s \in S}a_s^2}$ for $\forall a_s \in \mathbb{R}$, the third inequality is because the mini-batch is a subset of global dataset, and the final inequality is from Lemma \ref{lemma-nacson1}. Then we plug in $\| \Delta \|$ and get
\begin{align}
\label{eq-wikl-sgd}
    \| w_i^{k,l} - w_0^k \| \leq \eta \beta \sigma_{\max}^2 \sum_{l'=0}^{l-1} \| w_i^{k,l'} - w_0^k\| + \frac{\eta L \sigma_{\max}M}{\gamma} \| f(w_0^k) \|.
\end{align}

Now we apply the lemma \ref{lemma-nascson2} to (\ref{eq-wikl-sgd}), we can obtain
\begin{align}
     \| w_i^{k,l} - w_0^k \| \leq \frac{\eta L \sigma_{\max} M}{\gamma (1 - l \eta \beta \sigma_{\max}^2)} \| \nabla f(w_0^k) \|.
\end{align}
Then we further have
\begin{align}
    \| \Delta \| \leq \eta \beta \sigma_{\max}^2 \sum_{l'=0}^{l-1} \| w_i^{k,l'} - w_0^k\| \leq \frac{\eta^2 L \sigma_{\max}^3 \beta M l}{\gamma(1-l \eta \beta \sigma_{\max}^2)} \| \nabla f(w_0^k) \|.
\end{align}
By smoothness, we also have
\begin{align}
    \| \nabla f(w_i^{k,l}) - \nabla f(w_0^k) \| \leq \sigma_{\max}^2 \beta \| w_i^{k,l} - w_0^k \| \leq \frac{\eta L \sigma_{\max}^3 \beta M}{\gamma (1 - l \eta \beta \sigma_{\max}^2)} \| \nabla f(w_0^k) \|.
\end{align}
\end{proof}

\newpage
\section{Proofs of Implicit Bias with Learning Rate Independent of L in Section \ref{sec: LC}}
\label{appen: LC}

In this section we also redefine the samples $y_{ij}x_{ij}$ to $x_{ij}$ to subsume the labels. With abuse of notation, we use $S_i$ to denote the set of support vectors in $i$-th compute node and $S$ is the set of support vectors in global dataset. The number of samples $N$ is identical for all the compute nodes, and the local dataset is $\{x_{ij}, y_{ij} \}_{j=1}^N$. Since the loss function is fixed as exponential loss in this section, the $\beta$ in this section refers coefficient of support vectors, not smoothness parameter.

\subsection{Proofs of Lemma \ref{lemma: LC-equivalence}}
We assume $\|w_0^k - \ln (\frac{1}{\lambda}) \bw_0^k\| = O(k\ln\ln\frac{1}{\lambda})$. In this case, since $\ln \frac{1}{\lambda}$ grows faster, when $\lambda \to 0$, we can have $\lim_{\lambda \to 0 } \frac{w_0^k}{\| w_0^k \|} = \frac{\bar{w}_0^k}{ \| \bar{w}_0^k \|}$ for any $k$ at order $o\left (\frac{\ln(1/\lambda)}{\ln \ln (1/\lambda)}\right)$. We will prove it by induction. We define global and local residuals as $r^k = w_0^k - \ln (\frac{1}{\lambda}) \bw_0^k$ and $r_i^k = w_i^k - \ln (\frac{1}{\lambda}) \bw_i^k$. 

When $k=0$, since $w_0^0 = \bw_0^0 = 0$, $r_i^0 = 0$ and the assumption trivially holds.

When $k\geq 1$, we have
\begin{align}
\label{eq0}
    \| r^k \| = & \left \| w_0^k - \ln (\frac{1}{\lambda}) \bw_0^k \right \| =  \frac{1}{M} \left \| \sum_{i=1}^M w_i^k - \ln (\frac{1}{\lambda}) \bw_i^k \right\|  \notag \\
    \leq & \frac{1}{M} \sum_{i=1}^M \left \| w_i^k - \ln (\frac{1}{\lambda}) \bw_i^k \right \| = \frac{1}{M} \sum_{i=1}^M \| r_{i}^k \|.
\end{align}
where the inequality is triangle inequality. We then focus on the local residual $r_i^k$. We choose an $O(1)$ vector $\tilde{w}_i^k$ and a sign $s_i^k \in \{ -1, +1 \}$ to show
\begin{align}
\label{eq1}
    \left \| r_i^k \right \| = & \left \| w_i^k - \left[ \left( \ln(\frac{1}{\lambda}) + s_i^k \ln\ln(\frac{1}{\lambda}) \right) \bw_i^k + \tilde{w}_i^k \right] + s_i^k \ln\ln(\frac{1}{\lambda}) \bw_i^k + \tilde{w}_i^k \right \| \notag \\
    \leq & \left \| w_i^k - \left[ \left( \ln(\frac{1}{\lambda}) + s_i^k \ln\ln(\frac{1}{\lambda}) \right) \bw_i^k + \tilde{w}_i^k \right] \right\| + \ln\ln(\frac{1}{\lambda}) \|\bw_i^k \| + \| \tilde{w}_i^k \|
\end{align}

Recall the $w_i^k$ is the solution of optimization problem
\begin{align}
    \arg\min_{w_i} f_i(w_i) =\sum_{j=1}^N \exp \left(-x_{ij}^T w_i \right) + \frac{\lambda}{2} \| w_i - w_0^{k-1} \|^2,
\end{align}
and the loss function $f_i(w_i)$ is a $\lambda$-strongly convex function. Thus we have
\begin{align}
    \| w_i^k - w \| \leq \frac{1}{\lambda} \| \nabla f_i(w) \|, \quad \text{for any} ~ w.
\end{align}
Then back to \ref{eq1}, we have
\begin{align}
\label{eq2}
    \left \| r_i^k \right \| \leq & \underbrace{\frac{1}{\lambda} \left\| \nabla f_i\left[ \left( \ln(\frac{1}{\lambda}) + s_i^k \ln\ln(\frac{1}{\lambda}) \right) \bw_i^k + \tilde{w}_i^k \right] \right\|}_{\|A_i\|} + \ln\ln(\frac{1}{\lambda}) \|\bw_i^k \| + \| \tilde{w}_i^k \|. 
\end{align}
Next we need to show the first term $A_i$ is at $O((k-1)\ln\ln(\frac{1}{\lambda}))$, and also since $\|\bw_i^k\|$ and $\|\tilde{w}_i^k\|$ are $O(1)$ vectors, then $\|r_i^k\|$ is at order $O(k\ln\ln(\frac{1}{\lambda}))$. After averaging, $\|r^k\|$ is also at order $O(k\ln\ln(\frac{1}{\lambda}))$. This confirms the assumption made for induction.

Now we focus on the term $A_i$. The gradient of function $f_i(w)$ is 
\begin{align}
    \nabla f_i(w_i) = \sum_j -x_{ij} \exp(-x_{ij}^T w_i) + \lambda (w_i - w_0^{k-1}).
\end{align}
The term $A_i$ is 
\begin{align}
    A_i = & \frac{1}{\lambda} \nabla f_i\left[ \left( \ln(\frac{1}{\lambda}) + s_i^k \ln\ln(\frac{1}{\lambda}) \right) \bw_i^k + \tilde{w}_i^k \right] \notag \\
    = & - \frac{1}{\lambda} \sum_j x_{ij}\exp\left( x_{ij}^T \ln \left(\lambda \ln^{-s_i^k}(\frac{1}{\lambda}) \right) \bw_i^k \right) \exp(-x_{ij}^T\tilde{w}_i^k) +  \left( \ln(\frac{1}{\lambda}) + s_i^k \ln\ln(\frac{1}{\lambda}) \right) \bw_i^k + \tilde{w}_i^k - w_0^{k-1} \notag \\
    = & - \frac{1}{\lambda} \sum_j x_{ij} \left( \lambda \ln^{-s_i^k}(\frac{1}{\lambda}) \right)^{x_{ij}^T \bw_i^k} \exp(-x_{ij}^T\tilde{w}_i^k) +  \left( \ln(\frac{1}{\lambda}) + s_i^k \ln\ln(\frac{1}{\lambda}) \right) \bw_i^k + \tilde{w}_i^k - w_0^{k-1}.
\end{align}

Then we define the set of support vectors as $S_i^k = \{ x_{ij} | x_{ij}^T \bw_i^k = 1\}$. Recall that we assume $r^{k-1} = w_0^{k-1} - \ln(\frac{1}{\lambda}) \bw_0^{k-1}$ is at order $O((k-1)\ln\ln(\frac{1}{\lambda}))$. We can obtain 
\begin{align}
    A_i = & - \frac{1}{\lambda} \left( \lambda \ln^{-s_i^k}(\frac{1}{\lambda}) \right)^1 \sum_{x_{ij}\in S_i^k} x_{ij} \exp(-x_{ij}^T\tilde{w}_i^k) - \frac{1}{\lambda} \sum_{x_{ij} \notin S_i^k} x_{ij} \left( \lambda \ln^{-s_i^k}(\frac{1}{\lambda}) \right)^{x_{ij}^T \bw_i^k} \exp(-x_{ij}^T\tilde{w}_i^k) \notag \\
    & + \ln(\frac{1}{\lambda}) (\bw_i^{k} - \bw_0^{k-1}) - r^{k-1} + s_i^k \ln\ln(\frac{1}{\lambda}) \bw_i^k + \tilde{w}_i^k \notag \\
    = & - \ln^{-s_i^k}(\frac{1}{\lambda}) \sum_{x_{ij}\in S_i^k} x_{ij} \exp(-x_{ij}^T\tilde{w}_i^k) - \sum_{x_{ij} \notin S_i^k} x_{ij} \lambda^{x_{ij}^T \bw_i^k - 1} \left(\ln(\frac{1}{\lambda}) \right)^{-s_i^k x_{ij}^T \bw_i^k} \exp(-x_{ij}^T\tilde{w}_i^k) \notag \\
    & + \ln(\frac{1}{\lambda}) (\bw_i^{k} - \bw_0^{k-1}) - r^{k-1} + s_i^k \ln\ln(\frac{1}{\lambda}) \bw_i^k + \tilde{w}_i^k.
\end{align}

By the triangle inequality, we have
\begin{align}
\label{eq3}
    \| A_i \| \leq & \underbrace{ \left\| \ln(\frac{1}{\lambda}) (\bw_i^{k} - \bw_0^{k-1}) - \ln^{-s_i^k}(\frac{1}{\lambda}) \sum_{x_{ij}\in S_i^k} x_{ij} \exp(-x_{ij}^T\tilde{w}_i^k) \right \|}_{B_1} \notag \\
    & + \underbrace{\left \| \sum_{x_{ij} \notin S_i^k} x_{ij} \lambda^{x_{ij}^T \bw_i^k - 1} \left(\ln(\frac{1}{\lambda}) \right)^{-s_i^k x_{ij}^T \bw_i^k} \exp(-x_{ij}^T\tilde{w}_i^k) \right\|}_{B_2}\notag \\
    & + \underbrace{\| r^{k-1}\|}_{O((k-1)\ln\ln(\frac{1}{\lambda}))} + \ln\ln(\frac{1}{\lambda}) \underbrace{\| \bw_i^k\|}_{O(1)} + \underbrace{\| \tilde{w}_i^k \|}_{O(1)}.
\end{align}
We just need to show $B_1$ and $B_2$ approach to 0 then $\|A_i\|$ can approach to $O(k\ln\ln(\frac{1}{\lambda}))$.

We divide it into two cases. 

1. When $\bw_i^k = P(\bw_0^{k-1}) \neq \bw_0^{k-1}$, meaning $\bw_0^{k-1}$ is not in the convex set $C_i$. In this case we choose $s_i^k = -1$ then
\begin{align}
    B_1 = & \left\| \ln(\frac{1}{\lambda}) (\bw_i^{k} - \bw_0^{k-1}) - \ln(\frac{1}{\lambda}) \sum_{x_{ij}\in S_i^k} x_{ij} \exp(-x_{ij}^T\tilde{w}_i^k) \right \| \notag \\
    = & \ln(\frac{1}{\lambda}) \left \| (\bw_i^{k} - \bw_0^{k-1}) - \sum_{x_{ij}\in S_i^k} x_{ij} \exp(-x_{ij}^T\tilde{w}_i^k) \right\|.
\end{align}
We now want to choose $\tilde{w}_i^k$ to make $B_1$ as 0. Since $\bw_i^k$ is the solution of SVM problem (\ref{problem: SVM_local}), by the KKT condition of SVM problem, it can be written as 
\begin{align}
    \bw_i^k = \bw_0^{k-1} + \sum_{x_{ij}\in S_i^k} \beta_{ij} x_{ij}
\end{align}
where $\beta_{ij}$ is the dual varible corresponding to $x_{ij}$ in the set of support vectors. Thus we want to choose $\tilde{w}_i^k$ as
\begin{align}
     \sum_{x_{ij}\in S_i^k} \exp(-x_{ij}^T\tilde{w}_i^k) x_{ij}  = \sum_{x_{ij}\in S_i^k} \beta_{ij} x_{ij}.
\end{align}
We can prove such a $\tilde{w}_i^k$ almost surely exists in Lemma \ref{lemma: tildew}.

For the term $B_2$, since $\lim_{\lambda \to 0} \lambda^{c-1} \ln^c(\frac{1}{\lambda}) \to 0 $ for any constant $c>1$, and $x_{ij}^T \bw_i^k - 1 > 0$ for any $x_{ij}$ being not a support vector, then we can see
\begin{align}
    B_2 = & \left \| \sum_{x_{ij} \notin S_i^k} x_{ij} \lambda^{x_{ij}^T \bw_i^k - 1} \left(\ln(\frac{1}{\lambda}) \right)^{x_{ij}^T \bw_i^k} \exp(-x_{ij}^T\tilde{w}_i^k) \right\| \xrightarrow{\lambda \to 0} 0.
\end{align}
Here we choose $\tilde{w}_i^k$ and $s_i^k$ to make $B_1=0$ and $B_2\to 0$.

2. When $\bw_i^k = P(\bw_0^{k-1}) = \bw_0^{k-1}$, meaning $\bw_0^{k-1}$ is already in the convex set $C_i$. Then $\bw_i^k - \bw_0^{k-1} = 0$. In this case we choose $\tilde{w}_i^k = 0$ and $s_i^k = +1$. We can have
\begin{align}
    B_1 = \ln^{-1}(\frac{1}{\lambda}) \left\| \sum_{x_{ij} \in S_i^k} x_{ij} \right \| \xrightarrow{\lambda \to 0},
\end{align}
since $\ln^{-1}(\frac{1}{\lambda}) \xrightarrow{\lambda \to 0} 0 $ and $\left\| \sum_{x_{ij}\in S_i^k} x_{ij} \right \|$ is $O(1)$.

And since $x_{ij}^T \bw_i^k - 1 > 0$ for any $x_{ij}$ being not a support vector, we have
\begin{align}
    B_2 = & \left \| \sum_{x_{ij} \notin S_i^k} x_{ij} \lambda^{x_{ij}^T \bw_i^k - 1} \left(\ln(\frac{1}{\lambda}) \right)^{-x_{ij}^T \bw_i^k} \right\| \xrightarrow{\lambda \to 0} 0,
\end{align}
where $\lambda^{x_{ij}^T \bw_i^k - 1} \xrightarrow{\lambda \to 0} 0$ and $\left(\ln(\frac{1}{\lambda}) \right)^{-x_{ij}^T \bw_i^k} \xrightarrow{\lambda \to 0} 0$.
Thus we choose $\tilde{w}_i^k$ and $s_i^k$ to make $B_1 \to 0$ and $B_2\to 0$.

Plugging \ref{eq3} back into \ref{eq2}, we can obtain
\begin{align}
    \| r_i^k \| \leq & \| A_i^k\| + \ln\ln(\frac{1}{\lambda}) \|\bw_i^k \| + \| \tilde{w}_i^k \| \notag \\
    \leq & \underbrace{B_1 + B_2}_{\to 0} + 2\ln\ln(\frac{1}{\lambda}) \|\bw_i^k \| + 2\| \tilde{w}_i^k \| + \|r^{k-1}\| \notag \\
    \leq & 2\ln\ln(\frac{1}{\lambda}) \|\bw_i^k \| + 2\| \tilde{w}_i^k \| + \|r^{k-1}\|.
\end{align}
By the assumption $\|r^{k-1}\| = O((k-1)\ln\ln(\frac{1}{\lambda}))$ and $\|\bw_i^k\| = O(1)$, $\|\tilde{w}_i^k\| =O(1)$, we have $\|r_i^k\| = O(k\ln\ln(\frac{1}{\lambda}))$.

From \ref{eq0}, we finally obtain
\begin{align}
    \|r^k \| \leq \frac{1}{M} \| r_i^k \| = O(k\ln\ln(\frac{1}{\lambda})),
\end{align}
which confirms our assumption. Then we have $\lim_{\lambda \to 0 } \frac{w_0^k}{\| w_0^k \|} = \frac{\bar{w}_0^k}{ \| \bar{w}_0^k \|}$ for any $k$ at order $o\left (\frac{\ln(1/\lambda)}{\ln \ln (1/\lambda)}\right)$.

\subsection{Proofs of Auxiliary Lemmas}

\begin{lemma}
\label{lemma: tildew}
    For the sequence $\{\bw_0^k\}$ generated by sequential SVM problems \ref{problem: SVM_local} and aggregations, and for almost all datasets sampled from $M$ continuous distributions, the unique dual solution $\beta_i^k \in \mathbb{R}^{|S_i| \times 1}$ satisfying the KKT conditions of SVM problem \ref{problem: SVM_local} has non-zero elements. Then there exists $\tilde{w}_i^k$ satisfying $X_{S_i} \tilde{w}_i^k = -\ln \beta_i^k$.
\end{lemma}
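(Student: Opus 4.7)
The plan is to prove the lemma in two parts: first, establish that the dual solution $\beta_i^k$ has strictly positive components for almost all datasets drawn from a continuous distribution; second, construct $\tilde{w}_i^k$ as a solution to the underdetermined linear system $X_{S_i}\tilde{w}_i^k = -\ln \beta_i^k$, exploiting the overparameterized regime.

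I would first write down the KKT conditions of the local SVM problem (\ref{problem: SVM_local}). Stationarity gives $\bw_i^k - \bw_0^{k-1} = \sum_{j:\, x_{ij}\in S_i^k} \beta_{ij}^k\, x_{ij}$, while complementary slackness makes primal feasibility tight on $S_i^k$, i.e., $x_{ij}^T \bw_i^k = 1$ for $x_{ij}\in S_i^k$. Combining these two, restricted to the support index set, yields the reduced dual linear system
$$X_{S_i} X_{S_i}^T \beta_i^k = \mathbf{1} - X_{S_i} \bw_0^{k-1}.$$
Since $|S_i^k|\le N \le d$ and the data is drawn from a continuous distribution, the rows of $X_{S_i}$ are almost surely linearly independent, so the Gram matrix $X_{S_i}X_{S_i}^T$ is invertible and $\beta_i^k$ is uniquely determined.

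Next I would argue strict positivity of every coordinate of $\beta_i^k$ by a general-position argument. For any fixed index set $T\subseteq\{1,\dots,N\}$ and any fixed $j_0\in T$, the event that the support set equals $T$ while the $j_0$-th dual coordinate vanishes imposes an additional polynomial equation on the data matrix beyond the KKT system, defining an algebraic subvariety of positive codimension and hence Lebesgue measure zero. The offending event is a finite union over $T$ and $j_0$ of such subvarieties, still of measure zero. Since $\bw_0^{k-1}$ depends recursively on the data through earlier projections and aggregations, I would proceed by induction on $k$: conditional on the history producing a generic $\bw_0^{k-1}$, the zero-dual event at round $k$ remains measure-zero, and a countable union over rounds preserves this property.

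Finally, given $\beta_i^k > 0$ componentwise, $-\ln \beta_i^k$ is a well-defined vector in $\mathbb{R}^{|S_i^k|}$, and the system $X_{S_i}\tilde{w}_i^k = -\ln\beta_i^k$ consists of $|S_i^k|\le N$ equations in $d\gg N$ unknowns whose rows are almost surely linearly independent; hence it is consistent and one may take, e.g., $\tilde{w}_i^k = -X_{S_i}^T (X_{S_i}X_{S_i}^T)^{-1} \ln\beta_i^k$, an $O(1)$ vector as required by the use of this lemma in the proof of Lemma \ref{lemma: LC-equivalence}. The main obstacle is the measure-theoretic argument for strict positivity: because $\bw_0^{k-1}$ is a piecewise-algebraic function of the data accumulated through prior rounds, one cannot treat each round in isolation but must carefully propagate the general-position property through the recursive aggregation scheme.
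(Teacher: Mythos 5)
Your setup matches the paper's: you derive the same reduced dual system $X_{S_i}X_{S_i}^T\beta_i^k = \mathbf{1} - X_{S_i}\bar{w}_0^{k-1}$, invoke almost-sure full row rank of $X_{S_i}$ for uniqueness, and your final construction $\tilde{w}_i^k = -X_{S_i}^T(X_{S_i}X_{S_i}^T)^{-1}\ln\beta_i^k$ is exactly what is needed. However, there is a genuine gap in the middle step. You assert that the event $\{\beta_{ij_0}^k = 0\}$ ``imposes an additional polynomial equation on the data matrix beyond the KKT system, defining an algebraic subvariety of positive codimension and hence Lebesgue measure zero.'' A polynomial equation $p(X_1,\dots,X_M)=0$ has measure-zero solution set \emph{only if $p$ is not the identically zero polynomial}, and the positive-codimension claim is precisely what must be proved, not assumed: since $\bar{w}_0^{k-1}$ is itself a rational function of the data built up through $k-1$ rounds of projections and aggregations, it is not a priori excluded that the numerator polynomial of $\beta_{ij}^k$ vanishes identically (i.e., that some dual coordinate is forced to zero for \emph{every} dataset by the algebraic structure of the recursion). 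Your closing remark correctly identifies this as ``the main obstacle,'' but the proposal does not overcome it --- the inductive phrase ``conditional on the history producing a generic $\bar{w}_0^{k-1}$'' does not resolve anything, because $\bar{w}_0^{k-1}$ is a deterministic function of the same data, so there is no independent randomness to condition on.

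The paper closes this gap by an explicit witness construction, which occupies most of its proof: it takes $X_i = r_i[e_1,\dots,e_{v_i}]^T$ with scaled standard basis vectors, computes the resulting recursion $\bar{w}_0^k = A\bar{w}_0^{k-1}+b$ with a diagonal $A$ in closed form, and shows that choosing $r_i = M+1$ (so that $\sum_i 1/r_i < 1$) forces every component of $(\bar{w}_0^{k-1})_{v_i}$ below $1$ and hence every $\beta_{ij}^k = r_i^{-2}(1-(\bar{w}_0^{k-1})_j) > 0$ for all rounds $k$. This exhibits one dataset at which the numerator polynomial is nonzero, which is what licenses the measure-zero conclusion for almost all datasets. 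To complete your argument you would need to supply such a witness (or some other proof that the relevant polynomials are not identically zero), uniformly over the rounds $k$ and over the finitely many possible support patterns.
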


For almost all datasets, a hyperplane can be determined by $d$ points. Thus there are at most $d$ support vectors and the set of support vectors is linearly independent. 

\begin{proof}
By the KKT condition of SVM problem, we can write the solution as
\begin{align}
    \bw_i^k = \bw_0^{k-1} + \sum_{x_{ij} \in S_i} \beta_{ij}^k x_{ij} = \bw_0^{k-1} + X_{S_i}^T \beta_i^k. 
\end{align}
where $X_{S_i} \in \mathbb{R}^{|S_i| \times d}$ is the data matrix with all the support vectors, and $\beta_i^k \in \mathbb{R}^{|S_i| \times 1}$ is the dual variable vector. Thus we can obtain
\begin{align}
    \beta_i^k = \left( X_{S_i} X_{S_i}^T \right)^{-1} X_{S_i} (\bw_i^k - \bw_0^{k-1}) = \left( X_{S_i} X_{S_i}^T \right)^{-1} \textbf{1}_{S_i} - \left( X_{S_i} X_{S_i}^T \right)^{-1} X_{S_i} \bw_0^{k-1},
\end{align}
where $X_{S_i} X_{S_i}^T$ is invertible since $X_{S_i}$ has full row rank $|S_i|$, and the second equality is from $X_{S_i} \bw_i^k = \textbf{1}_{S_i}$ with $\textbf{1}_{S_i} \in \mathbb{R}^{|S_i| \times 1}$ being all one vector. Plugging $\beta_i^k$ back, we have
\begin{align}
    \bw_i^k = \left[ I - X_{S_i}^T \left( X_{S_i} X_{S_i}^T \right)^{-1} X_{S_i} \right] \bw_0^{k-1} + X_{S_i}^T \left( X_{S_i} X_{S_i}^T \right)^{-1} \textbf{1}_{S_i}.
\end{align}
After averaging, the global model is
\begin{align}
\label{eq-global}
    \bw_0^k = \left[ I - \frac{1}{M}\sum_{i=1}^M X_{S_i}^T \left( X_{S_i} X_{S_i}^T \right)^{-1} X_{S_i} \right] \bw_0^{k-1} + \frac{1}{M} \sum_{i=1}^M X_{S_i}^T \left( X_{S_i} X_{S_i}^T \right)^{-1} \textbf{1}_{S_i}.
\end{align}

It implies $\bw_0^{k}$ is a rational function in the components of $X_1, X_2, \dots, X_M$, and also $\beta_i^k$ is also a rational function in the components of data matrices. So its entries can be expressed as $\beta_{ij}^k = p_{ij}^k (X_1, X_2, \dots, X_M) / q_{ij}^k (X_1, X_2, \dots, X_M)$ for some polynomials $p_{ij}^k, q_{ij}^k$. Note that $\beta_{ij}^k=0$ only if $p_{ij}^k(X_1, X_2, \dots, X_M)=0$, and the components of $X_1, X_2, \dots, X_M$ must constitute a root of polynomial $p_{ij}^k$. However, the root of any polynomial has measure zero, unless the polynomial is the zero polynomial, i.e., $p_{ij}^k (X_1, X_2, \dots, X_M)=0$ for any $X_1, X_2, \dots, X_M$. 

Next we need to show $p_{ij}^k$ cannot be zero polynomials. To do this, we just need to construct a specific $X_1, X_2, \dots, X_M$ where the $p_{ij}^k$ is not zero polynomial. Denote $e_i \in \mathbb{R}^d$ as the $i$-th standard unit vector, and $v_1, v_2, \dots, v_M$ be the number of support vectors at $M$ compute nodes. We construct the datasets as
\begin{align}
    X_i = r_i [e_1, e_2, \dots, e_{v_i}]^T, ~ \text{for all} ~ i.
\end{align}
where $r_i$ are positive constants that will be chosen later. For these datasets, the set of support vector is dataset itself, i.e., $X_{S_i} = X_i$. We can calculate
\begin{align}
    X_i X_i^T = r_i^2 I_{v_i}, ~ 
    X_i^T X_i = r_i^2 \left[ \begin{matrix}
    I_{v_i} & \textbf{0} \\
    \textbf{0} & \textbf{0}_{(d-v_i) \times (d-v_i)}
    \end{matrix}
    \right], ~
    X_i^T \textbf{1}_{S_i} = r_i \left[ 
    \begin{matrix}
    \textbf{1}_{v_i} \\
    \textbf{0}_{d-v_i}
    \end{matrix}\right]
\end{align}
Thus we have
\begin{align}
    \bw_i^k = \left( I_d - \left[ \begin{matrix}
    I_{v_i} & \textbf{0} \\
    \textbf{0} & \textbf{0}_{(d-v_i) \times (d-v_i)}
    \end{matrix}
    \right] \right) \bw_0^{k-1} 
    + \frac{1}{r_i} \left[ 
    \begin{matrix}
    \textbf{1}_{v_i} \\
    \textbf{0}_{d-v_i}
    \end{matrix}\right].
\end{align}
After averaging, the global model in \ref{eq-global} becomes
\begin{align}
    \bw_0^k =  \underbrace{\left[ \begin{matrix}
    0 & \\
    & \ddots & \\
    & & 0 & \\
    & & & a_1 &  \\
    & & & & \ddots & \\
    & & & & & a_{v_{\max} - v_{\min}} & \\
    & & & & & & 1  \\
    & & & & & & & \ddots  \\
    & & & & & & & & 1
    \end{matrix}
    \right]}_{A} \bw_0^{k-1} 
    + \underbrace{\left[ 
    \begin{matrix}
    b_1 \\
    \vdots \\
    b_{v_{\max}} \\
    \textbf{0}_{d-v_{\max}}
    \end{matrix}\right]}_{b}.
\end{align}
where $a_j \in \{\frac{1}{M}, \frac{2}{M}, \dots, \frac{M-1}{M}\}$ is a constant in the range (0,1), $b_j = \frac{1}{M} \sum\limits_{i \in B_j} \frac{1}{r_i}$ is a positive constant and $B_j \in [M]$ is a set consisting of some compute nodes. Note that $A$ and $b$ are fixed in the iterations and $A$ is a diagonal matrix.

By recursively applying $\bw_0^k = A \bw_0^{k-1} + b$, due to $\bw_0^0 = 0$, we can obtain 
\begin{align}
    \bw_0^k = \left( I + A + A^2 + \dots + A^{k-1} \right) b.
\end{align}
Since $A$ is diagonal, the summation is
\begin{align}
    \sum_{j=0}^{k-1} A^j = 
    \left[ \begin{matrix}
    1 & \\
    & \ddots & \\
    & & 1 & \\
    & & & \sum_{j=0}^{k-1} a_1^j &  \\
    & & & & \ddots & \\
    & & & & & \sum_{j=0}^{k-1} a_{v_{\max} - v_{\min}}^j & \\
    & & & & & & k  \\
    & & & & & & & \ddots  \\
    & & & & & & & & k
    \end{matrix}
    \right]
\end{align}
Recall that 
\begin{align}
    \beta_i^k = & \left( X_{i} X_{i}^T \right)^{-1} \textbf{1}_{v_i} - \left( X_{i} X_{i}^T \right)^{-1} X_{i} \bw_0^{k-1} \notag \\
    = & \frac{1}{r_i^2} \textbf{1}_{v_i} - \frac{1}{r_i^2} (\bw_0^{k-1})_{v_i} = \frac{1}{r_i^2} \left( \textbf{1}_{v_i} - (\bw_0^{k-1})_{v_i} \right).
\end{align}
where $(\bw_0^{k-1})_{v_i}$ is the vector with first $v_i$ elements of $\bw_0^{k-1}$.

We need every element of $\beta_i^k$ to be positive, so that we require every element of $(\bw_0^{k-1})_{v_i}$ is less than 1. Then it holds for any $i$-th compute node, thus we require every element of $(\bw_0^{k-1})_{v_{\max}}$ is less than 1. Since $\bw_0^{k-1} = \left( \sum_{j=0}^{k-2} A^j \right) b$, the largest value of $(\bw_0^{k-1})_{v_{\max}}$ satisfies
\begin{align}
    (\bw_0^{k-1})_\text{largest} \leq & \sum_{j=0}^{k-2} \left( \frac{M-1}{M} \right)^j \times \frac{1}{M} \sum_{i=1}^M \frac{1}{r_i^2} \notag \\
    = & M \left( 1 - \left( \frac{M-1}{M}\right)^{k-1} \right) * \frac{1}{M} \sum_{i=1}^M \frac{1}{r_i^2}
\end{align}
because the maximum value of $a_j$ is $\frac{M-1}{M}$ and the maximum value of $b_j$ is $\frac{1}{M} \sum_{i=1}^M \frac{1}{r_i^2}$.

Thus we require
\begin{align}
    \sum_{i=1}^M \frac{1}{r_i} < \frac{1}{1 - \left( \frac{M-1}{M} \right)^{k-1}}.
\end{align}
Since $\left( \frac{M-1}{M} \right)^{k-1} \to 0$ when $k \to \infty$, we only require the LHS is less than the lower bound of RHS:
\begin{align}
    \sum_{i=1}^M \frac{1}{r_i} < 1.
\end{align}
Therefore we can choose $r_i = M+1$ to make it happen.

Then we can obtain $\beta_{ij}^k > 0$ holds for any support vector $x_{ij}$ and any round $k$. And the $\tilde{w}_i^k$ simply satisfies $X_{S_i} \tilde{w}_i^k = -\ln \beta_i^k$.
\end{proof}

\subsection{Lemma and Proofs in Section \ref{sec: Modified}}
\label{appen: Modified}

Here we provide a lemma of Modified Local-GD similar to Lemma \ref{lemma: LC-equivalence} of vanilla Local-GD.

\begin{lemma}
\label{lemma: LC-Modified-equivalence}
    For almost all datasets sampled from a continuous distribution satisfying Assumption \ref{assumption_LC}, we train the global model $w_0$ from Modified Local-GD and $\bw_0$ from Modified PPM. The parameter is chosen as $\alpha^k = 1 - \frac{1}{k+1}$. With initialization $w_0^0 = \bw_0^0 = 0$, we have $w_0^k \to \ln \left( \frac{1}{\lambda} \right) \bw_0^k$, and the residual $\| w_0^k - \ln \left( \frac{1}{\lambda} \right) \bw_0^k\| = O(k\ln \ln \frac{1}{\lambda})$, as $\lambda \rightarrow 0$. It implies that at any round $k = o\left (\frac{\ln(1/\lambda)}{\ln \ln (1/\lambda)}\right)$, $w_0^{k}$ converges in direction to $\bar{w}_0^{k}$:
    \begin{align}
        \lim_{\lambda \rightarrow 0 } \frac{w_0^k}{\| w_0^k \|} = \frac{\bar{w}_0^k}{ \| \bar{w}_0^k \|}.
    \end{align}
\end{lemma}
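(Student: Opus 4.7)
The plan is to mirror the induction used to prove Lemma \ref{lemma: LC-equivalence}, changing only the aggregation step. Since the local optimization problem solved at each node is identical in vanilla Local-GD and Modified Local-GD (both minimize the same regularized exponential loss starting from the current global iterate $w_0^{k-1}$, with $\bw_i^k = P_i(\bw_0^{k-1})$ on the Max-Margin side), the per-node residual analysis in the earlier proof can be reused verbatim: for each $i$ we will exhibit an $O(1)$ vector $\tilde w_i^k$ and a sign $s_i^k \in \{\pm 1\}$ such that, via strong convexity of $f_i$ and the KKT characterization of the local SVM solution, one obtains $\|r_i^k\| = \|w_i^k - \ln(1/\lambda)\,\bw_i^k\| = O(k\ln\ln(1/\lambda))$.

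First I would set up the induction on $k$. The base case $k=0$ is immediate because $w_0^0 = \bw_0^0 = 0$. For the inductive step, I use the crucial observation noted in the paper: with $w_0^0 = 0$ and $\alpha^k = 1 - 1/(k+1)$, the Modified Local-GD aggregation collapses to the scaling
\begin{equation*}
w_0^{k+1} = \frac{k}{k+1}\Bigl(\tfrac{1}{M}\sum_{i=1}^M w_i^{k+1}\Bigr),
\qquad
\bw_0^{k+1} = \frac{k}{k+1}\Bigl(\tfrac{1}{M}\sum_{i=1}^M \bw_i^{k+1}\Bigr),
\end{equation*}
since the Modified PPM recursion under $\bw_0^0=0$ reduces analogously. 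Subtracting and applying the triangle inequality then gives
\begin{equation*}
\|r^{k+1}\| \;=\; \frac{k}{k+1}\,\Bigl\| \tfrac{1}{M}\sum_{i=1}^M \bigl(w_i^{k+1} - \ln(1/\lambda)\bw_i^{k+1}\bigr) \Bigr\|
\;\le\; \frac{1}{M}\sum_{i=1}^M \|r_i^{k+1}\|,
\end{equation*}
where the factor $k/(k+1) < 1$ is absorbed. This is the exact same bound used in the vanilla proof, so plugging in the per-node bound $\|r_i^{k+1}\| = O((k+1)\ln\ln(1/\lambda))$ completes the induction.

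Next I would invoke the per-node analysis from Lemma \ref{lemma: LC-equivalence}, which splits into the two cases $\bw_i^{k+1} \ne \bw_0^k$ (node $i$'s projection is active) and $\bw_i^{k+1} = \bw_0^k$ (already feasible), and for each picks $s_i^k$ and $\tilde w_i^k$ so that the two ``error'' terms $B_1$ and $B_2$ appearing in the expansion of $(1/\lambda)\nabla f_i$ vanish as $\lambda \to 0$. The only ingredient that is not immediate is that Lemma \ref{lemma: tildew}, which guarantees strict positivity of the SVM dual variables so that $\tilde w_i^k$ with $X_{S_i}\tilde w_i^k = -\ln\beta_i^k$ exists, must now be applied to the Modified PPM sequence rather than to the PPM sequence. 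This is where the main (albeit mild) obstacle lies: I would redo the rational-function argument of Lemma \ref{lemma: tildew} with the recursion $\bw_0^{k+1} = \frac{k}{k+1}\cdot \frac{1}{M}\sum_i [(I - X_{S_i}^T(X_{S_i}X_{S_i}^T)^{-1}X_{S_i})\bw_0^k + X_{S_i}^T(X_{S_i}X_{S_i}^T)^{-1}\mathbf{1}_{S_i}]$, noting that the scalar factor $k/(k+1)$ leaves the rational-function structure intact. The explicit diagonal construction with $X_i = r_i[e_1,\dots,e_{v_i}]^T$ still produces a diagonal recursion, merely with each diagonal block scaled by $k/(k+1)$; choosing $r_i = M+1$ still ensures $(\bw_0^{k-1})_{v_{\max}} < \mathbf{1}$ because the extra scaling only shrinks the iterates. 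Hence the dual coordinates remain nonzero almost surely, $\tilde w_i^k$ exists, and the per-node bound of order $k\ln\ln(1/\lambda)$ carries over. Combining everything yields $\|r^k\| = O(k\ln\ln(1/\lambda))$, and the directional convergence $w_0^k/\|w_0^k\| \to \bw_0^k/\|\bw_0^k\|$ for $k = o(\ln(1/\lambda)/\ln\ln(1/\lambda))$ follows exactly as in Lemma \ref{lemma: LC-equivalence}.
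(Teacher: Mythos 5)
Your proposal is correct and follows essentially the same route as the paper's proof: observe that with $w_0^0=\bw_0^0=0$ the modified aggregation is a $\tfrac{k}{k+1}$-scaling of the vanilla one, absorb that factor into the triangle inequality, and reuse the per-node residual analysis from Lemma \ref{lemma: LC-equivalence}. Your additional care in re-running the rational-function and explicit-construction arguments of Lemma \ref{lemma: tildew} for the scaled recursion is a point the paper leaves implicit, and it is handled correctly.
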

\begin{proof}
With initialization $w_0^0 = \bw_0^0 = 0$, the Modified Local-GD is just a scaling of vanilla Local-GD:
\begin{align}
    w_0^{k+1} = \frac{k}{k+1} \frac{1}{M} \sum_{i=1}^M w_i^{k+1}.
\end{align}
Also, the Modified PPM is a scaling of vanilla PPM: $\bw_0^{k+1} = \frac{k}{k+1} \frac{1}{M} \sum_{i=1}^M \bw_i^{k+1}$.

When $k\geq 1$, we can know the residual between Modified Local-GD and Modified PPM is 

\begin{align}
    \| r^k \| = & \left \| w_0^k - \ln (\frac{1}{\lambda}) \bw_0^k \right \| =  \frac{k}{k+1} \frac{1}{M} \left \| \sum_{i=1}^M w_i^k - \ln (\frac{1}{\lambda}) \bw_i^k \right\|  \notag \\
    \leq & \frac{1}{M} \sum_{i=1}^M \left \| w_i^k - \ln (\frac{1}{\lambda}) \bw_i^k \right \| = \frac{1}{M} \sum_{i=1}^M \| r_{i}^k \|.
\end{align}

Then we can follow the same process in the proof of Lemma \ref{lemma: LC-equivalence} to obtain
\begin{align}
    \|r^k \| \leq \frac{1}{M} \| r_i^k \| = O(k\ln\ln(\frac{1}{\lambda})),
\end{align}
As a result we have $\lim_{\lambda \to 0 } \frac{w_0^k}{\| w_0^k \|} = \frac{\bar{w}_0^k}{ \| \bar{w}_0^k \|}$.

\end{proof}

\end{document}